\theoremstyle{plain}
\newtheorem{theorem}{Theorem}[section]
\newtheorem{proposition}[theorem]{Proposition}
\newtheorem{remark}[theorem]{Remark}
\theoremstyle{definition}
\newtheorem{definition}[theorem]{Definition}
\title{\Large \textbf{RLAF: Reinforcement Learning from Automaton Feedback}}
\author{
\textbf{Mahyar Alinejad}, University of Central Florida, \texttt{mahyar.alinejad@ucf.edu} \and
\textbf{Alvaro Velasquez}, University of Colorado Boulder, \texttt{alvaro.velasquez@colorado.edu} \and
\textbf{Yue Wang}, University of Central Florida, \texttt{yue.wang@ucf.edu} \and
\textbf{George Atia}, University of Central Florida, \texttt{george.atia@ucf.edu}
}
\date{}
\begin{document}
\sloppy
\maketitle

\begin{abstract}
Reinforcement Learning (RL) in environments with complex, history-dependent reward structures poses significant challenges for traditional methods. In this work, we introduce a novel approach that leverages automaton-based feedback to guide the learning process, replacing explicit reward functions with preferences derived from a deterministic finite automaton (DFA). Unlike conventional approaches that use automata for direct reward specification, our method employs the structure of the DFA to generate preferences over trajectories that are used to learn a reward function, eliminating the need for manual reward engineering. Our framework introduces a static approach that uses the learned reward function directly for policy optimization and a dynamic approach that involves continuous refining of the reward function and policy through iterative updates until convergence.
  Our experiments in both discrete and continuous environments demonstrate that our approach enables the RL agent to learn effective policies for tasks with temporal dependencies, outperforming traditional reward engineering and automaton-based baselines such as reward machines and LTL-guided methods. Our results highlight the advantages of automaton-based preferences in handling non-Markovian rewards, offering a scalable, efficient, and human-independent alternative to traditional reward modeling. We also provide a convergence guarantee showing that under standard assumptions our automaton-guided preference-based framework learns a policy that is near-optimal with respect to the true non-Markovian objective.
\end{abstract}

\section{Introduction}
Reinforcement Learning (RL) has achieved remarkable success across various domains, ranging from game playing~\cite{Mnih2015HumanLevel, Silver2016Go, Vinyals2019AlphaStar} to robotic control~\cite{Levine2016EndToEnd, Lillicrap2016Continuous} and autonomous driving~\cite{Kendall2019Learning}. However, most RL algorithms rely on explicit reward functions carefully designed to specify the agent's objectives~\cite{SuttonBarto}. In practice, designing such reward functions can be difficult, especially in environments where the desired outcomes depend on long-term dependencies or complex sequences of actions~\cite{Littman2017Environment, Camacho2019LTLMOP}. Such environments exhibit \textit{history-dependent} or \textit{non-Markovian} reward structures, where rewards are not solely determined by the current state but by the trajectory leading to that state~\cite{Bacchus1996LearningML, Icarte2022reward}. These 
structures pose significant challenges to traditional RL methods 
designed to operate within a Markovian framework~\cite{ToroIcarte2018UsingRL, SuttonBarto}.

To address this gap, we propose a novel approach that bypasses the need for manually specifying explicit reward functions. Instead, we leverage automaton-based feedback to generate preferences over trajectory segments. Our method uses a deterministic finite automaton (DFA) to capture complex temporal dependencies in tasks~\cite{Oncina1992, Angluin1987Learning}, providing structured feedback to guide the learning process. The agent then scores complete trajectories by their alignment with subgoals, yielding rewards that mirror the automaton’s encoded preferences~\cite{ToroIcarte2018UsingRL,Camacho2019LTLMOP,Li2017Reinforcement}.

\smallbreak
\noindent\textbf{Motivation and Challenges.} In many real-world scenarios, reward signals are sparse or difficult to specify~\cite{Christiano2017Deep, Andrychowicz2017Hindsight}. For instance, robotic assembly requires executing steps in order, with little feedback from intermediate actions~\cite{Kaelbling1993Learning, Hart2009Learning}. Such settings naturally induce non-Markovian rewards, making it hard for RL algorithms to infer optimal policies from state-action transitions alone~\cite{Littman2017Environment, Bacchus1996LearningML}. Traditional approaches often assume dense rewards or rely on manual reward shaping~\cite{Ng1999PolicyInvariant, Taylor2009TransferLearning}, which is labor-intensive and can introduce bias~\cite{Rusu2015PolicyDistillation, Barto2003HRL}.

We abstract task specifications into a DFA and use its structure to \emph{automatically generate trajectory preferences}. From these preferences we then learn a reward function, which is used for policy optimization. Crucially, the automaton-derived score is \emph{not} a reward: it is only an ordinal signal that ranks which trajectory is closer to satisfying the specification, and it is never added to returns or used as shaping. This removes the manual calibration of transition bonuses and penalties while still yielding a learned reward amenable to standard RL algorithms.

\noindent\textbf{Relation to Prior Formal-Methods RL.} 
\noindent\textbf{Relation to Prior Formal-Methods RL.} Prior formal-methods approaches include reward machines (RMs), which encode task structure as finite-state reward automata and typically require mapping specifications to numeric rewards~\cite{ToroIcarte2018UsingRL, Icarte2022reward}, and LTL-based RL that compiles temporal-logic formulas into automata and optimizes probabilities of satisfaction, again using numeric rewards or acceptance conditions~\cite{Li2017TLTL, Hasanbeig2019LTL}. In contrast, we avoid numeric reward design: the automaton is used solely to \emph{rank} trajectories by adherence to the specification, and the reward used for learning is \emph{learned from these preferences} rather than engineered. This matters because even with a formal specification, assigning suitable numeric values often requires calibration to prevent unintended behaviors. Moreover, while expert reward labels are scarce, natural-language guidance can be converted to DFAs (e.g., NL2TL~\cite{Chen2023NL2TL}). Prior work has used DFAs as advice to resolve conflicting preferences or to fine-tune policies without ranking~\cite{Neider2021AdviceGuided, Yang2023FineTuning}; here, automata \emph{generate} the preferences that drive reward learning, aligning policies with task logic without direct reward shaping.

\smallbreak
\noindent\textbf{Contributions.} Our work presents a new framework combining formal methods with modern RL techniques to address tasks with complex, history-dependent requirements, connecting task specification and policy optimization. Our contributions are summarized as follows.

\noindent\textbf{1. Automaton-based preference learning:} We introduce a framework for generating preferences over trajectory segments using a DFA, enabling the RL agent to learn reward functions that capture task-specific temporal dependencies without human intervention. 

\noindent\textbf{2. Scoring functions for preference generation:} We propose various scoring functions derived from the DFA representation namely, (i) \emph{subtask-based scoring} in which preferences are assessed based on the number of subtasks the agent has completed and a distance metric to the next subtask in a sequence of decomposable tasks, and (ii) \emph{automaton transition value-based scoring} where we 
assign values to specific transitions within the automaton. The latter approach not only facilitates preference generation but also supports efficient transfer learning between different environments that share the same objective, enhancing the agent’s ability to generalize and adapt to new settings.

\noindent\textbf{3. Static and dynamic learning variants:} We present two variants of our method. In the static version, the learned reward function is applied directly for policy optimization. In the dynamic version, the reward function and policy are iteratively refined in a loop until convergence, providing a robust framework for complex tasks~\cite{Sutton1999Options, Bacon2017OptionCritic}. 

\noindent\textbf{4. Theoretical guarantee:} We prove that, assuming preference consistency, reward expressivity, and sufficient exploration, our automaton-guided preference-based framework learns a policy that is $\varepsilon$-optimal with respect to the true non-Markovian objective (See Theorem 
\ref{thm:main-convergence}).

\noindent\textbf{5. Empirical validation:} We evaluate our approach in both grid-based and continuous domains, showing that it learns effective policies under non-Markovian rewards. Results demonstrate that automaton-based preferences provide a scalable, efficient, and human-independent alternative to traditional reward shaping, often outperforming methods like reward machines~\cite{Ng1999PolicyInvariant, Taylor2009TransferLearning} and LTL-based approaches.

\section{Related Work}
\noindent\textbf{Reward learning from preferences:}
Preference-based learning offers an alternative to explicit reward signals. The seminal work \cite{Christiano2017Deep} introduced reward learning from human-ranked trajectory segments, reducing reliance on manual reward design but still requiring human input. Building on this, \cite{Zhu2023Principled} proposed a principled RLHF framework using pairwise or K-wise comparisons. Recent advances include Direct Preference Optimization (DPO) \cite{Rafailov2023DirectPO}, Contrastive Preference Learning \cite{Park2022SURF}, active preference elicitation \cite{Biyik2020Active}, ensemble uncertainty estimation \cite{Gleave2022Uncertainty}, and multi-objective alignment \cite{Leike2018Scalable}. Classic approaches include learning from demonstrations \cite{Ho2016Generative} and inverse RL \cite{Abbeel2004Apprenticeship}.
Our method extends this line by replacing human feedback with automaton-based preferences \cite{Oncina1992, Angluin1987Learning}, eliminating human involvement in reward learning while retaining structured feedback. Though DFAs require initial domain knowledge, our approach removes humans from the learning loop itself.

\noindent\textbf{Non-Markovian reward decision processes:}
Non-Markovian rewards pose challenges in RL, where rewards typically depend only on the current state~\cite{SuttonBarto}. Solutions include augmenting state representations with history~\cite{Bakker2002Reinforcement, Wierstra2007Solving} or designing reward functions over action sequences~\cite{Dupont1996Incremental, Li2017Reinforcement}. For instance, \cite{Littman2017Environment} translated temporal logic into automata for RL integration. Reward machines~\cite{Icarte2022reward} represent reward structures via finite-state machines, with QRM enabling efficient learning through counterfactual reasoning. However, these methods still require manually converting logic into numeric rewards, which can be error-prone and require tuning. Our approach instead learns rewards from automaton-based preferences~\cite{Walkinshaw2016Inferring, Xu2020Learning}, avoiding reward translation while leveraging the automaton’s structure, eliminating manual reward design.

\noindent\textbf{Logic-based RL:}
LTL has been widely used to formalize task requirements in RL. SPECTRL \cite{Jothimurugan2019Composable} uses a depth-based measure for sequential tasks, while methods like LPOPL \cite{Hasanbeig2018LogicGuided} and TLTL \cite{Li2017TLTL} embed temporal logic into RL objectives. Recent work includes goal-conditioned RL with temporal logic \cite{Qiu2023Instructing}, compositional RL from logic \cite{Jothimurugan2021Compositional}, and automata embeddings for RL \cite{Yalcinkaya2024Compositional}. Unlike these methods, which embed logic directly into rewards or acceptance criteria, often causing reward sparsity or scale issues, our approach uses the automaton to generate trajectory preferences. This enables learning reward functions with structured guidance while avoiding manual reward engineering.

\noindent\textbf{Neurosymbolic RL:}
Symbolic structures like automata have been used to formalize task specifications in RL, particularly under temporal logic constraints~\cite{Camacho2019LTLMOP, ToroIcarte2018UsingRL}. They enable agents to reason about action sequences and enforce constraints~\cite{Baier2008Principles, Li2017Reinforcement}. \cite{Hasanbeig2021DeepSynth} integrated automata with deep RL for safety, while shielding methods use automata to block unsafe actions~\cite{Alshiekh2018Safe}. Yang et al.~\cite{Yang2023FineTuning} fine-tuned language models with automaton-based controllers guided by natural language, and \cite{Neider2021AdviceGuided} used DFAs as advice to resolve conflicting preferences in sparse-reward settings. Reward-conditioning guides behavior via reward-to-go, and hierarchical RL~\cite{Barto2003HRL, Bacon2017OptionCritic} decomposes tasks using formal guidance. While these methods use automata to constrain or guide policies, our approach uniquely leverages automata to generate trajectory preferences for reward learning, aligning policies with task logic without direct reward shaping~\cite{Hahn2019Omega, Icarte2022reward}. 

\noindent\textbf{Transfer learning in structured environments:}
Transfer in RL has been studied via policy distillation~\cite{Rusu2015PolicyDistillation}, feature transfer~\cite{Barreto2017SuccessorFeatures}, and meta-learning~\cite{Finn2017MAML}. For structured tasks, modular policies guided by task sketches~\cite{Andreas2017Modular} and transfer across LTL-specified tasks~\cite{Neider2021AdviceGuided} have been explored. Closest to our work, reward machines~\cite{Icarte2022reward} enable sharing reward structures across tasks. We extend this line by showing that automaton-based preferences can effectively transfer task knowledge across environments using our automaton transition value-based scoring. Unlike prior methods that transfer full policies or value functions, our approach transfers knowledge via automaton transitions, offering a more modular and interpretable mechanism.

\begin{figure}[!t]
\centering
\includegraphics[width=5in]{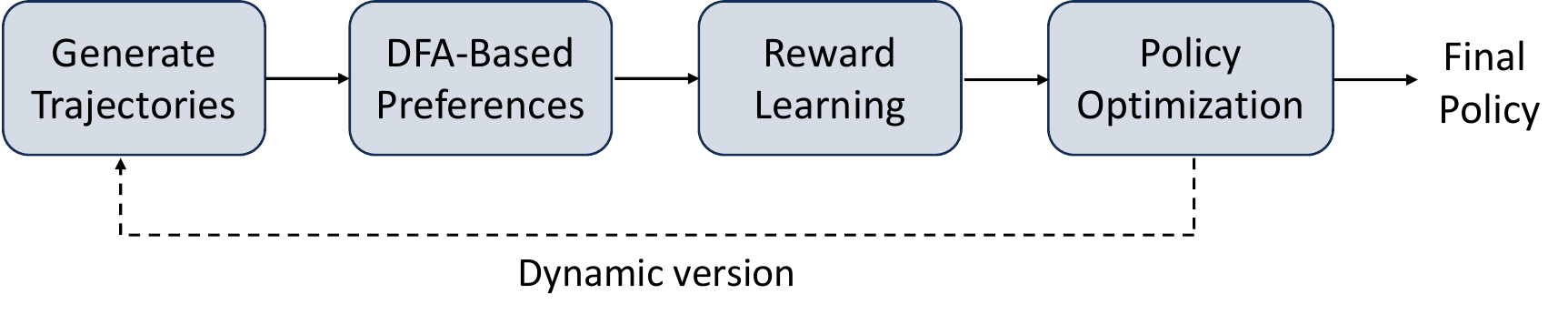}
\vspace{-0cm}
\caption{Automaton-based RL framework: The agent generates trajectories, evaluated by the DFA to produce preferences for reward learning via pairwise ranking loss. The learned reward guides policy optimization. In the dynamic version (dashed arrow), this loop repeats until convergence; in the static version, the policy is optimized once after reward learning.}
\label{fig:method_diagram}
\end{figure}

\section{Problem Setup}
\begin{definition}
A Non-Markovian Reward Decision Process (NMRDP) is a decision process defined by the tuple $\mathcal{M} = (S, s_0, A, T, r)$, where $S$ is a finite set of states, $s_0 \in S$ is the initial state, $A$ is the finite set of actions, $T: S \times A  \rightarrow \Delta(S)$ is the transition probability function, specifying the probability of reaching state $s' \in S$ given state $s \in S$ and action $a \in A$, and $r: (S \times A)^* \rightarrow\mathbb{R}$ is a reward function that depends on the history of states and actions. 
\end{definition}

The reward signal of an NMRDP incorporates non-Markovian dependencies, allowing it to rely on the full sequence of previous states and actions rather than solely on the current state-action pair. 

We also define a labeling function $L:S\to 2^{AP}$ that maps each state to the set of atomic propositions true in that state, where $AP$ is a finite set of environment predicates (e.g., $\{\textsf{Goal},\textsf{Hazard},\textsf{DoorOpen},\textsf{CarryingItem}\}$). For example, in a navigation domain, $L(s)$ might be $\{\textsf{DoorOpen}\}$ or $\{\textsf{Goal},\textsf{CarryingItem}\}$. The label sequence $(L(s_0),L(s_1),\ldots)$ forms a word over $2^{AP}$ on which a temporal-logic spec (e.g., LTL) is interpreted and compiled to a DFA used by our method.

The agent policy $\pi: S \rightarrow \Delta(A)$, where $\Delta(A)$ is the probability simplex on $A$, determines the probability of selecting a given action in a certain state. The goal is to maximize the expected cumulative reward:
\begin{align}
V_{R}(\pi) 
\;=\;
\mathbb{E}_{\tau\sim\pi}\!\Bigl[R(\tau)\Bigr]
\;=\;
\mathbb{E}_{\tau\sim\pi}\!\Bigl[\sum_{t=0}^{\infty} \gamma^t \, r(\tau_t)\Bigr]\:,
\end{align}
where $\tau$ is a trajectory, $\tau_t = (s_0, a_0, \dots, s_t, a_t)$ is its prefix up to time $t$, and $0 < \gamma < 1$ is a discount factor. 

However, in our framework, the reward function $R$ is not explicitly defined. Instead, the agent relies on automaton-based preferences over trajectory segments, which are used to guide learning. This setup is particularly effective for handling environments with non-Markovian rewards, where task objectives depend on the entire sequence of states and actions rather than just the current state.

\begin{definition}[Deterministic Finite Automaton (DFA)] A DFA is a finite-state machine defined by the tuple $\mathcal{A} = (\Sigma, Q, q_0, \delta, F)$, where $\Sigma$ is the alphabet of the input language, $Q$ is a finite set of automaton states with starting state $q_0$, 
$\delta: Q \times \Sigma \rightarrow Q$ is the transition function, 
and $F \subseteq Q$ is the set of accepting states.
\end{definition}
The DFA defines a high-level structure for evaluating trajectories based on the sequence of state-action pairs. It provides a formalized representation of the task objectives, enabling trajectory evaluation by observing transitions through the DFA states. The sequence $\tau = (s_0, a_0, s_1, a_1, \dots, s_T)$ results in corresponding DFA transitions, allowing the agent to track progress and alignment with task goals.

\begin{definition}[Product MDP]
 The Product MDP is represented by $\mathcal{M}_{\mathrm{prod}} = (S \times Q, A, T_{\mathrm{prod}}, (s_0, q_0), R_{\mathrm{prod}})$, where $S$ is the state space of the NMRDP,  
 $Q$ is the DFA state space, 
 $A$ is the set of available actions, $ T_{\mathrm{prod}}: (S \times Q) \times A \rightarrow \Delta(S \times Q)$ is the transition function, 
 and $ R_{\mathrm{prod}}: Q\times\Sigma\to\mathbb{R}$ is a reward function, where $\Sigma$ is the alphabet of the input language of the DFA.  
\end{definition}

The Product MDP integrates the NMRDP and DFA, allowing the agent to act while adhering to task-specific temporal constraints. The transition function $T_{\mathrm{prod}}$ combines environment dynamics with DFA transitions, reflecting subgoal progress. The DFA state $q\in Q$ tracks task progression, while the NMRDP state $s\in \mathcal{S}$ represents the environment's physical state. The reward function $R_{\mathrm{prod}}$ either reflects subgoal achievements in a known reward setup or is derived through preference learning. Embedding DFA logic into the environment's dynamics enables standard RL methods to handle non-Markovian tasks. Our learning approaches leverage the Product MDP to enforce temporal objectives while learning preference-based rewards.

\section{Methodology}
In this section, we describe our automaton-based RL framework, focusing on its core components: (i) Preference elicitation from a DFA, (ii) Reward function learning, and (iii) Policy optimization. We 
also introduce our static and dynamic learning variants. Figure \ref{fig:method_diagram} illustrates our framework.

\subsection{Preference Elicitation}
Our method generates preferences over pairs of trajectories $(\tau_1, \tau_2)$ by evaluating their adherence to task specifications using the DFA. Unlike RLHF, where preferences are derived from human input, our approach leverages the DFA to assess trajectories without direct access to rewards. To rank trajectories, we define a composite score function based on the task structure encoded in the DFA. Importantly, the score is used only to induce an ordering over trajectories: it need not be calibrated or accurate in magnitude -- only consistent enough to indicate which trajectory is closer to satisfying the specification. This ordinal criterion avoids numeric reward shaping. We present two distinct approaches for generating these preference scores.

\noindent\textbf{Subtask-based scoring.}
Here, trajectories are scored based on two factors:  
(i) \textit{subgoal completion}, which is the number $N_s$ of subgoals completed in the correct order as tracked by the DFA, and  
(ii) \textit{distance to the next subgoal or goal}, which is  a distance measure $d$ from the last state of the trajectory to the next required subgoal or goal.
Formally, the score for a trajectory $\tau$ is computed as:
\begin{align}
\text{score}(\tau) = w_s \cdot N_s(\tau) - w_d \cdot d(\tau),
\label{eq:score_subtask_based}
\end{align}
where $w_s$ and $w_d$ are weighting factors, with $w_s \gg w_d$, prioritizing subgoal completion over distance.

Given two trajectories $\tau_1$ and $\tau_2$, their scores are compared to establish a preference ($\tau_1 \succ \tau_2$ or vice-versa). These preferences are then used to learn a reward function (see Section~\ref{sec:learn_reward}).

\begin{remark}
The DFA plays a central role in scoring by tracking subgoal completion and distance calculation. In particular, the DFA maps the trajectory to a sequence of states, recording transitions as the agent completes subgoals in order, providing $N_s(\tau)$, and is used to determine the next required subgoal, enabling the computation of $d(\tau)$.

\end{remark}

\begin{remark}

We initially set $d$ as the Manhattan distance for discrete gridworlds, however, our framework is adaptable to various environments. For continuous state spaces, Euclidean distance is a more suitable metric.

For more complex domains with non-trivial dynamics, learned cost functions or domain-specific metrics can be integrated. For instance, in robotic manipulation tasks, a geodesic distance in configuration space might be more appropriate than direct Euclidean distance.
\end{remark}

\noindent\textbf{Automaton transition value-based scoring.}
In this approach, we integrate \emph{automaton transition values} into the trajectory scoring function. Assume that for each transition $(q, \sigma)$ in the DFA, we have access to estimates $\overline{Q}_{\text{dfa}}(q, \sigma)$, representing the desirability of the automaton transition. These values can either be derived directly during training or obtained via a transfer learning process, as detailed later. 

Given a trajectory $\tau = (s_0, a_0, s_1, a_1, \dots, s_T)$, the corresponding trace in the DFA is $(q_0, \sigma_0, q_1, \sigma_1, \dots, q_T)$, where $q_0$ is the initial automaton state, $q_{t+1} = \delta(q_t, \sigma_t)$, for $t = 0, \dots, T-1$, and $\sigma_t = L(s_{t+1})$, recalling that  $\delta$ is the DFA transition function and $L$ is the labeling function which maps a given state to the set of atomic propositions that hold in said state.

The value-based trajectory score is computed as:
\begin{align}
\label{eq:score_qvalue_based}
\text{score}(\tau)
=
\sum_{t=0}^{T-1} 
Q_{\text{dfa}}\bigl(q_t, \sigma_t)\bigr).
\end{align}
Each automaton transition $(q_t, \sigma_t)$ represents a partial condition being satisfied in the environment. The value $Q_{\text{dfa}}(q_t, \sigma_{t})$ reflects how desirable this transition is, based on previously learned or transferred knowledge. Therefore, by summing over all transitions, the scoring function in \eqref{eq:score_qvalue_based} captures the cumulative desirability of the trajectory with respect to the automaton’s objectives. 

\begin{remark}
\label{rem:transition_value_heuristic}
The scoring function in Equation~\eqref{eq:score_qvalue_based} leverages teacher knowledge to generate trajectory preferences. 
Specifically, for each automaton transition $(q, \sigma)$, we compute:
\begin{align*}
Q_{\text{dfa}}(q, \sigma) = \frac{1}{|\mathcal{T}_{q,\sigma}|}\sum_{((s,q),a) \in \mathcal{T}_{q,\sigma}} Q_{\text{teacher}}((s,q), a)\:,
\end{align*}
where $\mathcal{T}_{q,\sigma}$ is the set of product state-action pairs in the teacher's experience that induce the automaton transition $(q, \sigma)$, and $Q_{\text{teacher}}((s,q), a)$ is the Q-value for the state-action pair $((s,q), a)$ in the teacher environment. This transition value captures the average desirability of achieving that logical transition based on teacher experience.
The trajectory score $\sum_{t=0}^{T-1} Q_{\text{dfa}}(q_t, \sigma_t)$ aggregates these transition values to quantify the cumulative desirability of the symbolic path. Trajectories that progress through more valuable automaton transitions receive higher scores. This scoring mechanism provides three key benefits: (i) it transfers task-relevant knowledge from teacher to student environments sharing the same logical structure, (ii) it remains interpretable as each term corresponds to a meaningful subgoal transition, and (iii) it scales efficiently as the complexity depends on the compact automaton structure rather than the full state space.
This preference-generation score 
aims to induce a consistent preference ordering that aligns with task objectives. Our empirical validation in Section~\ref{sec:exp} 
confirms this through two key findings: 
(1) the scores correlate strongly with trajectory quality metrics (cumulative reward and task completion), and (2) critically, trajectories with higher scores leave the agent in states with significantly better future task completion potential (future success rate, future reward).

\end{remark} 

As before, preferences between two trajectories, $\tau_1$ and $\tau_2$, are determined by comparing their scores in \eqref{eq:score_qvalue_based}. Specifically, $\tau_1 \succ \tau_2$ if $\text{score}(\tau_1) > \text{score}(\tau_2)$, and vice versa. In the case of a tie, they are considered equally preferable.  These trajectory-level preferences are then used to train a parametric reward model (see Section~\ref{sec:learn_reward}). 

\emph{But how can we obtain automaton Q-value estimates $Q_{\text{dfa}}(q, \sigma)$ if these values are not readily available?}

\noindent\textbf{Obtaining automaton transition values: Knowledge distillation.} 
Automaton transition values can be distilled from a simpler teacher environment in which the agent shares the same task objective. We employ a teacher-student transfer learning framework, where the knowledge encoded in automaton transitions is transferred from the teacher to the student. 

Consider an agent trained in a simpler teacher environment using standard RL methods to optimize its behavior according to the task’s temporal logic. During training, experiences $\mathcal{D}$ are stored in the form of samples $((s, q), a, r, (s', q'))$, where $s$ and $s'$ are environment states, $q$ and $q'$ are automaton states, and $a$ is the action taken.
To distill knowledge, the frequency of automaton transitions is tracked, defined for automaton state $q$ and the atomic propositions $\sigma$ that label transitions:
\begin{align}
\label{eq:freq_transitions}
n_{\text{teacher}}(q, \sigma) = \bigl| \bigl\{ ((s, q), a, r, (s', q')) \in \mathcal{D} \; \big| \; L(s') = \sigma \bigr\} \bigr|.
\end{align}

The value for each automaton transition $(q, \sigma)$, where $\sigma = L(s')$, is then computed as $Q_{\text{dfa}}(q, \sigma)
=\overline{Q}_{\text{teacher}}(q, \sigma)$, where
\begin{align}
\overline{Q}_{\text{teacher}}(q, \sigma):= \frac{\sum_{((s, q), a, r, (s', q')) \in \mathcal{D}, L(s') = \sigma} Q_{\text{teacher}}((s,q), a)}{n_{\text{teacher}}(q, \sigma)}\:.
\label{eq:aut_qvals}
\end{align}

\begin{remark}
We could also use a combined trajectory scoring function that integrates the subtask-based and value-based approaches defined in \eqref{eq:score_subtask_based} and \eqref{eq:score_qvalue_based}, respectively. The combined score function is defined as:
\begin{align}
\label{eq:score_transfer_learning}
\text{score}(\tau) = w_s N_s(\tau) - w_d d(\tau) + w_q \sum_{t=0}^{T-1} \overline{Q}_{\text{teacher}}\bigl(q_t, \sigma_t\bigr),
\end{align}
where $w_s$, $w_d$, and $w_q$ are weighting factors for subgoal completion, distance, and automaton Q-values, respectively, which can be tuned to prioritize
different aspects of the task. 
\end{remark}

\subsection{Learning the Reward Function}
\label{sec:learn_reward}
Using the preferences derived from the DFA, next we aim to learn a reward function $\hat{r}_\theta((s, q), a)$, parameterized by $\theta$, that captures the task’s objectives. This reward function is trained via a pairwise ranking loss, ensuring that trajectories preferred by the DFA receive higher cumulative rewards.

\textbf{Pairwise ranking loss.}
We learn the reward function $\hat{r}_\theta$ by minimizing the pairwise ranking loss:
\begin{align}
L(\theta) = \sum_{(\tau_p, \tau_n)} \max \left(0, m - (\hat{R}_\theta(\tau_p) - \hat{R}_\theta(\tau_n))\right),
\label{eq:ranking_loss}
\end{align}
where the sum is over pairs of preferred and non-preferred trajectories, $\tau_p$ and $\tau_n$, respectively, and $m$ is a margin hyperparameter, ensuring a sufficient difference between the cumulative rewards of these trajectories. By minimizing this loss, we ensure that the learned reward function aligns with the DFA-derived preferences, thereby embedding the task’s temporal structure into the reward model. For each pair of trajectories $(\tau_p, \tau_n)$, the cumulative discounted reward is computed as:
\begin{align}
\hat{R}_\theta(\tau) = \sum_{t=0}^{T-1} \gamma^t \hat{r}_\theta((s_t, q_t), a_t),
\end{align}
where $\hat{r}_\theta((s, q), a)$ is the learned reward function and $T$ is the length of the trajectory.

\subsection{Policy Optimization: Static and Dynamic Variants}
After learning the reward function $\hat{r}_\theta$, the policy $\pi$ is optimized using standard RL techniques to maximize the expected cumulative reward. In our experiments, we employ Q-learning for discrete environments and Twin Delayed DDPG (TD3) \cite{Fujimoto2018Td3} for continuous domains.

We propose two learning variants (See Fig. \ref{fig:method_diagram}). Specifically, in the \textbf{static version}, the reward function is learned once from an initial set of preferences. The policy is then optimized using this fixed reward function. This approach is computationally efficient and well-suited for tasks where the reward structure remains constant over time. In the \textbf{dynamic version}, the reward function and policy are refined iteratively. At each iteration, trajectories from the current policy are evaluated by the DFA to produce preferences, which update the reward function. The policy is then re-optimized using the updated reward. This loop continues until convergence and suits tasks needing ongoing refinement due to evolving dynamics or incomplete initial preferences.

We provide algorithms for our preference-based RL approach. Algorithm \ref{alg:app_unified_aut_pref} summarizes the main procedure, and Algorithm \ref{alg:app_preference_computation} describes the preference computation based on the DFA. 

\begin{algorithm}
\caption{RL with Automaton-Based Preferences and Pairwise Ranking Loss}
\label{alg:app_unified_aut_pref}
\begin{algorithmic}
\REQUIRE MDP $(S, A, P, \gamma)$, DFA $\mathcal{A}$, $\alpha$, $\beta$, $m$, Maximum iterations $K$ (Dynamic) or episodes $E$ (Static)
\ENSURE Optimal policy $\pi^*$
\STATE \textbf{Phase 1: Preference Generation \& Reward Learning}
\IF{\textbf{Static mode}}
    \STATE Initialize random policy $\pi_{\mathrm{rand}}$
    \STATE Generate trajectories $\{\tau_i\}$ via $\pi_{\mathrm{rand}}$
    \STATE Assign preferences $P(\tau_i, \tau_j)$ for sampled trajectory pairs using DFA $\mathcal{A}$ (Algorithm \ref{alg:app_preference_computation})
    \STATE Train reward model $\hat{r}_\theta$ via pairwise ranking loss
\ELSIF{\textbf{Dynamic mode}}
    \STATE Initialize policy $\pi$
    \FOR{$k = 1$ to $K$}
        \STATE Collect trajectories $\{\tau_i\}$ via current $\pi$
        \STATE Assign preferences $P(\tau_i, \tau_j)$ for trajectory pairs using DFA $\mathcal{A}$
        \STATE Update reward model $\hat{r}_\theta$ with new preferences
        \STATE Optimize policy $\pi$ via RL (Q-learning or TD3) on $\hat{r}_\theta$
        \IF{policy performance is stable}
            \STATE \textbf{break}
        \ENDIF
    \ENDFOR
\ENDIF
\STATE \textbf{Phase 2: Policy Optimization (Static Mode Only)}
\IF{\textbf{Static mode}}
    \FOR{$e = 1$ to $E$}
        \STATE Optimize policy $\pi$ via RL using $\hat{r}_\theta$
    \ENDFOR
\ENDIF
\RETURN final policy $\pi^*$
\end{algorithmic}
\end{algorithm}

\begin{algorithm}
\caption{Preference Computation Based on Subtask Completion and Distance}
\label{alg:app_preference_computation}
\begin{algorithmic}
\REQUIRE Trajectory pairs $\{(\tau_i, \tau_j)\}$, DFA $\mathcal{A}$, Subgoals $\mathcal{G}=[g_1, g_2, \dots, g_N]$, Goal state $s_{\text{goal}}$, Weights $w_s$, $w_d$
\ENSURE Preferences $P(\tau_i, \tau_j)$ for each trajectory pair
\FOR{each trajectory pair $(\tau_i, \tau_j)$}
    \FOR{each trajectory $\tau \in \{\tau_i, \tau_j\}$}
        \STATE Initialize DFA $\mathcal{A}$ to its initial state
        \STATE $N_s(\tau) \gets$ Number of subtasks completed in order by simulating $\mathcal{A}$ with $\tau$
        \STATE $s_T \gets$ Last state of trajectory $\tau$
        \IF{$N_s(\tau) < N$}
            \STATE $s_{\text{next}} \gets$ Next subgoal $g_{N_s(\tau)+1}$
        \ELSE
            \STATE $s_{\text{next}} \gets s_{\text{goal}}$
        \ENDIF
        \STATE $d(\tau) \gets \text{ComputeDistance}(s_T, s_{\text{next}})$
        \STATE $\text{score}(\tau) \gets w_s \cdot N_s(\tau) - w_d \cdot d(\tau)$
    \ENDFOR
    \STATE Assign $P(\tau_i, \tau_j)$ to the trajectory with the higher score, or mark as indifferent if scores are equal
\ENDFOR
\RETURN $P(\tau_i, \tau_j)$ for all trajectory pairs
\end{algorithmic}
\end{algorithm}



\section{Convergence Analysis}
\label{sec:thm}
In this section, we provide a convergence analysis of our DFA-guided preference-based RL framework. We derive a theorem establishing that, under some assumptions, pairwise preference learning guided by a DFA allows an agent to learn a reward function whose optimal policy is close to that of the (unknown) ground-truth objective. 


\begin{theorem}[Convergence and $\varepsilon$-Optimality in Finite Product MDP]
\label{thm:main-convergence}

Consider a finite non-Markovian environment 
$\mathcal{M} = (S, s_0, A, T, R^*)$, where $S$ is a finite set of states, 
$A$ is a finite set of actions, $T : S \times A \to \Delta(S)$ 
is a transition function, and $R^*(\tau)$ is a \emph{trajectory-return} 
encoding a non-Markovian objective.  
Let $\mathcal{A} = (\Sigma, Q, q_0, \delta, F)$ be a finite DFA 
that captures the temporal constraints or subgoal structure. 
We construct the \emph{product MDP} $\mathcal{M}_{\mathrm{prod}}$ 
with state space $\widetilde{S} = S \times Q$, actions $A$, 
and transitions combining $T$ with $\delta$.  
Since $\widetilde{S}$ and $A$ are finite, $\mathcal{M}_{\mathrm{prod}}$ 
is a finite MDP. Let $\Pi$ be the set of all stationary policies on $\mathcal{M}_{\mathrm{prod}}$. 
Assume:

\begin{enumerate}
\item[\textbf{(1)}] \textbf{Preference Consistency.}  
There exists a real-valued function $R^*(\tau)$ such that 
if $\tau_1 \succ \tau_2$ (according to DFA-based preferences), 
then 
\[
R^*(\tau_1) \;-\; R^*(\tau_2) \;\;\ge\;\; \delta_0 \;>\; 0.
\]

\item[\textbf{(2)}] \textbf{Sufficient Expressivity.}  
We can learn a reward 
$\hat{r}_\theta : \widetilde{S} \times A \to \mathbb{R}$  
such that its induced return  
$\hat{R}_\theta(\tau) 
= \sum_{t=0}^{T-1} \gamma^t \,\hat{r}_\theta(\widetilde{s}_t, a_t)$  
can represent the same preference ordering 
as $R^*$ on the relevant set of trajectories.

\item[\textbf{(3)}] \textbf{Correct Preference Training.}  
By sampling a sufficiently large set of labeled trajectory pairs 
$\{(\tau_p, \tau_n)\}$ from a coverage distribution, 
we minimize a pairwise ranking loss so that, 
with probability at least $1-\zeta$, 
\[
\Bigl|\hat{R}_\theta(\tau) \;-\; R^*(\tau)\Bigr| 
\;\;\le\; \varepsilon_r
\]
for most trajectories $\tau$ under that distribution.
(This ensures the ordering $\tau_1 \succ \tau_2$ is preserved 
except on a small fraction of pairs.)

\item[\textbf{(4)}] \textbf{Persistent Exploration (Tabular).}  
In tabular Q-learning on $\mathcal{M}_{\mathrm{prod}}$ using $\hat{r}_\theta$, 
\emph{each state--action pair} $(\widetilde{s}, a) \in \widetilde{S} \times A$ 
is visited infinitely often.
\end{enumerate}

Let $\hat{\pi}$ be the greedy policy w.r.t.\ the Q-values 
learned from $\hat{r}_\theta$.  Then, there is an 
$\varepsilon = \varepsilon_r + \varepsilon_t > 0$ 
such that, with probability at least $1-\zeta$,
\[
V_{R^*}(\hat{\pi})
\;\;\ge\;
\max_{\pi \,\in\, \Pi}\, V_{R^*}(\pi)
\;-\;
\varepsilon,
\]
i.e.\ $\hat{\pi}$ is $\varepsilon$-optimal w.r.t.\ the true 
non-Markovian objective $R^*$.

\end{theorem}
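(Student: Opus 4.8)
The plan is to chain together three approximation errors---reward approximation, preference-preservation error, and tabular Q-learning convergence---and then convert a bound on the learned return into a bound on the value gap for the true objective $R^*$. First I would set up notation: write $\hat{V}(\pi) = V_{\hat{R}_\theta}(\pi)$ for the value of policy $\pi$ under the learned reward and $V^*(\pi) = V_{R^*}(\pi)$ for its value under the true trajectory return, both on the product MDP $\mathcal{M}_{\mathrm{prod}}$; note that $R^*$ lifts unambiguously to $\mathcal{M}_{\mathrm{prod}}$ because the DFA component is a deterministic function of the history, so optimizing over stationary policies on the product MDP is without loss relative to history-dependent policies on $\mathcal{M}$.

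Next I would establish the key sandwich. By Assumption (3), $|\hat{R}_\theta(\tau) - R^*(\tau)| \le \varepsilon_r$ for trajectories drawn from the coverage distribution, except on a set of small probability; absorbing that exceptional mass into a term that I would fold into $\varepsilon_r$ (using boundedness of returns, say $|R^*|, |\hat{R}_\theta| \le R_{\max}/(1-\gamma)$, so the exceptional mass contributes at most $(\text{bad prob})\cdot 2R_{\max}/(1-\gamma)$), I get $|\hat{V}(\pi) - V^*(\pi)| \le \varepsilon_r$ for every policy $\pi$ whose induced trajectory distribution is covered. Then Assumption (4) plus the standard tabular Q-learning convergence theorem (Watkins--Dayan / Tsitsiklis) gives that the learned Q-values converge to $Q^*_{\hat{r}_\theta}$, hence the greedy policy $\hat{\pi}$ satisfies $\hat{V}(\hat{\pi}) \ge \max_{\pi} \hat{V}(\pi) - \varepsilon_t$ for the exploration/convergence slack $\varepsilon_t$ (in the exact-convergence limit $\varepsilon_t = 0$, but I would keep it to account for finite-time stopping). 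Finally, combine: let $\pi^\star \in \arg\max_{\pi} V^*(\pi)$; then
\begin{align}
V^*(\hat{\pi}) &\ge \hat{V}(\hat{\pi}) - \varepsilon_r \ge \max_{\pi} \hat{V}(\pi) - \varepsilon_t - \varepsilon_r \nonumber\\
&\ge \hat{V}(\pi^\star) - \varepsilon_t - \varepsilon_r \ge V^*(\pi^\star) - 2\varepsilon_r - \varepsilon_t,
\end{align}
which yields the claim with $\varepsilon = 2\varepsilon_r + \varepsilon_t$ (a constant factor on $\varepsilon_r$ relative to the stated $\varepsilon_r + \varepsilon_t$, which I would reconcile by rescaling $\varepsilon_r$ or by noting the theorem's $\varepsilon$ is stated up to such constants). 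Assumptions (1) and (2) enter indirectly: (2) guarantees the hypothesis class is rich enough that the minimizer in (3) actually exists, and (1) with margin $\delta_0$ is what makes the ranking loss in (3) identifiable (so that small loss implies small $\varepsilon_r$ and correct ordering on most pairs).

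The main obstacle I expect is the coverage/exploration coupling: Assumption (3) controls $|\hat{R}_\theta - R^*|$ only under the training ``coverage distribution,'' but the inequality $V^*(\hat{\pi}) \ge \hat{V}(\hat{\pi}) - \varepsilon_r$ needs the bound to hold along trajectories generated by $\hat{\pi}$ and by the comparator $\pi^\star$---policies that may visit the product state space quite differently from the coverage distribution. Making this rigorous requires either (a) assuming the coverage distribution has full support with a bounded concentrability coefficient $C$, which would replace $\varepsilon_r$ by $C\varepsilon_r$ in the relevant steps, or (b) arguing that Assumption (4)'s ``infinitely often'' visitation, together with the on-policy data collection in the dynamic variant, eventually makes the coverage distribution dominate the policies of interest. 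I would state the cleanest sufficient condition (bounded concentrability, or equivalently that every trajectory of every policy in $\Pi$ lies in the ``good'' set of Assumption (3)) explicitly as part of the proof, since without it the telescoping between $\hat{V}$ and $V^*$ cannot be pushed through uniformly over $\Pi$. A secondary, more routine obstacle is handling the horizon: trajectories are formally infinite ($\sum_{t=0}^\infty \gamma^t$) while the losses and scores are written for finite $T$; I would truncate at an $\varepsilon$-horizon $T_\varepsilon = O(\log(1/\varepsilon(1-\gamma)) / (1-\gamma))$ and absorb the tail $\gamma^{T_\varepsilon} R_{\max}/(1-\gamma)$ into $\varepsilon_r$.
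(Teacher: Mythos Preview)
Your proposal is correct and follows essentially the same three-step route as the paper: establish $\lvert \hat{R}_\theta(\tau)-R^*(\tau)\rvert\le\varepsilon_r$ from Assumption~(3), invoke Watkins' convergence under Assumption~(4) to make $\hat{\pi}$ optimal for $\hat{R}_\theta$, and then chain $V_{R^*}(\hat{\pi})\to V_{\hat{R}_\theta}(\hat{\pi})\to V_{\hat{R}_\theta}(\pi^\star)\to V_{R^*}(\pi^\star)$. The only cosmetic difference is bookkeeping: the paper treats Q-learning as converging exactly and uses $\varepsilon_t$ to absorb the reward-approximation error on $\hat{\pi}$'s trajectories (calling it ``residual misalignment or low-probability outliers''), while you reserve $\varepsilon_t$ for finite-time Q-learning slack and therefore pick up $2\varepsilon_r$; both lead to the same $\varepsilon$ up to renaming. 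Your explicit flagging of the coverage/concentrability issue---that Assumption~(3) controls the gap only under the training distribution, not necessarily along $\hat{\pi}$ or $\pi^\star$---is a point the paper's proof passes over without comment, so your treatment is in fact more careful there.
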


\begin{proof}

\textbf{Step 1: Consistency of the Learned Return.}

By assumption \textbf{(3)}, the pairwise ranking loss is minimized 
on a sufficiently large set of trajectory pairs 
$\{(\tau_p, \tau_n)\}$ consistent with $R^*(\tau)$.  
Hence, with probability at least $1-\zeta$, 
we obtain a function $\hat{r}_\theta$ such that its cumulative 
$\hat{R}_\theta(\tau)$ satisfies:
\[
\tau_1 \succ \tau_2
\quad\Longrightarrow\quad
\hat{R}_\theta(\tau_1) \;>\; \hat{R}_\theta(\tau_2),
\]
except possibly on a small fraction of pairs.  
Moreover, $\bigl|\hat{R}_\theta(\tau) - R^*(\tau)\bigr| 
\le \varepsilon_r$ 
for most $\tau$ under the sampling distribution.  
The margin $\delta_0>0$ in \textbf{(1)} ensures that small 
errors in $\hat{R}_\theta(\tau)$ do not flip the preference 
unless those errors exceed $\delta_0$.  

\textbf{Step 2: Q-Learning Convergence in the Finite Product MDP.}

Define the reward in $M_{\mathrm{prod}}$ as 
$\hat{r}_\theta(\widetilde{s},a)$ 
for $\widetilde{s}\in \widetilde{S},\, a\in A$.  
Since $\widetilde{S}$ and $A$ are finite, tabular Q-learning applies.  
By assumption \textbf{(4)}, \emph{each} pair $(\widetilde{s},a)$ 
is visited infinitely often.  
According to Watkins' theorem \cite{Watkins1992}, 
the Q-values converge almost surely to the unique fixed point 
$Q^*_{\hat{r}_\theta}$ satisfying the Bellman optimality 
equation for $\hat{r}_\theta$.  
Let $\hat{\pi}$ be the greedy policy w.r.t.\ $Q^*_{\hat{r}_\theta}$; 
then $\hat{\pi}$ maximizes 
\[
V_{\hat{R}_\theta}(\pi)
\;=\;
\mathbb{E}_{\tau\sim\pi}\!\Bigl[\hat{R}_\theta(\tau)\Bigr].
\]

\textbf{Step 3: Relating $\hat{R}_\theta$-Optimality to $R^*$-Optimality.}

Let $\pi^* \in \Pi$ be any policy (potentially optimal) w.r.t.\ $R^*$.  
We show:
\[
V_{R^*}(\hat{\pi})
\;\;\ge\;
V_{R^*}(\pi^*)
\;-\;
(\varepsilon_r + \varepsilon_t).
\]

Since $\hat{\pi}$ is optimal for $\hat{R}_\theta$, we have
\[
V_{\hat{R}_\theta}(\hat{\pi})
\;=\;
\max_{\pi\in\Pi} V_{\hat{R}_\theta}(\pi).
\]
On the trajectories visited by $\hat{\pi}$, the difference 
$\bigl|\hat{R}_\theta(\tau) - R^*(\tau)\bigr|$ is at most 
$\varepsilon_r$ (with high probability), so
\[
V_{R^*}(\hat{\pi})
\;\;\ge\;\;
V_{\hat{R}_\theta}(\hat{\pi})
\;-\;
\varepsilon_t,
\]
where $\varepsilon_t$ captures any residual misalignment or 
low-probability outliers.  Next,
\[
V_{\hat{R}_\theta}(\hat{\pi})
\;=\;
\max_{\pi\in\Pi} V_{\hat{R}_\theta}(\pi)
\;\;\ge\;
V_{\hat{R}_\theta}(\pi^*).
\]
Again relating $\hat{R}_\theta$ to $R^*$ on $\pi^*$'s trajectories,
\[
V_{\hat{R}_\theta}(\pi^*)
\;\;\ge\;\;
V_{R^*}(\pi^*) \;-\; \varepsilon_r.
\]
Combining,
\begin{align}
V_{R^*}(\hat{\pi})
&\ge
\bigl[V_{\hat{R}_\theta}(\hat{\pi}) - \varepsilon_t\bigr] \nonumber \\
&\ge
\bigl[V_{\hat{R}_\theta}(\pi^*) - \varepsilon_t\bigr] \nonumber \\
&\ge
\bigl[V_{R^*}(\pi^*) - (\varepsilon_t + \varepsilon_r)\bigr].
\end{align}
Hence,
\[
V_{R^*}(\hat{\pi})
\;\;\ge\;
V_{R^*}(\pi^*)
\;-\;
(\varepsilon_t + \varepsilon_r).
\]
Defining $\varepsilon := \varepsilon_r + \varepsilon_t$ 
yields the claimed \(\varepsilon\)-suboptimality bound 
for $\hat{\pi}$ with high probability $(1 - \zeta)$.  
Since $\pi^*$ was arbitrary, in particular
\[
V_{R^*}(\hat{\pi})
\;\;\ge\;
\max_{\pi\in\Pi} V_{R^*}(\pi)
\;-\; \varepsilon.
\]
This completes the proof.
\end{proof}

\section{Experimental Results}
\label{sec:exp}
We evaluate our framework using six methods: static, dynamic, known reward function, distillation with reward shaping, reward machine, and Logic-Guided Policy Optimization (LPOPL). The goal is to optimize policies in gridworlds with subgoals, obstacles, and a final goal, and analyzing performance on tasks with sequential objectives across both discrete and continuous state spaces.

For the \emph{known reward function} method, the agent has access to a hand-crafted reward and directly optimizes its policy, serving as a performance benchmark. Inspired by structured task decomposition~\cite{Ng1999PolicyInvariant, singh2010intrinsically, konidaris2009skill}, the reward encourages completing subgoals in order: $+3$ for correct subgoals, $-1$ for out-of-order ones, $+9$ for reaching the final goal if all subgoals were completed in order ($-3$ otherwise), and $-0.1$ per step to promote efficiency. This design reflects hierarchical RL ideas that leverage prior task structure~\cite{dietterich2000hierarchical}.

We introduce a  method termed \emph{distillation with reward shaping}, adapted from automaton-based RL~\cite{Icarte2022reward, ToroIcarte2018UsingRL, singireddy2023automaton}. It neither relies on preference comparisons nor manual rewards. It uses the given DFA for minimal numeric shaping: $+1$ for successful subgoal transitions and $-0.1$ otherwise.  
The agent refines its policy solely from DFA signals.

The \emph{reward machine} baseline implements the approach from \cite{Icarte2022reward}, where the automaton directly defines rewards based on state transitions. Each valid transition in the automaton that represents progression toward the goal receives a positive reward ($+1$), while invalid transitions receive no reward. Additionally, a small step penalty ($-0.1$) encourages efficient solutions. This approach directly translates the automaton structure into a reward function without learning from preferences. 

The \emph{LPOPL} baseline~\cite{Hasanbeig2018LogicGuided} shapes rewards using LTL specifications converted into automata, assigning rewards based on progress toward satisfying the logic formula. In contrast to our preference-based approach, LPOPL directly encodes logical progression into the reward function (see Appendices \ref{appdx:distill}, \ref{appdx:rm}, and \ref{appdx:lpopl} for further details about these baselines).

\textbf{Experimental setup.} 
We evaluate our approach in six environments spanning diverse domain types to demonstrate broad applicability and scalability. Four gridworld environments (Minecraft Iron Sword Quest, Dungeon Quest, Blind Craftsman, and Minecraft Building Bridge) are visualized in Fig.~\ref{fig:combined_environment_visualization} 
along with their DFAs in Figs~\ref{fig:dfa_minecraft}-\ref{fig:dfa_bridge}. The first involves a single sequence of subgoals, while the others introduce complexity through multiple valid paths and loops. To validate broader applicability, we extend evaluation to two additional environments representing fundamentally different domain types: Mountain Car Collection, a physics-based terrain navigation task with energy management constraints, and Warehouse Robotics, implementing a realistic 12-dimensional continuous state space, highlighting scalability for robotic applications (See Fig.~\ref{fig:dfa_mountain_car}-\ref{fig:dfa_warehouse} for the corresponding DFAs). All six tasks require completing subgoals in specific temporal orders, defined by DFAs that encode non-Markovian objectives. These environments highlight the limitations of simple state-based rewards and demonstrate the advantages of DFA-based preferences across discrete navigation, physics-based control, and high-dimensional continuous domains.

We provide the description and visualizations of the six environments along with their DFA representations. Each DFA encodes the logical structure of subgoal dependencies and task constraints, guiding the agent in task completion. The states represent distinct progress levels, with transitions triggered by specific subgoal achievements. These automata serve as an interpretable abstraction of the environment dynamics, ensuring structured exploration and efficient policy learning.

\begin{figure}[!htbp]
\centering

\begin{minipage}{0.24\textwidth}
  \centering
  \includegraphics[width=\linewidth]{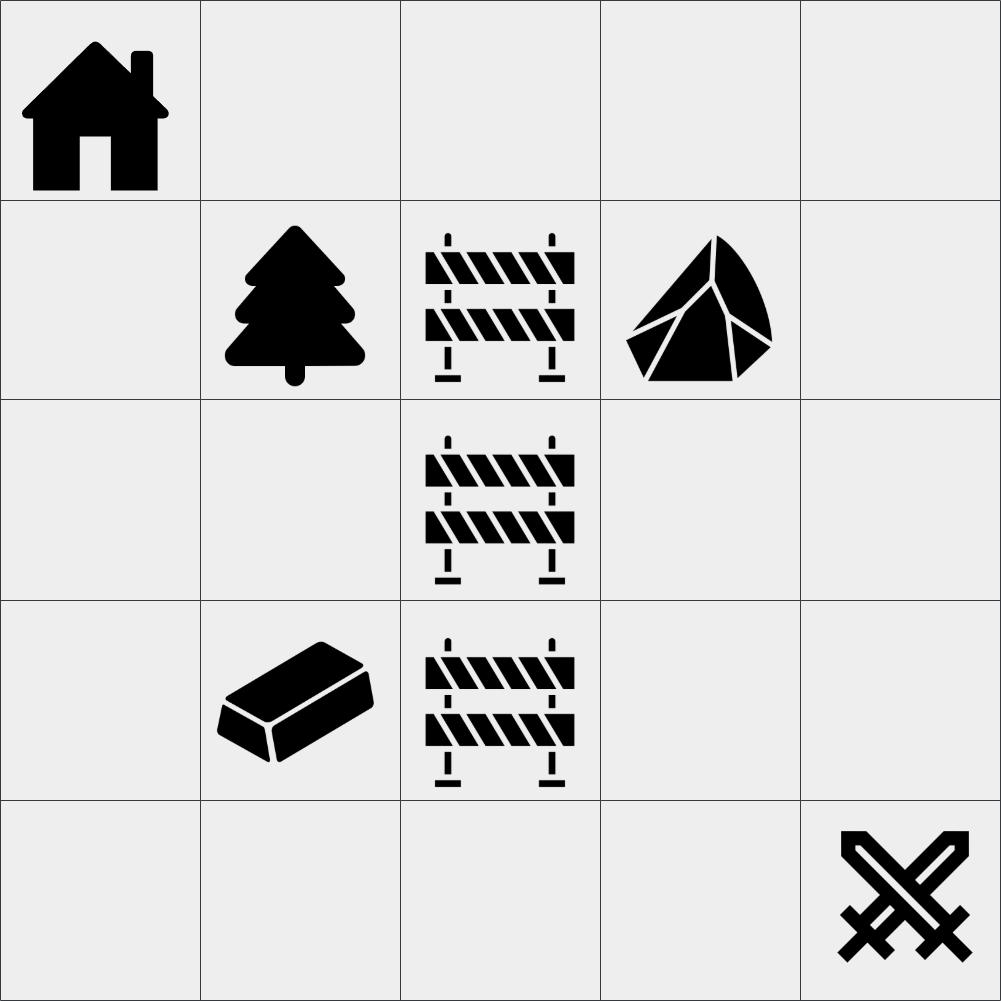}
  \vspace{0.1cm}
  \label{fig:minecraft_grid_world}
\end{minipage}%
\hfill
\begin{minipage}{0.24\textwidth}
  \centering
  \includegraphics[width=\linewidth]{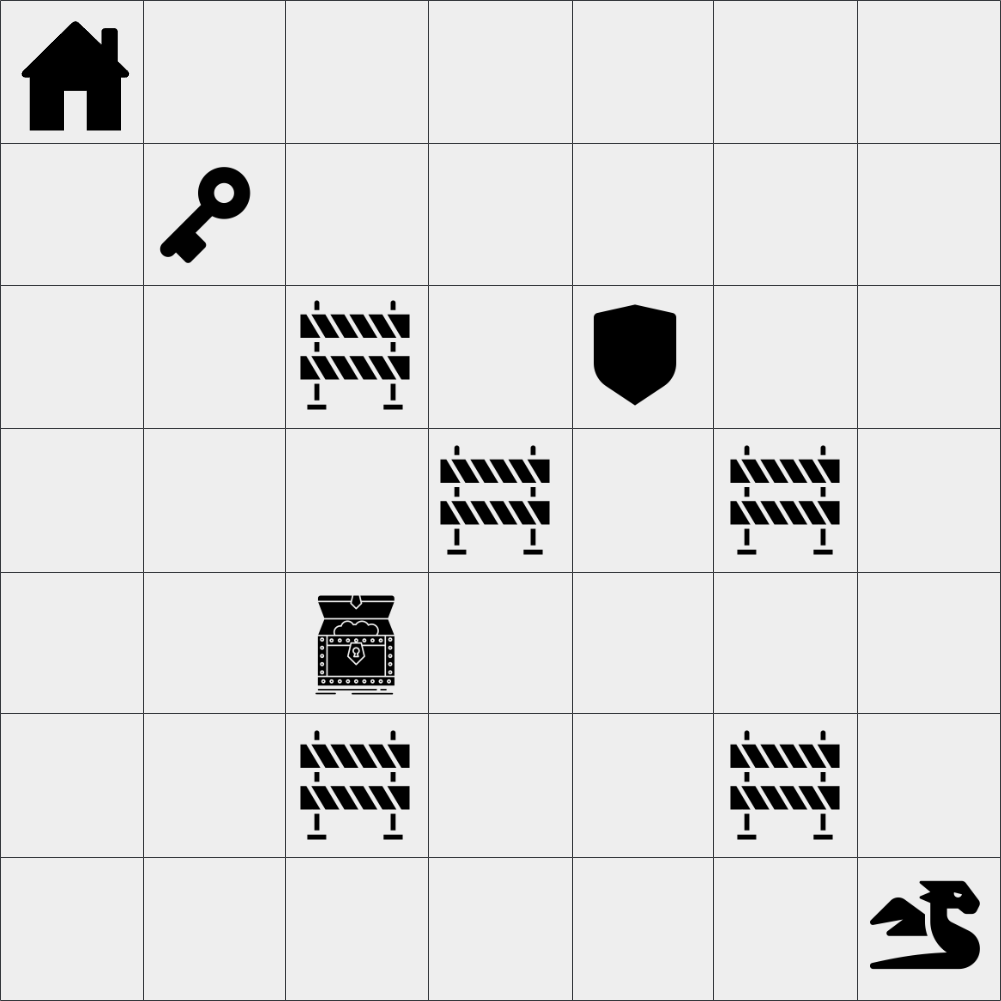}
  \vspace{0.1cm}
  \label{fig:dungeon_quest_grid_world}
\end{minipage}%
\hfill
\begin{minipage}{0.24\textwidth}
  \centering
  \includegraphics[width=\linewidth]{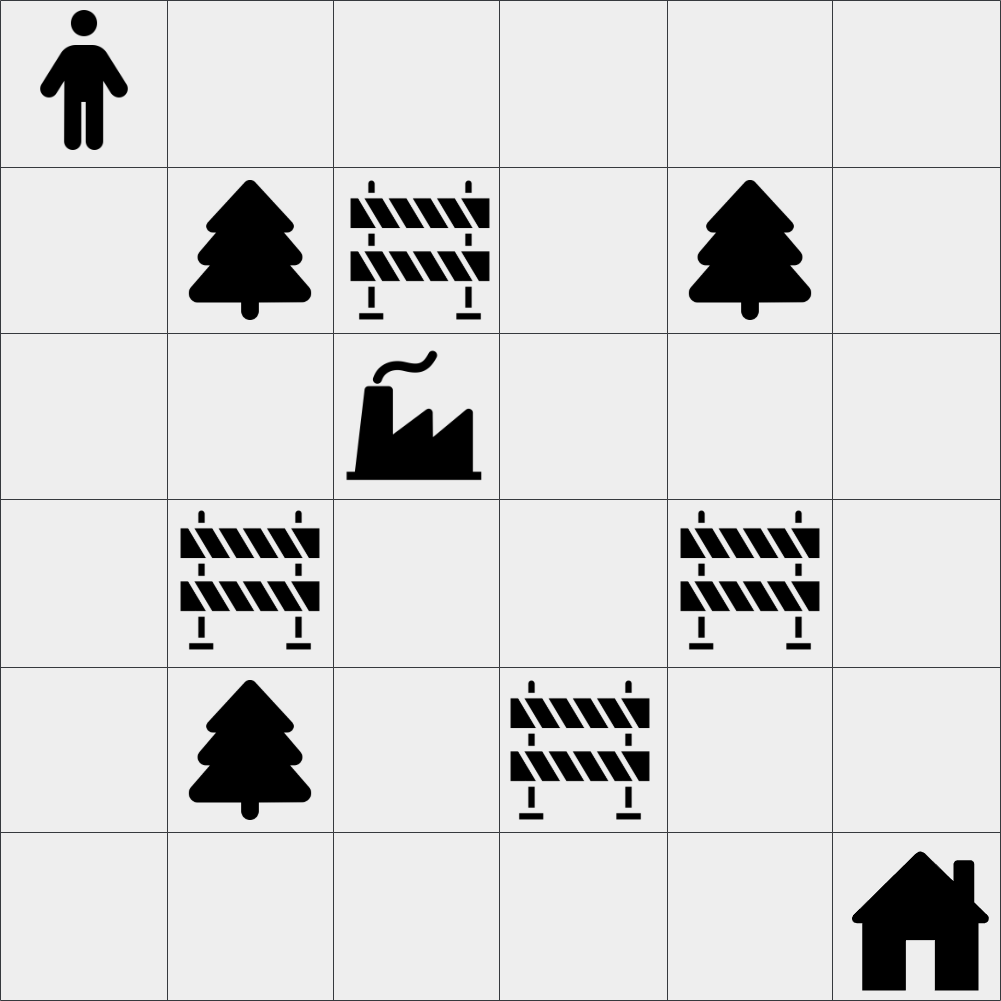}
  \vspace{0.1cm}
  \label{fig:blind_craftsman_grid_world}
\end{minipage}%
\hfill
\begin{minipage}{0.24\textwidth}
  \centering
  \includegraphics[width=\linewidth]{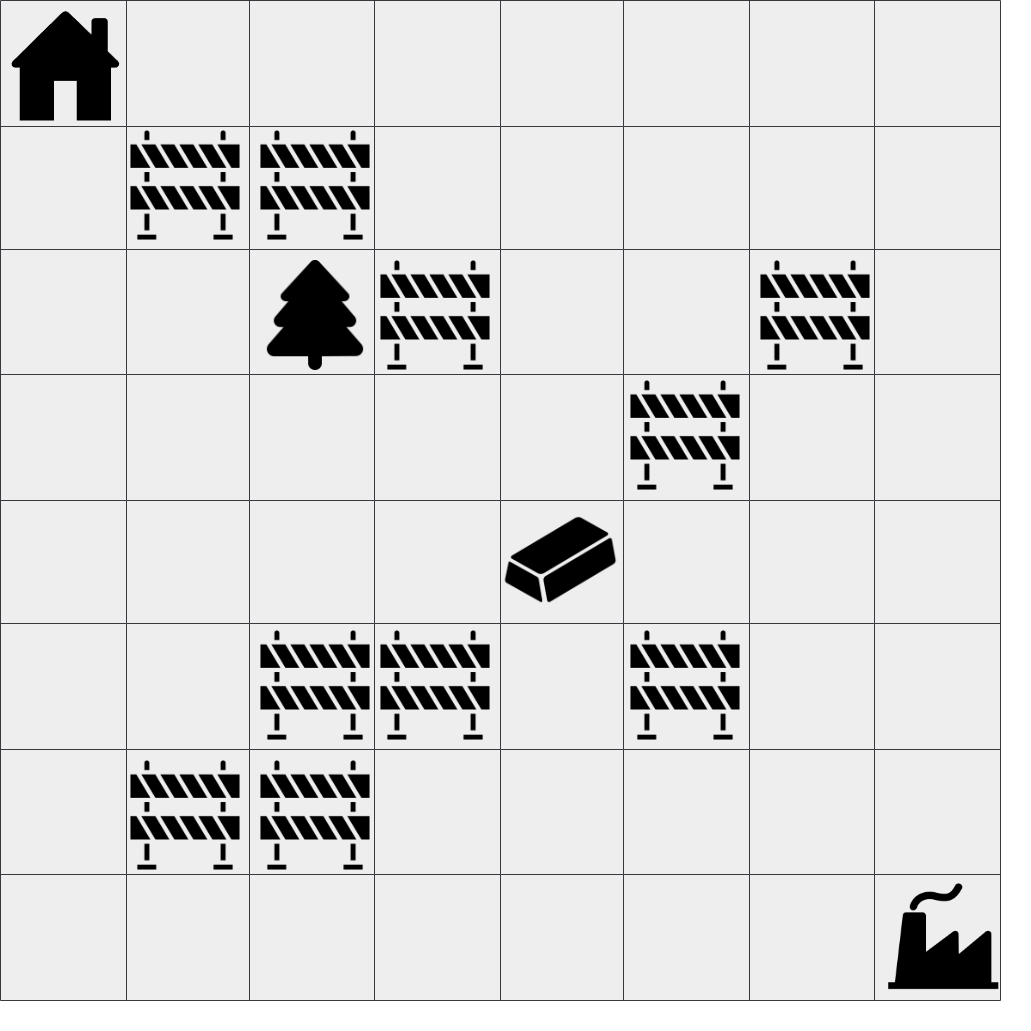}
  \vspace{0.1cm}
  \label{fig:building_bridge_grid_world}
\end{minipage}
\caption{\small{Visualization of the four environments: Minecraft Iron Sword Quest, Dungeon Quest, Blind Craftsman, and Minecraft Building Bridge (left to right).}} 
\label{fig:combined_environment_visualization}
\end{figure}

\textbf{1) Minecraft Iron Sword Quest (5$\times$5).}
The agent starts at a \emph{Home Base} and must reach a \emph{Crafting Table}, completing subgoals in a strict order while navigating obstacles.  
It must gather wood to craft a Wooden Pickaxe, use it to mine stone and craft a Stone Pickaxe, and finally mine iron ore to craft the sword at the Crafting Table. This type of structured environment
has been widely explored~\cite{Andreas2017Modular}, including tasks like \emph{Make Bed} and \emph{Make Axe}.

\begin{figure}[!t]
    \centering
    \includegraphics[width=5in]{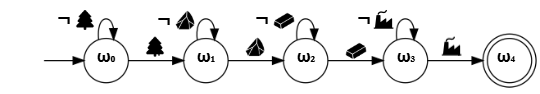}
    \caption{DFA for \emph{Minecraft Iron Sword Quest}. The automaton enforces the correct order of resource collection before crafting the sword.}
    \label{fig:dfa_minecraft}
\end{figure}

\textbf{2) Dungeon Quest (7$\times$7).}
The agent navigates a grid to collect items and defeat a dragon, following a strict sequence of subgoals \cite{singireddy2023automaton, alinejad2025bidirectional}.  
The agent must first obtain a \emph{Key} to unlock the \emph{Chest}, retrieve the \emph{Sword} from the Chest, and collect the \emph{Shield} for protection. The Dragon can only be defeated with both the Shield and Sword. These dependencies are captured in the DFA, with transitions triggered by item acquisition. 

\begin{figure}[!t]
    \centering
    \includegraphics[width=5in]{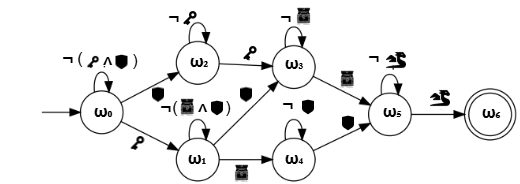}
    \caption{DFA for \emph{Dungeon Quest}. The sequence requires acquiring the Key, Shield, and Chest before facing the Dragon.}
    \label{fig:dfa_dungeon}
\end{figure}

\textbf{3) Blind Craftsman (6$\times$6).}
This grid world
features multiple paths and potential loops within its subgoal structure. The agent must gather wood, craft tools, and return home. The environment includes wood sources (up to two pieces collected at a time), the factory (for crafting tools), and the home (the final goal, accessible only after crafting all tools). The agent alternates between wood collection and factory visits to craft three tools before returning home. Task dependencies ensure tools require sufficient wood, and the mission is incomplete until all three tools are crafted. The DFA encodes these dependencies. 

\begin{figure}[!t]
    \centering
    \includegraphics[width=6in]{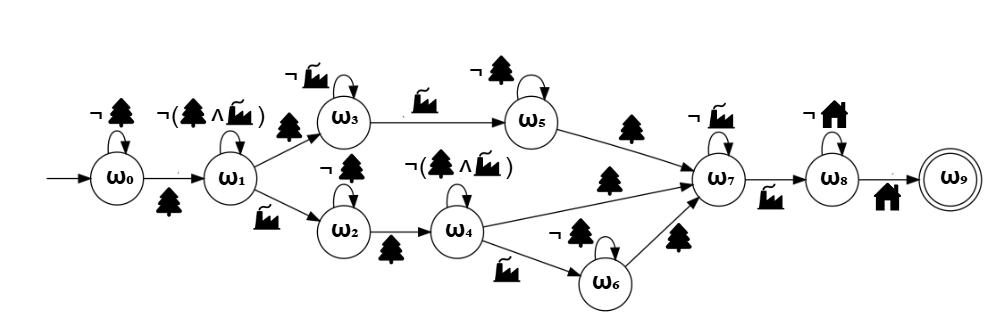}
    \caption{DFA for \emph{Blind Craftsman}. The agent must alternate between collecting wood and visiting the Factory to craft tools before returning Home.}
    \label{fig:dfa_craftsman}
\end{figure}

\textbf{4) Minecraft Building Bridge (8$\times$8).}
The agent must build a bridge by collecting \emph{Wood} and \emph{Iron} and using them at the \emph{Factory}, the final subgoal \cite{Andreas2017Modular,ToroIcarte2018UsingRL}. Starting at a \emph{Home Base}, the agent navigates the grid, overcoming obstacles like rivers, rocks, and trees. Unlike previous tasks, \emph{Wood} and \emph{Iron} can be collected in any order before reaching the \emph{Factory}, allowing multiple valid sequences. The DFA encodes these dependencies, ensuring the task is completed only after both materials are collected and the bridge is constructed. 

\begin{figure}[!t]
    \centering
    \includegraphics[width=3.8in]{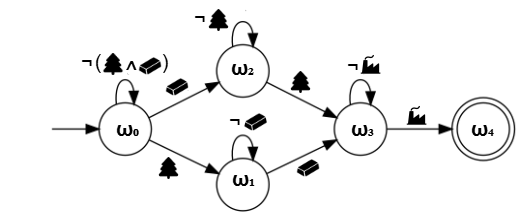}
    \caption{DFA for \emph{Minecraft Building Bridge}. The agent can collect Wood and Iron in any order before utilizing the Factory to complete the bridge.}
    \label{fig:dfa_bridge}
\end{figure}

\begin{figure}[!htbp]
\centering

\begin{minipage}{0.24\textwidth}
  \centering
  \includegraphics[width=\linewidth]{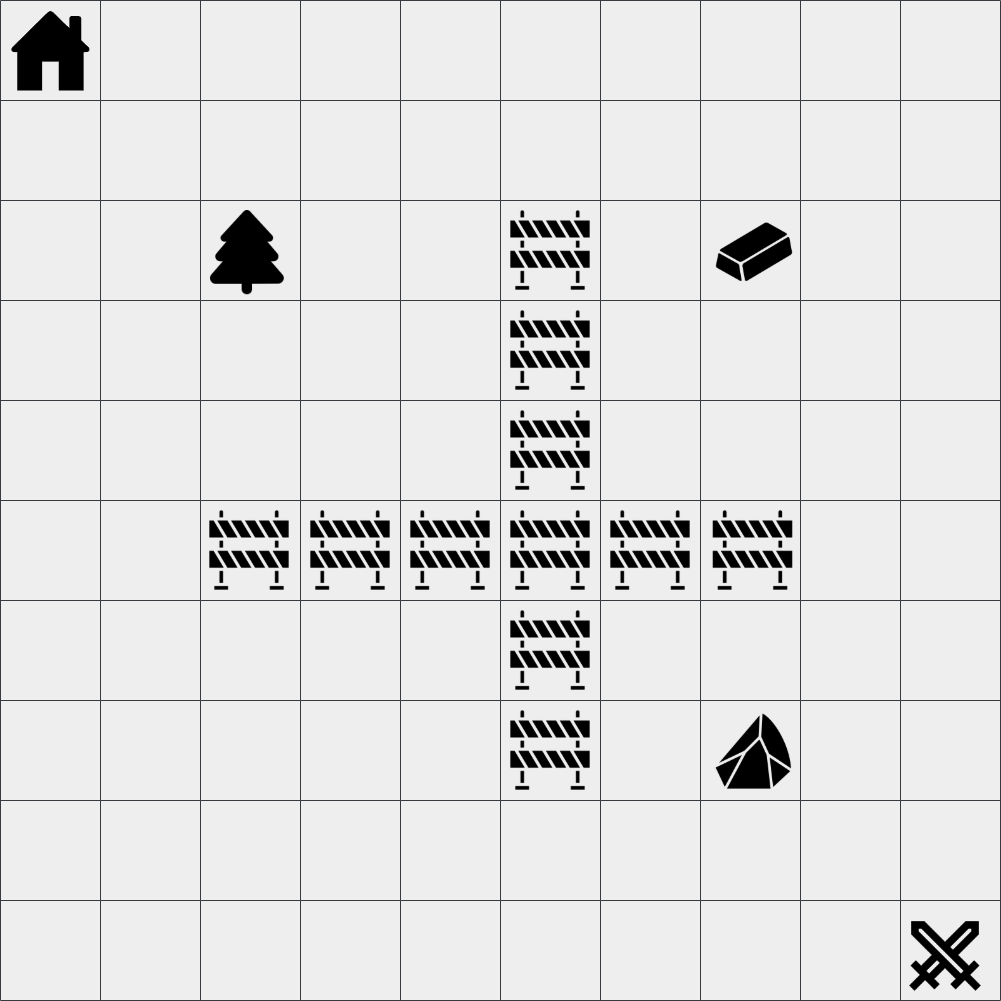}

  \label{fig:student_minecraft}
\end{minipage}%
\hfill
\begin{minipage}{0.24\textwidth}
  \centering
  \includegraphics[width=\linewidth]{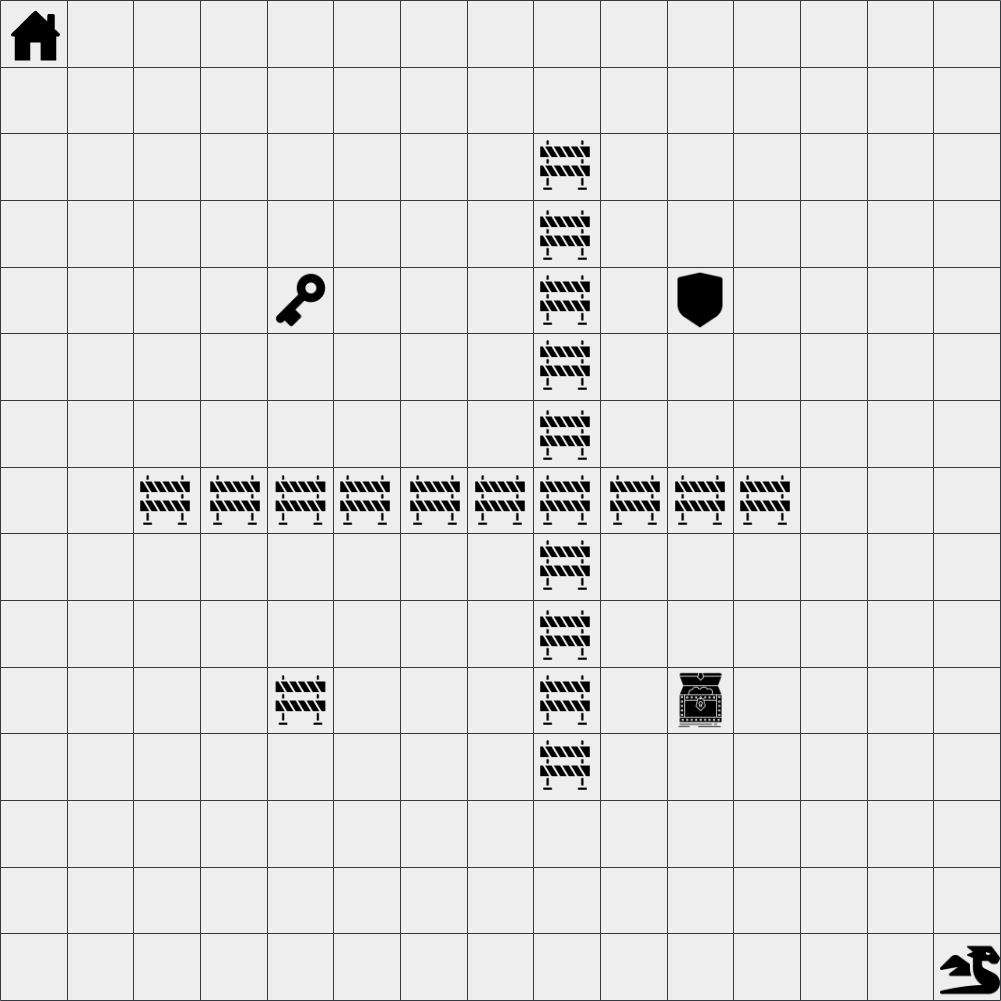}

  \label{fig:student_dungeon}
\end{minipage}%
\hfill
\begin{minipage}{0.24\textwidth}
  \centering
  \includegraphics[width=\linewidth]{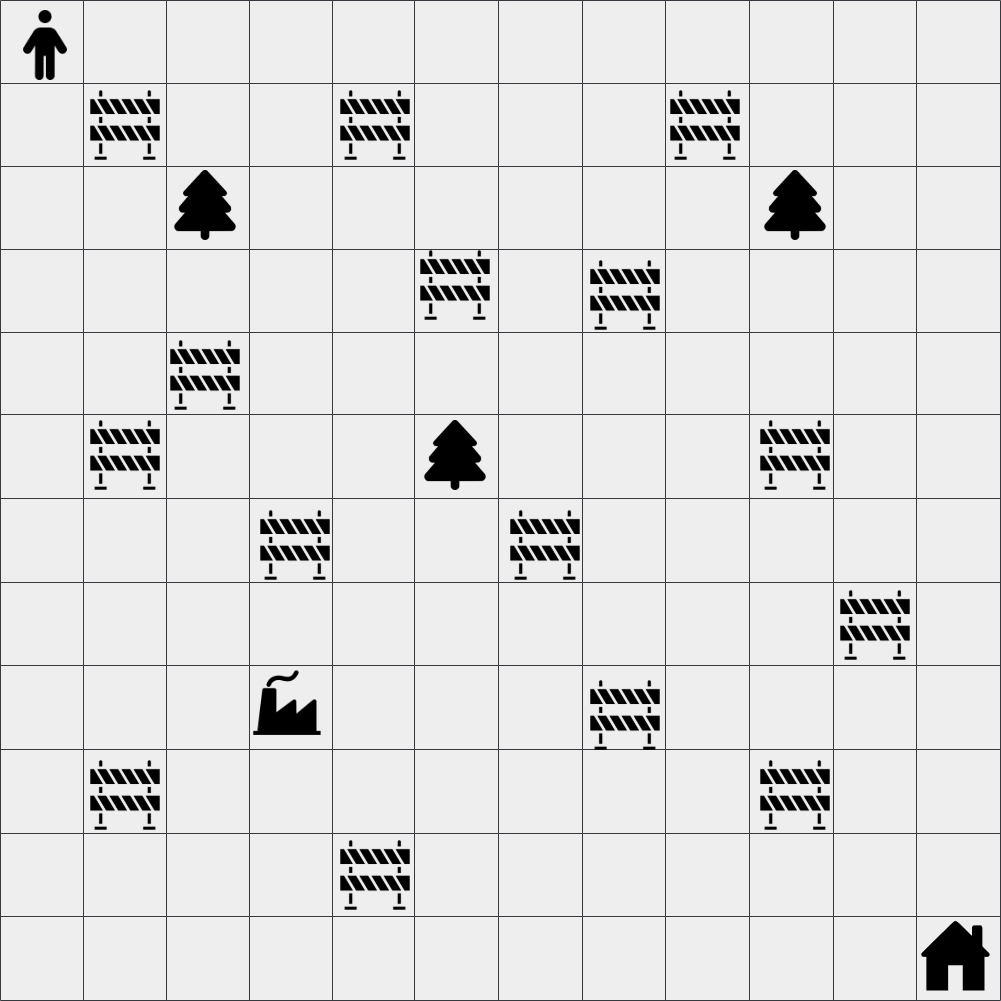}

  \label{fig:student_craftsman}
\end{minipage}%
\hfill
\begin{minipage}{0.24\textwidth}
  \centering
  \includegraphics[width=\linewidth]{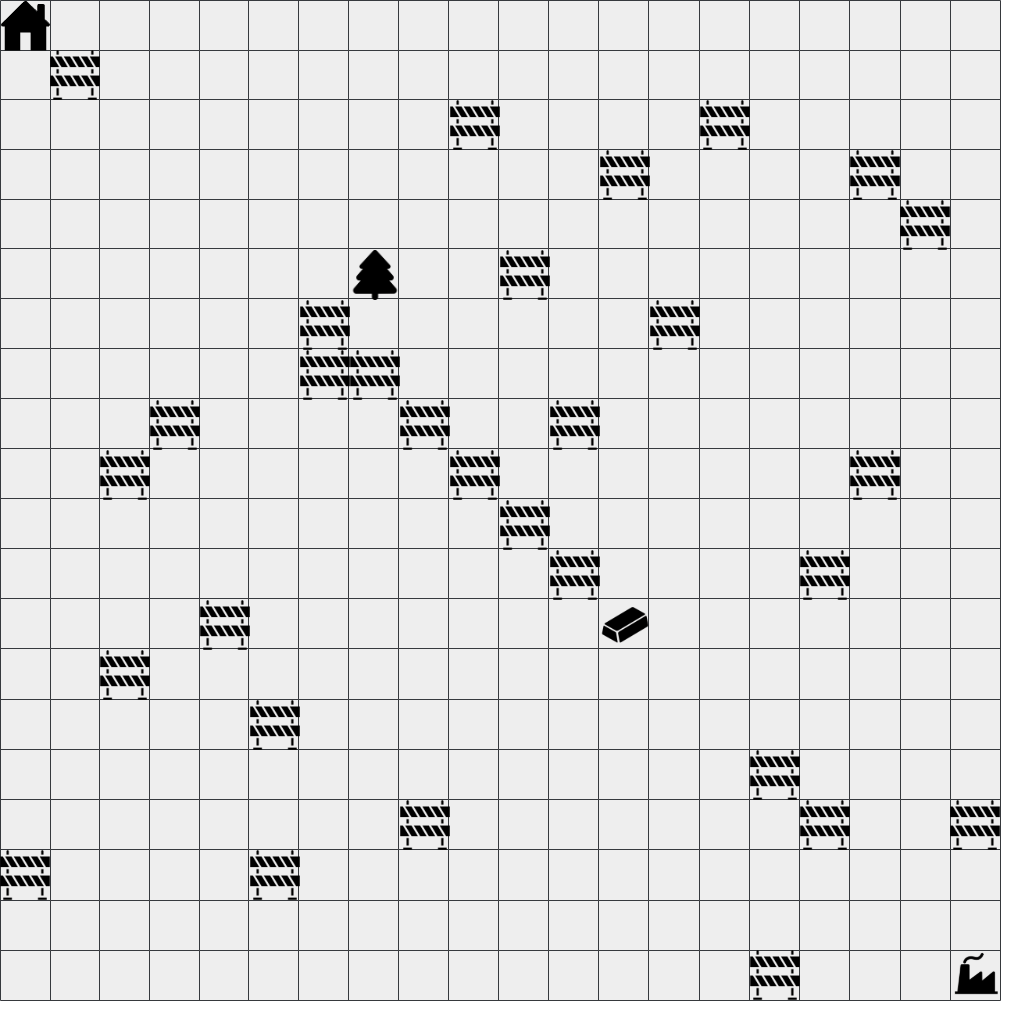}

  \label{fig:student_bridge}
\end{minipage}
\caption{\small{Visualization of the four student environments.}} 
\label{fig:student_environments}
\end{figure}

\textbf{5) Mountain Car Collection (Physics-Based Navigation).}
The agent operates in a 1D mountainous terrain with 20 discrete positions, where movement is constrained by energy levels and terrain difficulty. The state space is 9-dimensional, comprising normalized position (1D), energy level (5D one-hot encoding), and inventory status for three collectible items (3D binary). The environment features a challenging landscape with varying heights that affect energy consumption during navigation. The agent must sequentially collect four items: Power Cell, Sensor Array, Data Crystal, and Base Station, following strict temporal dependencies encoded in the DFA. The agent has five energy states (depleted, low, medium, high, max) that determine movement capabilities, with energy consumption based on terrain elevation changes and obstacle encounters. Obstacles at positions 5, 10, and 15 impose additional energy penalties. The agent's movement distance is limited by its current energy level, and the rest action provides energy recovery. This environment validates our approach's effectiveness in physics-based domains with resource management constraints, extending beyond discrete navigation tasks to multi-dimensional state spaces with continuous constraint satisfaction problems.

\begin{figure}[!t]
    \centering
    \includegraphics[width=4.5in]{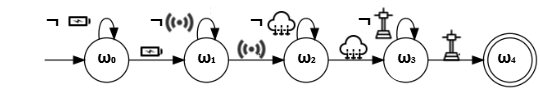}
    \caption{DFA for \emph{Mountain Car Collection}. The automaton enforces sequential collection of power cell, sensor array, data crystal, and base station return in strict order.}
    \label{fig:dfa_mountain_car}
\end{figure}

\textbf{6) Warehouse Robotics (High-Dimensional Continuous).}
This environment implements a realistic 12-dimensional continuous state space representing a warehouse robotics scenario, comprising normalized robot coordinates (2D), scanner possession status (1D), normalized scanner battery level (1D), individual task completion flags for zone scanning, item scanning, item pickup, and delivery (4D), normalized overall task progress (1D), normalized step count (1D), and binary proximity indicators for scan zone and shipping dock (2D). The robot operates in a 6×8 grid with continuous position coordinates, requiring precise navigation and state management. The task involves a 5-step sequential operation: obtaining a scanner from the station, navigating to the scan zone to perform inventory scanning, returning the scanner to the charging station for battery replenishment, collecting the identified item from its location, and delivering it to the shipping dock. Each subtask must be completed in the correct order as tracked by the automaton state progression. This high-dimensional representation mirrors real-world robotic systems where agents must simultaneously track position, equipment status, battery levels, individual task completion states, and environmental proximity, creating a challenging continuous control problem. This environment highlights the scalability of our framework and its capability to handle realistic industrial automation scenarios with complex temporal constraints and multi-modal state representations typical of modern warehouse management systems.

\begin{figure}[!t]
    \centering
    \includegraphics[width=4.5in]{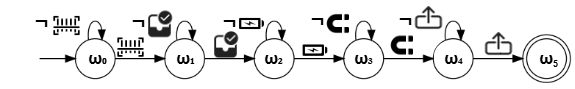}
    \caption{DFA for \emph{Warehouse Robotics}. The 6-state automaton encodes the complete warehouse workflow: scanner acquisition, zone scanning, scanner return, item pickup, and delivery completion.}
    \label{fig:dfa_warehouse}
\end{figure}

\textbf{Results.}
First, we report results using the subtask-based scoring in \eqref{eq:score_subtask_based}, evaluated across the four environments on three metrics: 1) reward per episode, 2) steps per episode, and 3) reward per cumulative steps, which best reflects learning efficiency. We focus on the third metric in the main text; the others appear in Appendix \ref{appdx:exp}.

Figure~\ref{fig:cumulative_steps_all} shows the \textbf{reward per cumulative steps}, which highlights overall learning efficiency. In two environments, the dynamic method achieves superior efficiency after sufficient training, balancing exploration with preference refinement. In the other two environments, both the static and dynamic methods maintain competitive performance, often outperforming the other baselines.

\begin{figure}[!htbp]
\centering
\begin{minipage}{0.49\columnwidth}
  \centering
  \includegraphics[width=0.99\columnwidth]{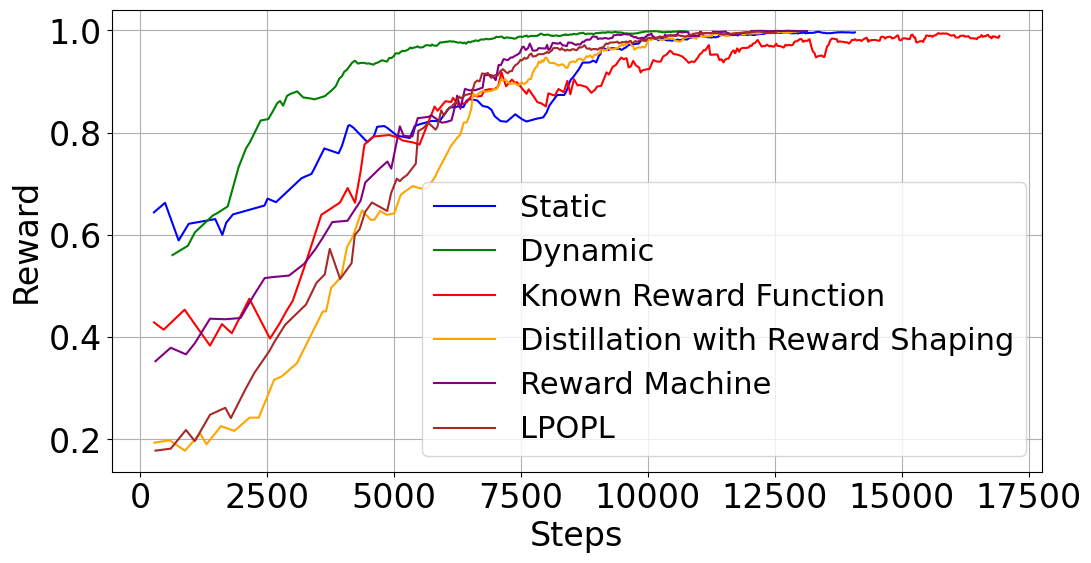}

  \label{fig:cumulative_minecraft}
\end{minipage}%
\hfill
\begin{minipage}{0.49\columnwidth}
  \centering
  \includegraphics[width=0.99\columnwidth]{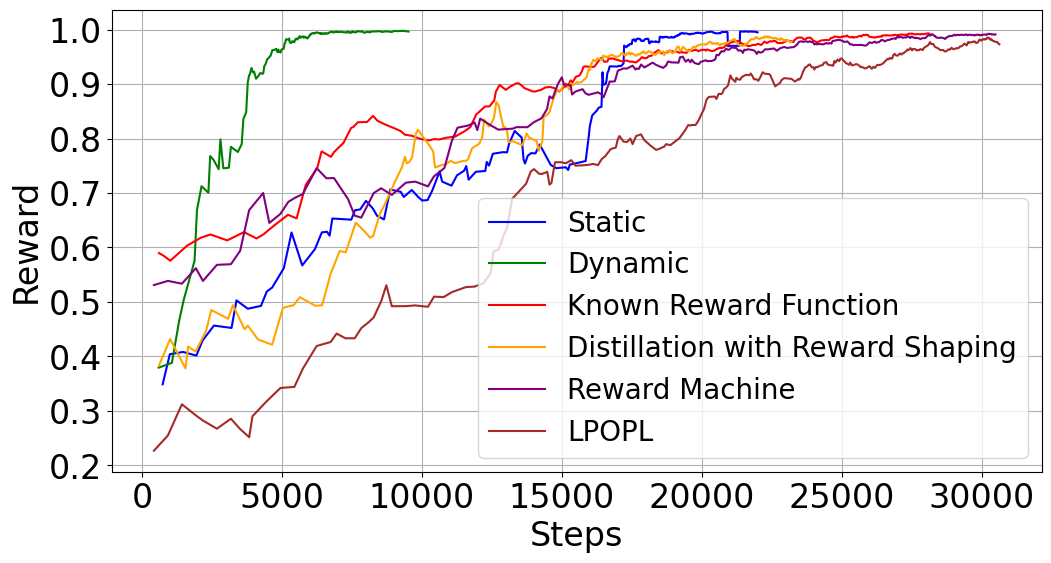}

  \label{fig:cumulative_dungeon}
\end{minipage}

\begin{minipage}{0.49\columnwidth}
  \centering
  \includegraphics[width=0.99\columnwidth]{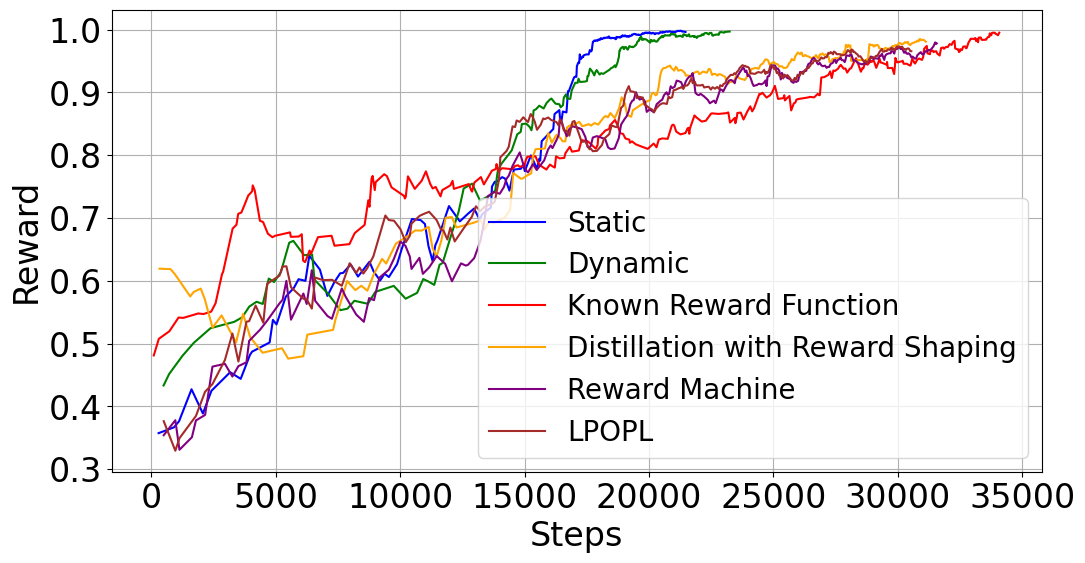}

  \label{fig:cumulative_craftsman}
\end{minipage}%
\hfill
\begin{minipage}{0.49\columnwidth}
  \centering
  \includegraphics[width=0.99\columnwidth]{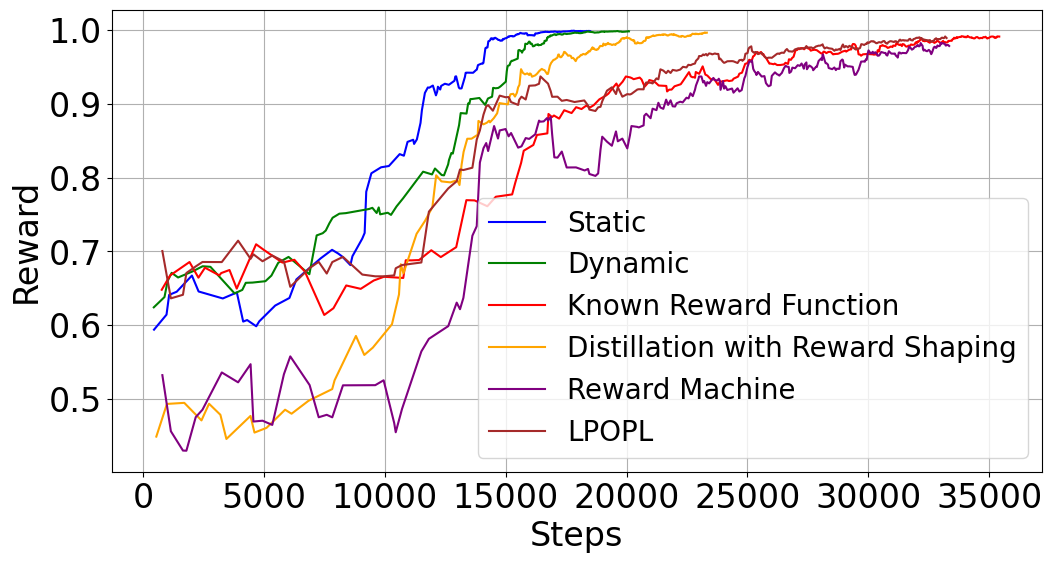}

  \label{fig:cumulative_bridge}
\end{minipage}
\caption{\small{Reward per cumulative steps for all environments.  Minecraft Iron Sword Quest (top left), Dungeon Quest (top right), Blind Craftsman (bottom left), and Minecraft Building Bridge (bottom right).}}

\label{fig:cumulative_steps_all}
\end{figure}

The dynamic method excels through iterative reward refinement, while the static method performs well with a fixed reward. Both surpass the known reward, reward-shaping, RM, and LPOPL baselines, underscoring the advantage of adaptive, automaton-based alignment in subgoal-driven RL tasks.

Figure~\ref{fig:extended_domains_results} shows results for the high-dimensional environments, Mountain Car Collection (left) and Warehouse Robotics, demonstrating effectiveness across diverse domain types. Both static and dynamic methods outperform all baselines, demonstrating scalability to realistic applications beyond discrete gridworlds.

\textbf{Continuous environment results.} To assess generalizability, we tested our approach in continuous versions of the four environments, where agents navigate 2D planes with real-valued coordinates. We replaced Manhattan distance with Euclidean distance and used TD3 for policy optimization.

Figure~\ref{fig:continuous_reward_step_all} shows reward per cumulative steps, demonstrating that our approach generalizes well to continuous state spaces. The dynamic method outperforms others as training progresses, while both preference-based methods exceed the known reward and DFA shaping baselines. This highlights the value of preference-based learning in continuous domains, where reward engineering is harder and nuanced feedback is essential. The minimal changes needed (updating distance metrics and policy optimization methods) underscore the flexibility and robustness of our automaton-based preference framework, supporting its scalability to more realistic settings.

\begin{figure}[!htbp]
\centering
\begin{minipage}{0.49\columnwidth}
  \centering
  \includegraphics[width=0.99\columnwidth]{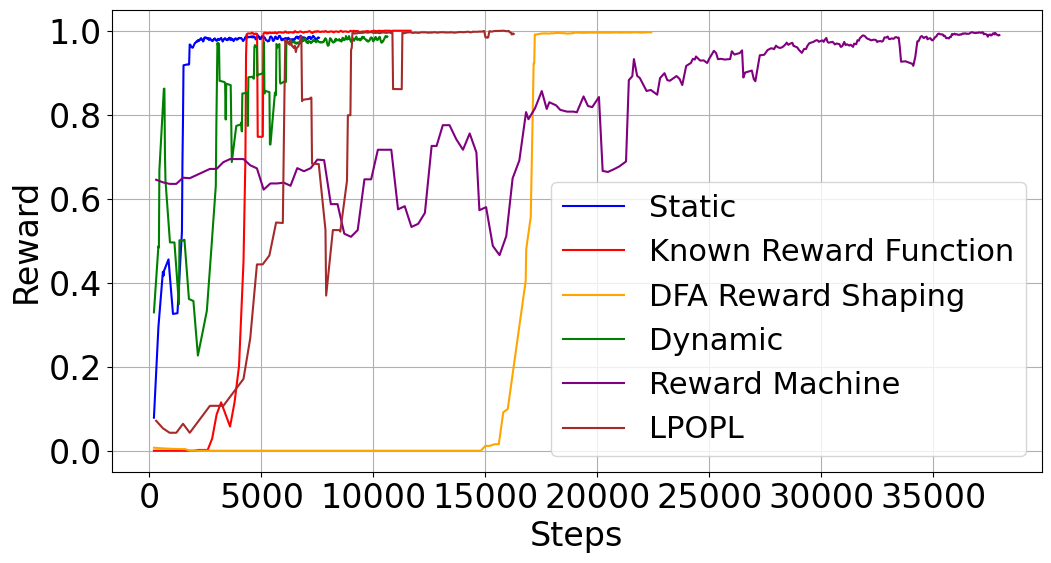}
  \label{fig:continuous_cumulative_minecraft}
\end{minipage}%
\hfill
\begin{minipage}{0.49\columnwidth}
  \centering
  \includegraphics[width=0.99\columnwidth]{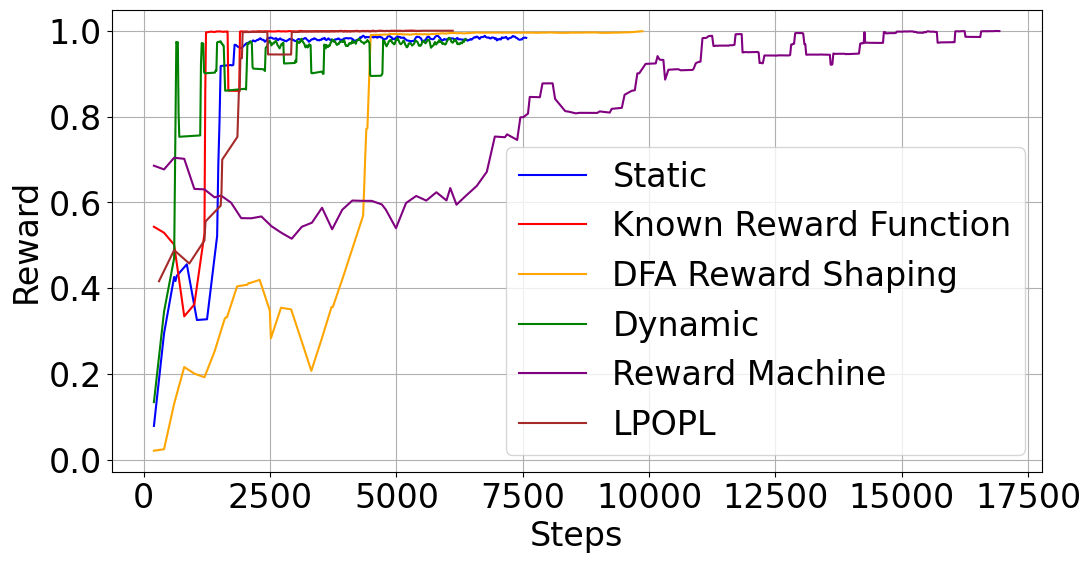}
  \label{fig:continuous_cumulative_dungeon}
\end{minipage}

\begin{minipage}{0.49\columnwidth} 
  \centering
  \includegraphics[width=0.99\columnwidth]{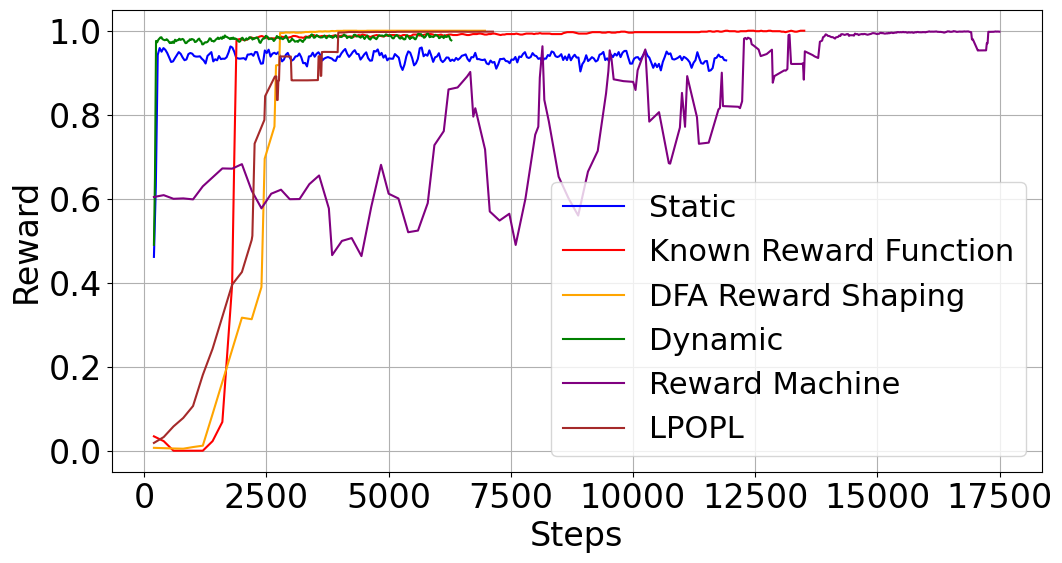}
  \label{fig:continuous_cumulative_craftsman}
\end{minipage}%
\hfill
\begin{minipage}{0.49\columnwidth}
  \centering
  \includegraphics[width=0.99\columnwidth]{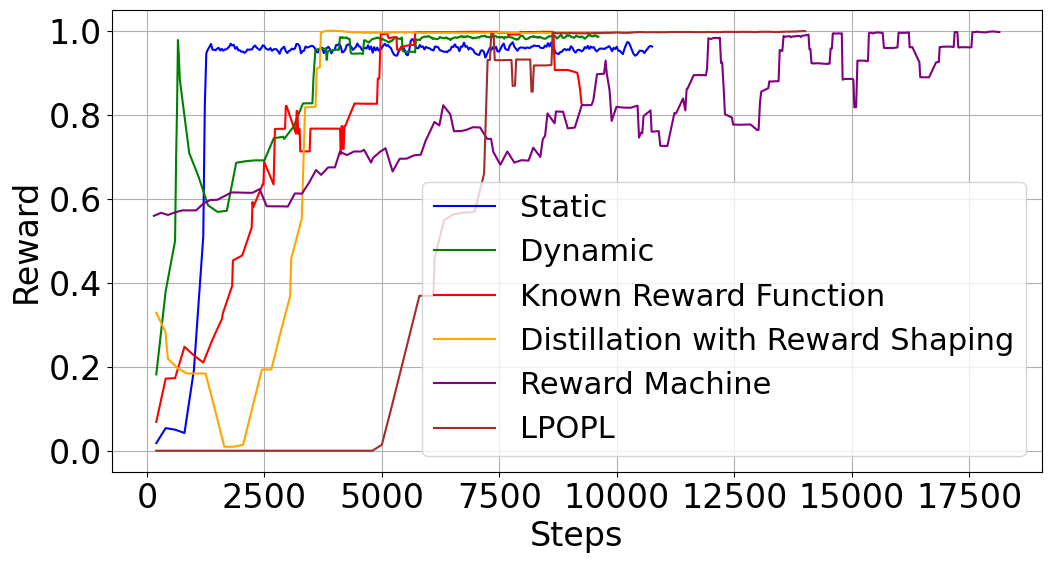}
  \label{fig:continuous_cumulative_bridge}
\end{minipage}
\caption{\small{Reward per cumulative steps for all environments in continuous state spaces.}}
\label{fig:continuous_reward_step_all}
\end{figure}


\begin{figure}[!htbp]
\centering
\begin{minipage}{0.49\columnwidth}
  \centering
  \includegraphics[width=0.99\columnwidth]{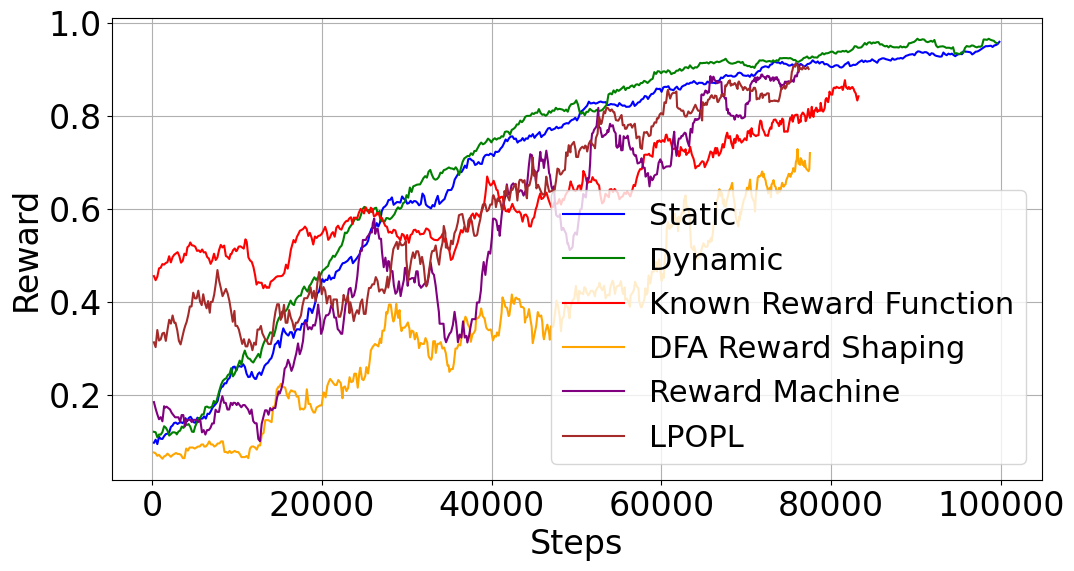}
  \label{fig:extended_mountain_car}
\end{minipage}%
\hfill
\begin{minipage}{0.49\columnwidth}
  \centering
  \includegraphics[width=0.99\columnwidth]{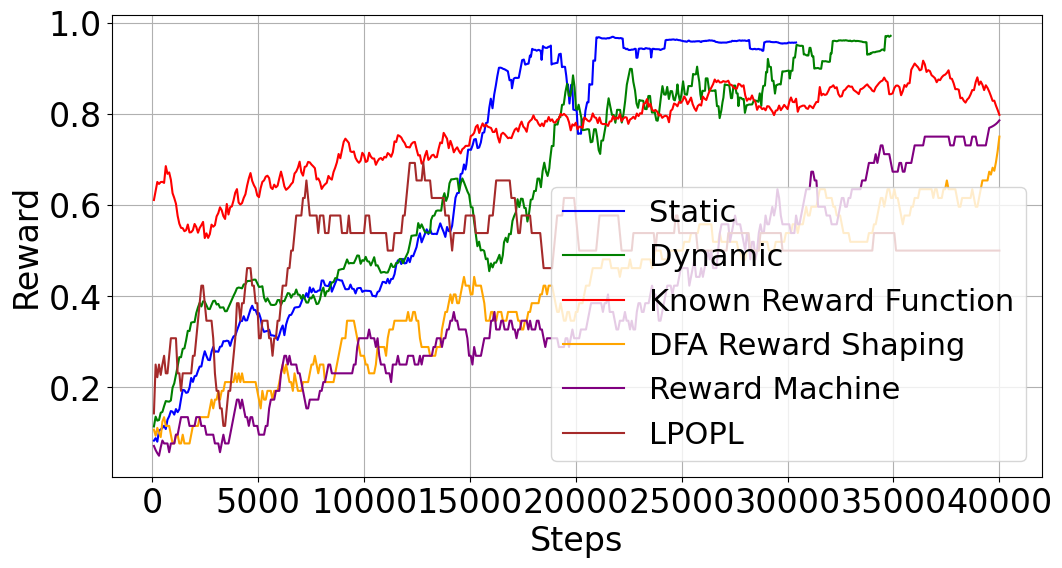}
  \label{fig:extended_warehouse}
\end{minipage}

\caption{\small{Reward per cumulative steps for Mountain Car Collection (left) and Warehouse Robotics 12D environment (right).}}
\label{fig:extended_domains_results}
\end{figure}

\textbf{Combined scoring with transfer.}
We evaluate the effectiveness of combining subtask-based preferences with transition-value-based knowledge transferred from a teacher agent using the scoring function \eqref{eq:score_transfer_learning}. 
By integrating the teacher’s automaton values into preference scoring, we enhance a student agent's learning in a more complex environment,
enabling the student agent to perform well in challenging scenarios where environment-provided rewards are unavailable. 
The DFA is augmented with values distilled from a teacher agent trained in the original environments (see \eqref{eq:aut_qvals}). We then leverage these learned DFA Q-values to student variants of the four environments, scaled up to larger grids (10$\times$10, 15$\times$15, 12$\times$12, 20$\times$20) and featuring additional obstacles or modified subgoal placements as shown in Fig.~\ref{fig:student_environments}. 
Although they retain the same subgoal structure, these expanded layouts pose a greater navigational challenge and demand more complex policy adaptation.

To evaluate our approach, we compare the performance of several student agents. In \textbf{Static (Pref)} and \textbf{Dynamic (Pref)}, the student leverages distilled teacher preferences under the static and dynamic methods, respectively. We also introduce \textbf{Static (Pref+Plan)} and \textbf{Dynamic (Pref+Plan)}, where the DFA’s Q-values guide both preference scoring and planning. Specifically, the student’s Q-learning update rule is modified by an annealing mechanism:
\begin{align}
Q'_{\text{student}}((s,q), a) 
= \beta(q, L(s')) \,\overline{Q}_{\text{teacher}}(q, L(s'))
+ \bigl(1 - \beta(q, L(s'))\bigr)\,Q_{\text{target}},
\end{align}
where $Q'_{\text{student}}$ is the adjusted Q-value in the product MDP. The weight, 
$
\beta(q, L(s')) = \rho^{\,n_{\text{student}}(q, L(s'))}
$,
decays over time (with $\rho \in (0,1)$), controlling the degree to which the student relies on the teacher’s knowledge, where $n_\text{student}(q,\sigma)$ is the frequency of automaton transition $(q,\sigma)$ 
in the student environment. The standard target Q-value 
is
$
Q_{\text{target}} 
= r + \gamma \,\max_{a'}\,Q_{\text{student}}((s',q'), a')$. By relying more on the teacher’s Q-values early in training and gradually shifting to its own experience, the student smoothly transitions from teacher-guided to autonomous learning. Here, $r$ refers to the output of the learned reward model (trained on DFA-derived preferences) rather than environment rewards. See Appendix \ref{appdx:knowledge_transfer} for more details.

\begin{figure}[!htbp]
\centering

\begin{minipage}{0.49\columnwidth}
  \centering
  \includegraphics[width=0.99\columnwidth]{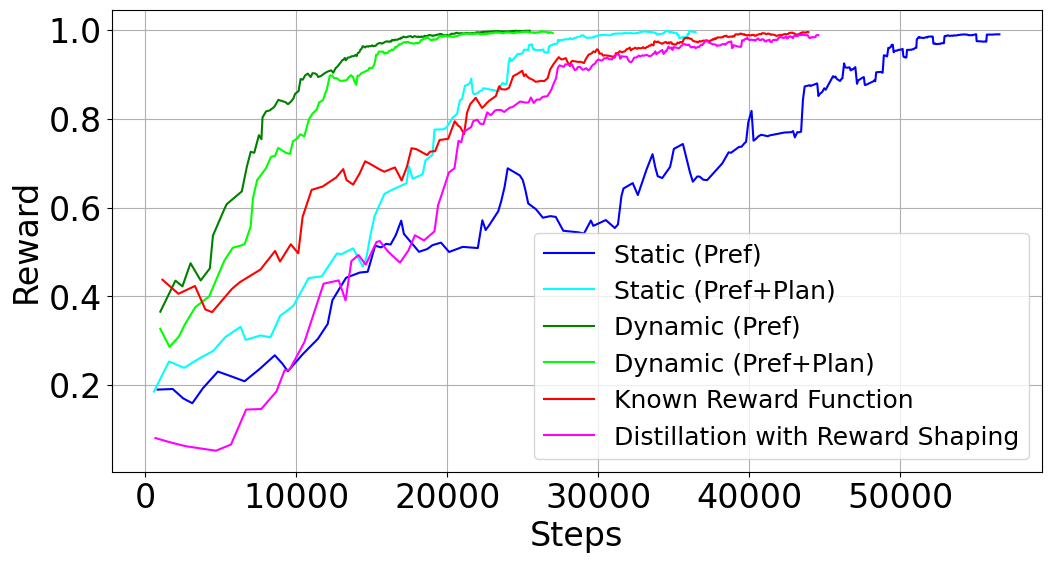}

  \label{fig:adjusted_reward_per_steps_env1}
\end{minipage}%
\hfill
\begin{minipage}{0.49\columnwidth}
  \centering
  \includegraphics[width=0.99\columnwidth]{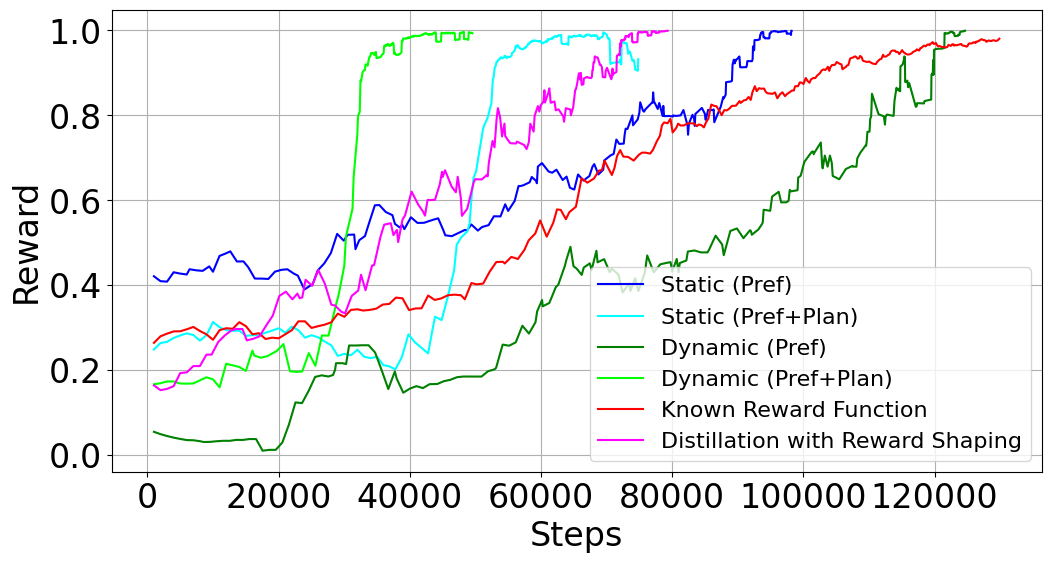}

  \label{fig:adjusted_reward_per_steps_env2}
\end{minipage}

\begin{minipage}{0.49\columnwidth}
  \centering
  \includegraphics[width=0.99\columnwidth]{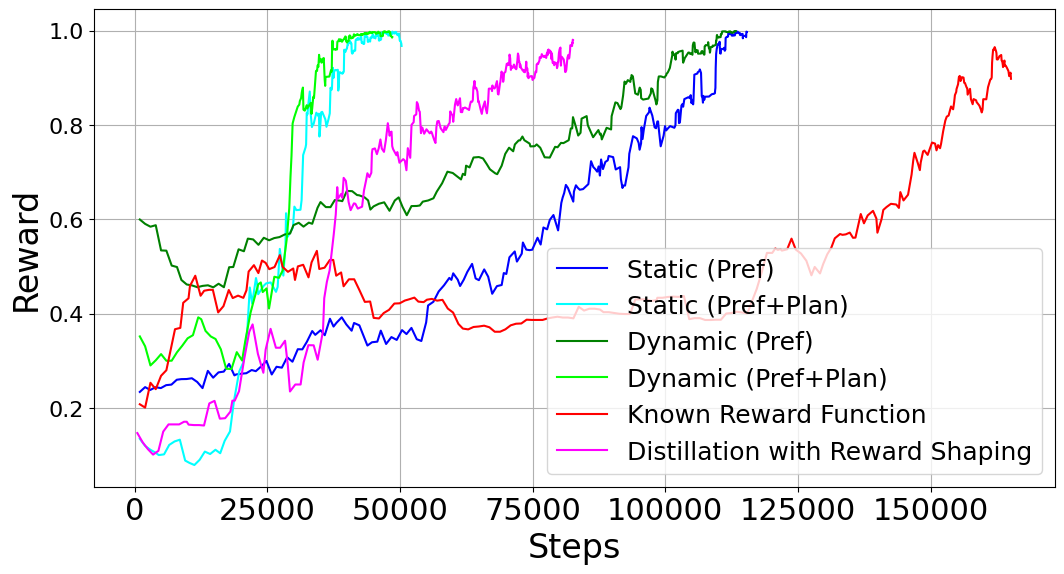}

  \label{fig:adjusted_reward_per_steps_env3}
\end{minipage}%
\hfill
\begin{minipage}{0.49\columnwidth}
  \centering
  \includegraphics[width=0.99\columnwidth]{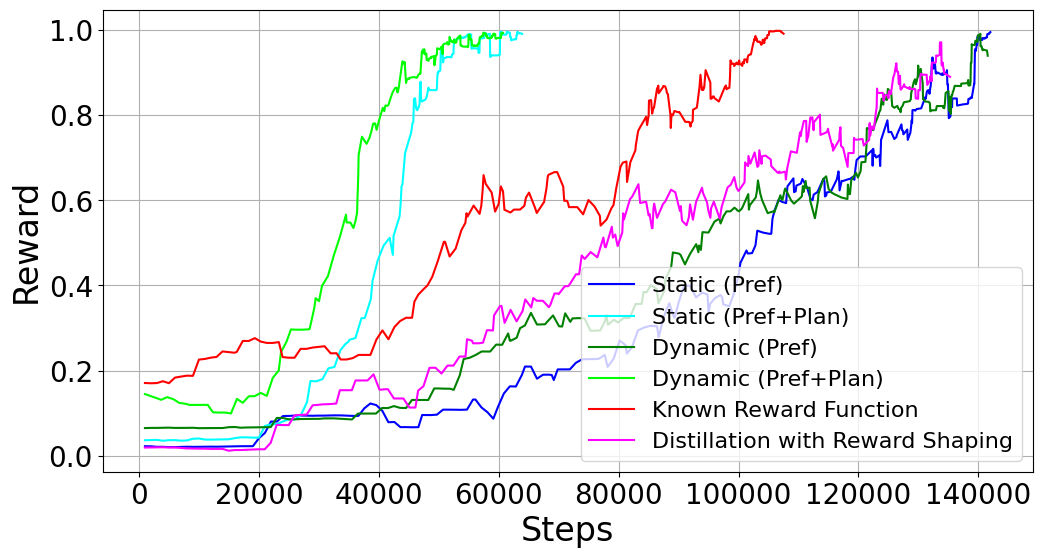}

  \label{fig:adjusted_reward_per_steps_env4}
\end{minipage}
\caption{\small{Reward per step for different methods and environments.}}
\label{fig:adjusted_reward_per_steps_all}

\end{figure}

Figure~\ref{fig:adjusted_reward_per_steps_all} shows that distillation agents consistently outperform non-distilled ones in reward per cumulative steps, highlighting the efficiency gained from teacher-derived automaton knowledge. The distillation process transfers subgoal and automaton transition expertise to the student, guiding it via distilled Q-values in both preference generation and policy optimization. This leads to faster convergence by aligning trajectories with the task's temporal structure.

\textbf{Validation of Automaton Transition Scoring.}
\label{validation}
To validate that our automaton transition scoring produces meaningful preference orderings, we conducted two complementary empirical analyses in the Dungeon Quest environment.

\textit{Trajectory Quality Correlation Analysis:}
We first collected 200 trajectories with varying degrees of task completion and computed (i) the automaton transition score from Equation~\eqref{eq:score_transfer_learning}, (ii) the actual cumulative reward received, and (iii) the task completion level (number of subgoals achieved).

The automaton transition scores exhibit strong positive correlation with both cumulative reward (Pearson $r=0.87$, $p<0.001$) and task completion (Spearman $\rho=0.92$, $p<0.001$). Notably, trajectories that complete more subgoals consistently receive higher scores, and among trajectories with equal subgoal completion, those with more efficient paths (higher cumulative reward) score higher.

\textit{Endpoint Future Performance Analysis:}
To more rigorously test whether our scoring captures meaningful task progress, we conducted an endpoint future performance analysis. The key question is: \emph{Do trajectories with higher automaton transition scores leave the agent in states from which it is easier to reach acceptance (task completion)?}

For 200 partial trajectories of varying quality, we:
\begin{enumerate}
    \item Computed the automaton transition score $\sum_{t=0}^{T-1} Q_{\text{dfa}}(q_t, \sigma_t)$ for each trajectory
    \item From each trajectory's endpoint (final state and DFA state), ran 50 greedy policy rollouts
    \item Measured: (a) future success rate (fraction reaching acceptance), (b) average future cumulative reward
\end{enumerate}

The results strongly validate our scoring
\begin{itemize}
    \item Automaton scores strongly predict future success rate ($r=0.84$, $p<0.001$)
    \item Automaton scores positively correlate with future cumulative reward ($r=0.85$, $p<0.001$)
\end{itemize}

Comparing high-scoring endpoints (above median score) to low-scoring endpoints reveals substantial performance differences. High-scoring trajectories achieve a 29\% higher success rate (0.94 vs 0.73, Mann-Whitney $U$-test $p<0.001$).

These results demonstrate that $\sum_t Q_{\text{dfa}}(q_t, \sigma_t)$ 
successfully captures meaningful task progress and induces preference orderings that align with both immediate task objectives (reward and completion) and future task completion potential.

\section{Discussion and Limitations}

While our automaton-based preference framework performs well across tasks, there are some limitations. Constructing DFAs for large-scale environments can be difficult, motivating future work on learning or refining automata from demonstrations or natural language. Our preference generation uses simple heuristics like subgoal completion and distance, which may not suffice in complex domains with non-Euclidean spaces or interleaved goals; richer models offer a natural extension. Although we generalize to continuous and high-dimensional domains, scaling to very high-dimensional spaces poses challenges for both DFA representation and preference learning. Appendix~\ref{appdx:hi_dim} outlines theoretical tools (including automaton-guided attention and transfer learning) with formal guarantees, though empirical validation is pending. Our current framework also assumes full observability; extending to partially observable settings will require belief tracking or state estimation as discussed in Appendix~\ref{appdx:pomdp}.

\section{Conclusion}

We introduced an RL framework that achieves automaton-based alignment, using DFAs for preference rankings to train predictive reward models. We explored static and dynamic variants, benchmarking them against policies using a designed reward function, distillation with reward shaping, reward machines, and LTL-based approaches. Our method demonstrates superior performance across both discrete and continuous environments, highlighting the advantage of preference-based learning over direct reward specification. Our approach captures task-specific temporal dependencies, translating complex sequential objectives into structured, preference-based rewards. The results highlight the advantage of automata-derived reward shaping, particularly in tasks with multiple subgoals and evolving policy requirements. The success of both dynamic and static methods over the baselines underscores the value of integrating structured preferences for more adaptive and efficient RL.

\section{Acknowledgments}
This work was supported by DARPA under Agreement No. HR0011-24-9-0427 and NSF under Award CCF-2106339.

\printbibliography

\appendix

\newpage
\section{Distillation with Reward Shaping}
\label{appdx:distill}

Here, we provide more details on the \emph{Distillation with Reward Shaping} method. In contrast to the preference-based algorithms or known reward function method, this approach relies on a given automaton structure to shape a minimal numeric reward. It neither compares trajectories pairwise nor uses a handcrafted reward function. Instead, it leverages automaton transitions to provide positive feedback upon subgoal completion.

\subsection{Method Overview}
We assume the agent must satisfy a sequence of subgoals described by a DFA $\mathcal{A}$. Instead of assigning custom numeric rewards for each subgoal or comparing pairs of trajectories, the agent obtains:
\begin{itemize}
    \item \(+1\) reward whenever it triggers an automaton transition associated with completing a subgoal step;
    \item \(-0.1\) penalty per action step otherwise, discouraging wandering.
\end{itemize}
No additional reward shaping is introduced. Thus, once the agent transitions from automaton state \(\omega\) to \(\omega'\), it knows it has \emph{correctly} advanced the subgoal sequence. If \(q'\) is an accepting automaton state, the episode terminates successfully.

\begin{remark}
If the automaton is \emph{learned} (e.g., via RPNI~\cite{Oncina1992}), the learning step happens \emph{outside} the RL loop. In our experiments, we treat the automaton as \emph{given}, so the policy training procedure focuses on the environment plus this minimal numeric shaping.
\end{remark}

\subsection{Knowledge Transfer (Teacher--Student Setup)}
\label{appdx:knowledge_transfer}
To accelerate learning in more complex environments, we adopt a teacher--student paradigm:
\begin{enumerate}
    \item \textbf{Teacher phase (simpler or well-understood environment).} We run Q-learning in a product MDP \(\mathcal{M}_d \otimes \mathcal{A}\), where \(\mathcal{M}_d\) is a discrete ``teacher'' environment with the same subgoal structure. The teacher records Q-values for each automaton transition, effectively capturing how ``valuable'' it was to advance subgoals from one automaton state \(\omega\) to another.

    \item \textbf{Distillation of automaton transitions.} We aggregate the teacher's Q-values over transitions \((q, \sigma)\), where \(\sigma\) is the atomic proposition labeling the next subgoal. This yields an \emph{average} or \emph{representative} value, \(\overline{Q}_{\text{teacher}}(q,\sigma)\), for each automaton transition.

    \item \textbf{Student phase (target environment).} We then train a student agent in the product MDP \(\mathcal{M}_s \otimes \mathcal{A}\), where \(\mathcal{M}_s\) is the new (potentially more complex) environment but \emph{with the same} automaton \(\mathcal{A}\). The student still uses minimal numeric shaping (\(+1\) on subgoal transitions, \(-0.1\) otherwise), \emph{but} it incorporates the teacher's knowledge in its Q-updates:
    \[
    Q'_{\text{student}}((s,q), a) 
    = \beta(q, \sigma) \overline{Q}_{\text{teacher}}(q, \sigma)
    + \bigl(1 - \beta(q, \sigma)\bigr) \Big( r + \gamma \max_{a'} Q_{\text{student}}((s',q'), a') \Big),
    \]
    where \(\beta(q, \sigma) = \rho^{\eta(q, \sigma)}\) controls the balance between the teacher's guidance and the student's own experience, annealing as the agent visits each automaton transition more frequently (\(\eta(q, \sigma)\) is the visit count).

    \item \textbf{Policy optimization.} The student updates its Q-table as:
    \[
    Q((s_t,q_t), a_t) \leftarrow Q((s_t,q_t), a_t) + \alpha \Big( r(q_t,\sigma_t) 
    + \gamma \max_{a'} Q((s_{t+1},q_{t+1}), a') - Q((s_t,q_t), a_t) \Big),
    \]
    gradually converging to a policy \(\pi^*\) that balances teacher guidance with environment-specific exploration.
\end{enumerate}

\subsection{Relation to Our Main Approaches}
Unlike the \textbf{static} and \textbf{dynamic} preference-based methods, which \emph{learn} a reward model from trajectory comparisons, \emph{Distillation with Reward Shaping} simply exploits the automaton structure to produce local numeric signals. It also differs from the \textbf{known reward function} approach in that we do not hand-specify the reward terms for each subgoal or final goal. Instead, the DFA transitions themselves indicate subgoal completions, providing minimal positive increments to guide the agent. Experiments illustrate how this method can serve as a teacher for subsequent knowledge transfer, ultimately enabling efficient learning in new or larger environments with the same subgoal logic.

\section{Reward Machine Baseline}
\label{appdx:rm}

Here we provide more details on the \emph{Reward Machine (RM)} baseline method. This approach directly defines rewards based on automaton state transitions, offering a formalized alternative to preference-based learning for non-Markovian tasks.

\subsection{Method Overview}
RMs provide an interpretable mechanism for encoding non-Markovian reward functions~\cite{Icarte2022reward, ToroIcarte2018UsingRL}. An RM is defined as a tuple $\mathcal{R} = (U, u_0, F, \delta_u, \delta_r)$, where:
\begin{itemize}
    \item $U$ is a finite set of states representing task progression
    \item $u_0 \in U$ is the initial state
    \item $F \subseteq U$ is the set of terminal states
    \item $\delta_u: U \times \mathcal{L} \rightarrow U$ is the state-transition function triggered by environmental events in $\mathcal{L}$
    \item $\delta_r: U \times \mathcal{L} \times U \rightarrow \mathbb{R}$ is the reward-transition function
\end{itemize}

Here, $\mathcal{L}$ denotes the set of high-level events or labels that can be detected in the environment. For example, in our tasks, $\mathcal{L}$ includes events such as "wood" (for collecting wood), "key" (for picking up a key), or "factory" (for reaching the factory location). These events serve as triggers for state transitions in the reward machine and are extracted from the environment state through a labeling function.

In our implementation, the agent obtains:
\begin{itemize}
    \item \(+5.0\) reward for each valid transition that represents subgoal completion in the correct order
    \item \(-1.0\) penalty for attempting to complete subgoals out of order
    \item \(-0.1\) penalty per action step to encourage efficiency
    \item \(+10.0\) reward upon reaching the terminal state after all subgoals are completed in order
\end{itemize}

This approach explicitly encodes the temporal structure of the task, creating a more informative reward landscape while maintaining the logical constraints of sequential subtask completion~\cite{Icarte2022reward}.

\subsection{Implementation Details}
Learning proceeds in the product MDP $\mathcal{M}_{\text{prod}} = \mathcal{M} \times \mathcal{R}$, where the agent's state is augmented with the current RM state. This creates a Markovian decision process from the originally non-Markovian task, as the RM state captures the necessary history information.

For policy learning in this product MDP, we use Q-learning with the following update rule:
\[
Q((s,u), a) \leftarrow Q((s,u), a) + \alpha \left( r + \gamma \max_{a'} Q((s',u'), a') - Q((s,u), a) \right)
\]
where $(s,u)$ is the current state-RM state pair, $a$ is the action, $r$ is the reward received from the RM's reward-transition function $\delta_r$, and $(s',u')$ is the next state-RM state pair with $u' = \delta_u(u, L(s'))$, where $L(s')$ maps the environment state to relevant atomic propositions.

\subsection{Relation to Our Approach}
While both the RM baseline and our preference-based method leverage automaton structures to handle non-Markovian rewards, they differ fundamentally in their approach:

\begin{itemize}
    \item \textbf{Reward Specification:} RMs directly map automaton transitions to specific numeric rewards, requiring careful engineering of reward values. Our approach learns these values from preferences, eliminating the need for manual reward tuning.
    
    \item \textbf{Learning Mechanism:} RMs create a product MDP where learning occurs in an augmented state space. Our method separates preference generation from reward learning, enabling more flexible adaptation to task structure.
    
    \item \textbf{Transferability:} While both approaches can leverage automaton structures for transfer learning, our method's preference-based nature provides additional flexibility when transferring across environments with different dynamics but the same logical structure.
\end{itemize}

The RM baseline represents a state-of-the-art approach for structured reward specification in complex non-Markovian tasks, providing a formal alternative to our preference-based method.

\section{LTL-Guided Reinforcement Learning (LPOPL) Baseline}
\label{appdx:lpopl}

Here we provide more details on the \emph{LTL-Guided RL (LPOPL)} baseline method. This approach leverages Linear Temporal Logic (LTL) to guide RL through formal task specifications, providing theoretically-grounded task guidance without relying on preference elicitation.

\subsection{Method Overview}
Logic-Guided Policy Optimization~\cite{Li2017TLTL} uses LTL formulas to encode temporal task requirements. For sequential subgoal tasks, we encode the requirement as an LTL formula:
\[
\varphi = \Diamond(g_1 \land \Diamond(g_2 \land \Diamond(\ldots \land \Diamond g_n)))
\]
where $\Diamond$ is the "eventually" operator and $g_i$ represents the $i$-th subgoal. This formula is then translated into a Deterministic Finite Automaton (DFA) $\mathcal{A}_\varphi = (Q, q_0, F, \Sigma, \delta)$, where states in $Q$ track progress toward satisfying $\varphi$.

The key innovation in LPOPL is the use of potential-based reward shaping~\cite{Ng1999PolicyInvariant} derived from the automaton structure:
\begin{itemize}
    \item A potential function $\Phi: Q \rightarrow \mathbb{R}$ assigns higher values to states closer to acceptance
    \item The shaped reward is defined as $R'(s, a, s') = R(s, a, s') + \gamma \Phi(\delta(q, L(s'))) - \Phi(q)$
    \item A small negative base reward $R(s, a, s') = -0.1$ encourages efficient paths
    \item Terminal states that satisfy the LTL formula receive a large positive reward (+10.0)
\end{itemize}

This approach preserves optimality guarantees while providing denser feedback aligned with the LTL specification~\cite{Camacho2019LTLMOP}.

\subsection{Implementation Details}
We implement the potential function $\Phi$ based on the shortest path distance from each automaton state to an accepting state:
\[
\Phi(q) = 
\begin{cases}
    \max_{q' \in F} \{d(q, q')\} & \text{if } q \notin F \\
    0 & \text{if } q \in F
\end{cases}
\]
where $d(q, q')$ is the minimum number of transitions required to reach $q'$ from $q$ in the automaton. This creates a potential gradient that guides the agent toward satisfying the LTL specification.

The agent learns in the product MDP between the environment and the DFA, with the following Q-learning update:

\[
Q((s,q), a) \leftarrow Q((s,q), a) + \alpha \left( R'(s, a, s') + \gamma \max_{a'} Q((s',q'), a') - Q((s,q), a) \right)
\]
where $(s,q)$ is the current state-automaton state pair, $a$ is the action, $R'$ is the shaped reward, and $(s',q')$ is the next state-automaton state pair with $q' = \delta(q, L(s'))$.

\subsection{Relation to Our Approach}
Both LPOPL and our preference-based method use automaton structures to address non-Markovian rewards, but they differ in several key aspects:

\begin{itemize}
    \item \textbf{Formal Specification:} LPOPL requires LTL formulas to be manually translated into reward signals through potential functions. Our approach learns reward functions directly from automaton-generated preferences without requiring explicit reward engineering.
    
    \item \textbf{Reward Shaping:} LPOPL uses potential-based reward shaping to create dense rewards that preserve optimality. Our method learns a reward model directly from pairwise preferences, avoiding the need for careful shaping design.
    
    \item \textbf{Theoretical Properties:} LPOPL provides theoretical guarantees on optimality preservation through proper potential-based shaping. Our approach offers convergence guarantees based on preference consistency and sufficient exploration.
\end{itemize}

\section{Additional Experimental Results}
\label{appdx:exp}

This appendix provides additional experimental results that complement the findings presented in the main text. While the main paper focuses on reward per cumulative steps, here we present the complete set of metrics: reward per episode and steps per episode.

Figure~\ref{fig:reward_per_episode_all} shows \textbf{the reward per episode} for the four environments considered. The \emph{dynamic method} achieves stronger long-term gains through iterative refinement, though it starts with less stability. The \emph{static method} converges rapidly with a fixed learned reward, often matching or slightly outperforming the \emph{known reward function} baseline initially. Ultimately, the dynamic method surpasses all once it completes enough preference-driven updates.

\begin{figure}[!htbp]
\centering
\begin{minipage}{0.48\columnwidth}
  \centering
  \includegraphics[width=0.99\columnwidth]{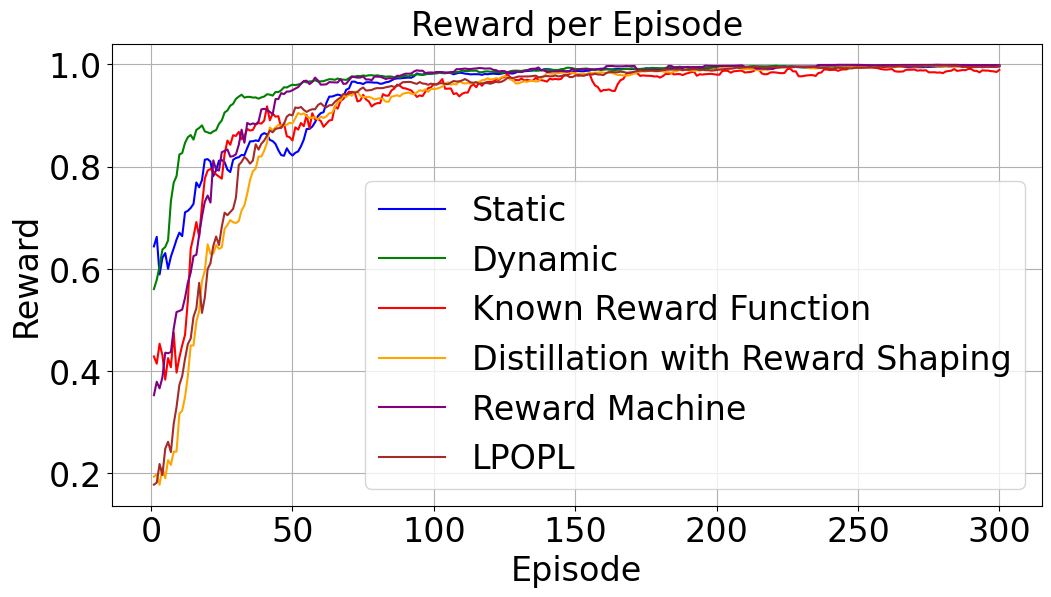}

  \label{fig:reward_minecraft}
\end{minipage}%
\hfill
\begin{minipage}{0.48\columnwidth}
  \centering
  \includegraphics[width=0.99\columnwidth]{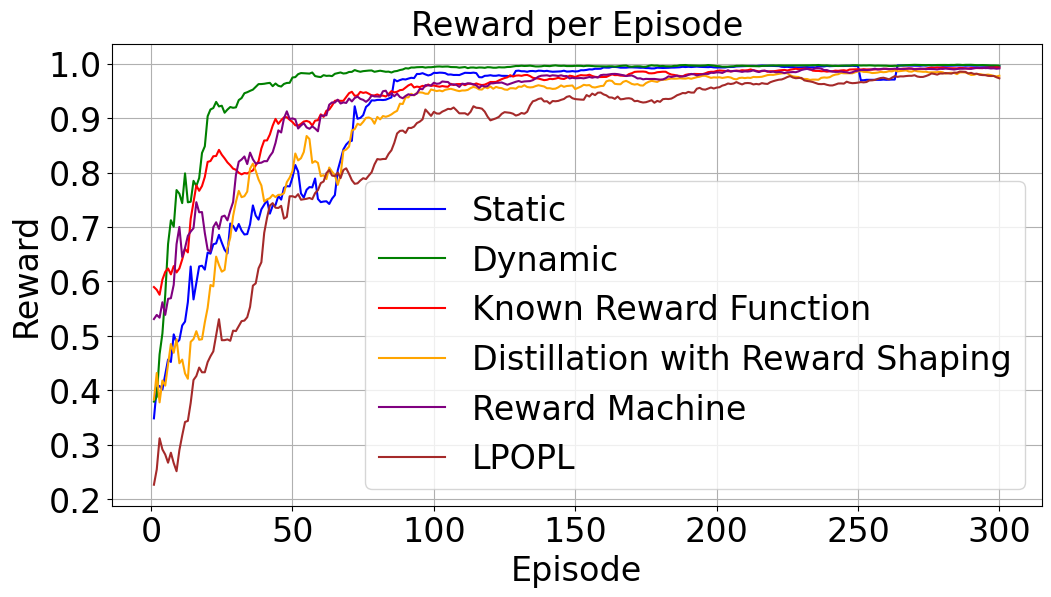}

  \label{fig:reward_dungeon}
\end{minipage}


\begin{minipage}{0.48\columnwidth}
  \centering
  \includegraphics[width=0.99\columnwidth]{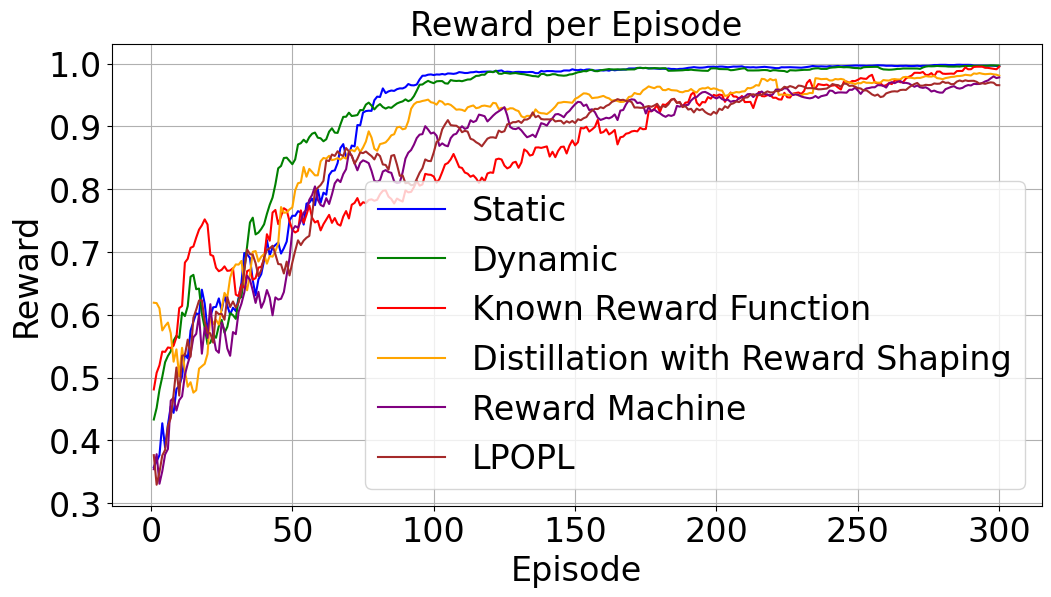}

  \label{fig:reward_craftsman}
\end{minipage}%
\hfill
\begin{minipage}{0.48\columnwidth}
  \centering
  \includegraphics[width=0.99\columnwidth]{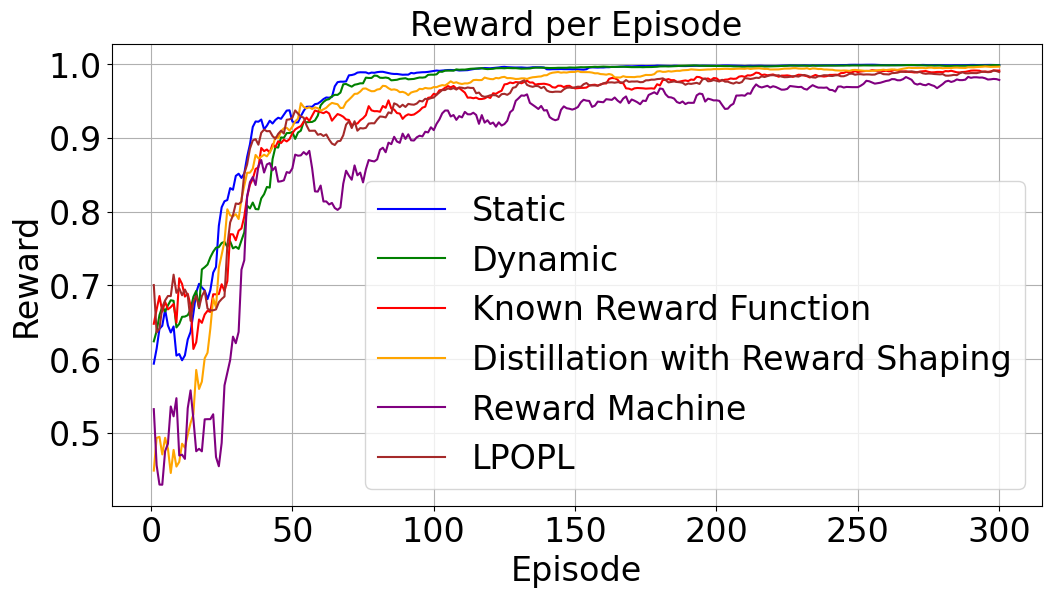}

  \label{fig:reward_bridge}
\end{minipage}
\vspace{-.3cm}
\caption{\small{Reward per episode for all environments.}} 
\label{fig:reward_per_episode_all}
\end{figure}

The \textbf{steps per episode} for each environment are shown in Figure~\ref{fig:steps_per_episode_all}. The static method tends to require fewer steps early on once the reward function is learned, but lacks further refinement. The dynamic method initially takes extra steps during its exploratory updates, yet ultimately converges to more efficient paths in two environments. However, in the other two environments, both methods still outperform the known reward function baseline. The \emph{distillation-based, reward machine and LTL based methods} do not converge as quickly as the dynamic method but maintains stable performance.

\begin{figure}[!htbp]
\centering
\begin{minipage}{0.48\columnwidth}
  \centering
  \includegraphics[width=0.99\columnwidth]{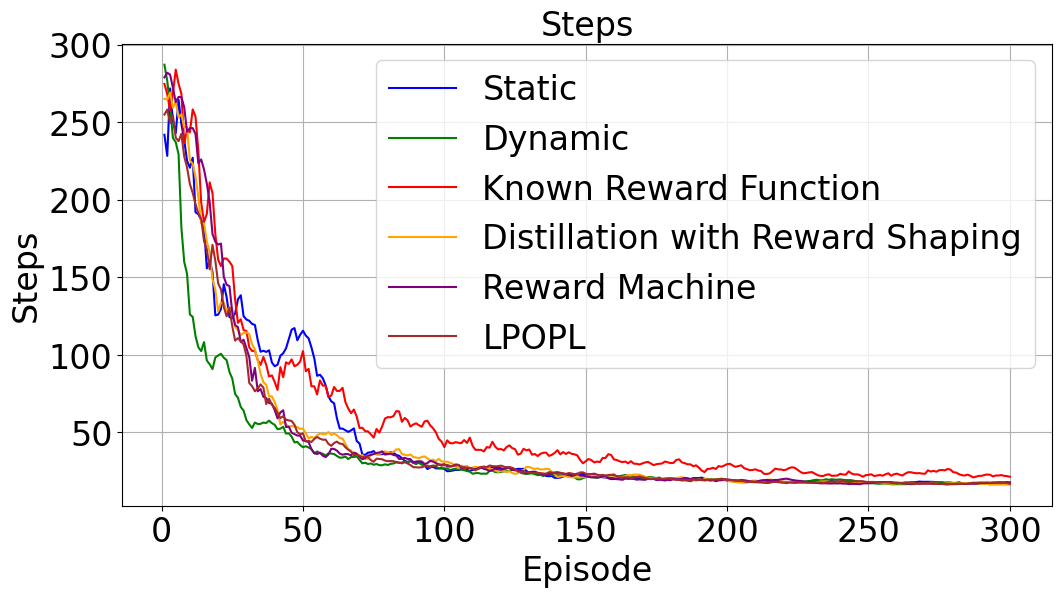}

  \label{fig:steps_minecraft}
\end{minipage}%
\hfill
\begin{minipage}{0.48\columnwidth}
  \centering
  \includegraphics[width=0.99\columnwidth]{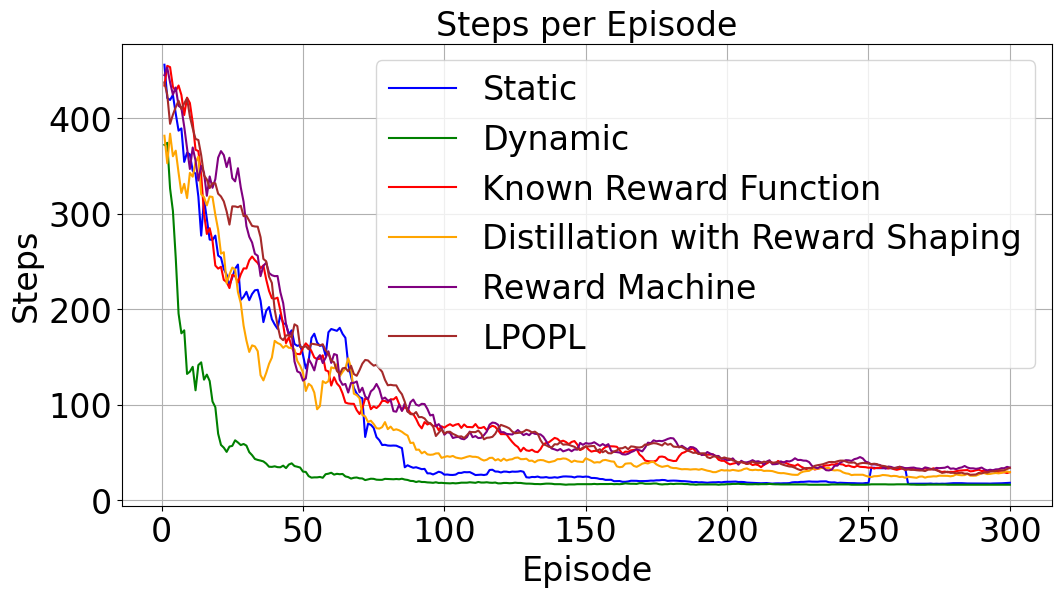}

  \label{fig:steps_dungeon}
\end{minipage}


\begin{minipage}{0.48\columnwidth}
  \centering
  \includegraphics[width=0.99\columnwidth]{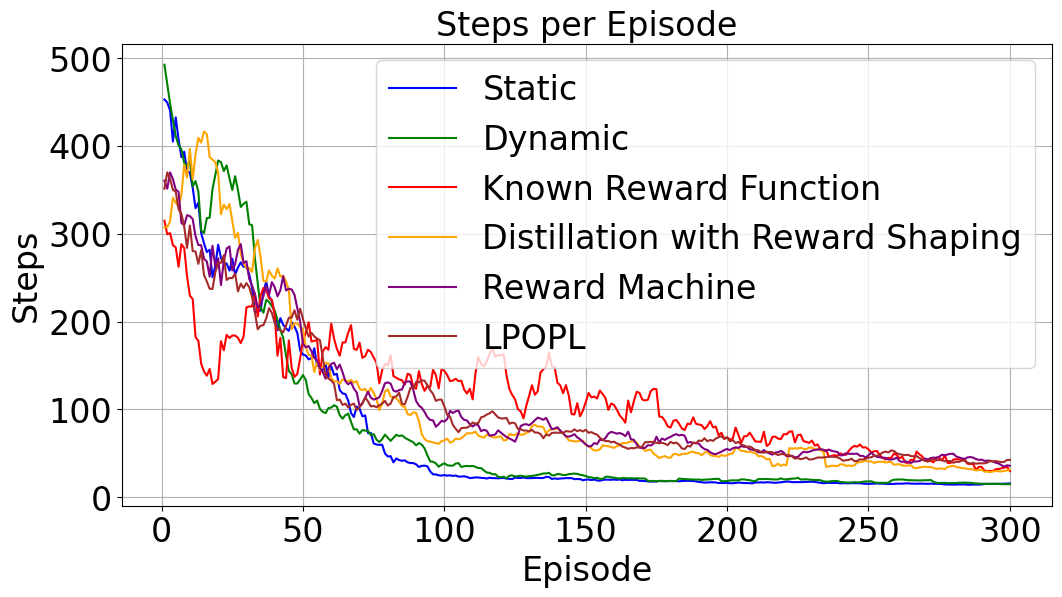}

  \label{fig:steps_craftsman}
\end{minipage}%
\hfill
\begin{minipage}{0.48\columnwidth}
  \centering
  \includegraphics[width=0.99\columnwidth]{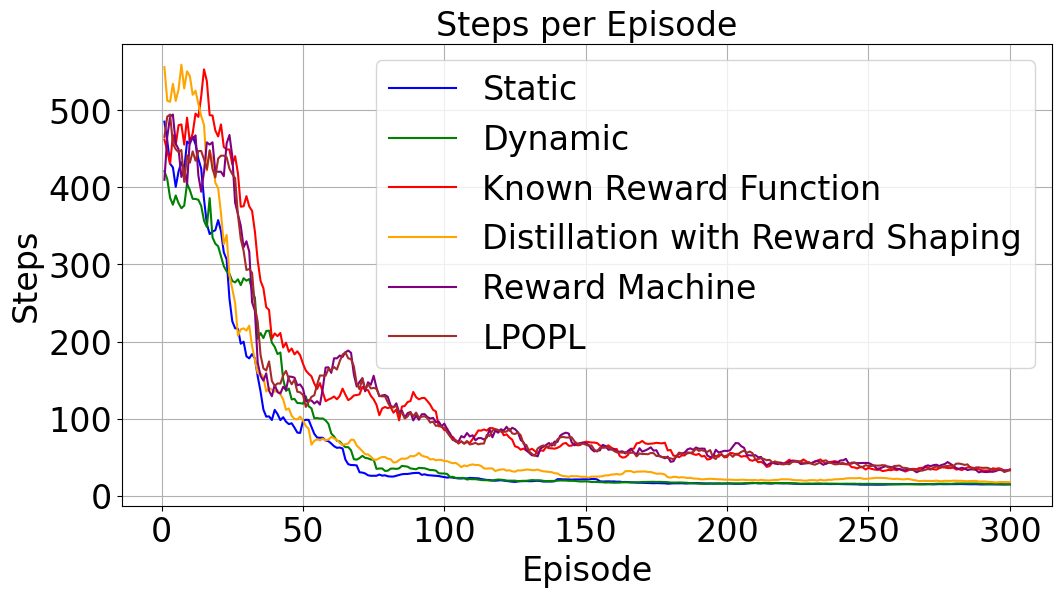}

  \label{fig:steps_bridge}
\end{minipage}
\vspace{-.3cm}
\caption{\small{Steps per episode for all environments.}}

\label{fig:steps_per_episode_all}
\end{figure}

\textbf{Combined scoring with transfer.}
We evaluate the effectiveness of combining subtask-based preferences with value-based knowledge transferred from a teacher using the scoring function.

\begin{figure}[!htbp]
\centering
\begin{minipage}{0.48\columnwidth}
  \centering
  \includegraphics[width=0.99\columnwidth]{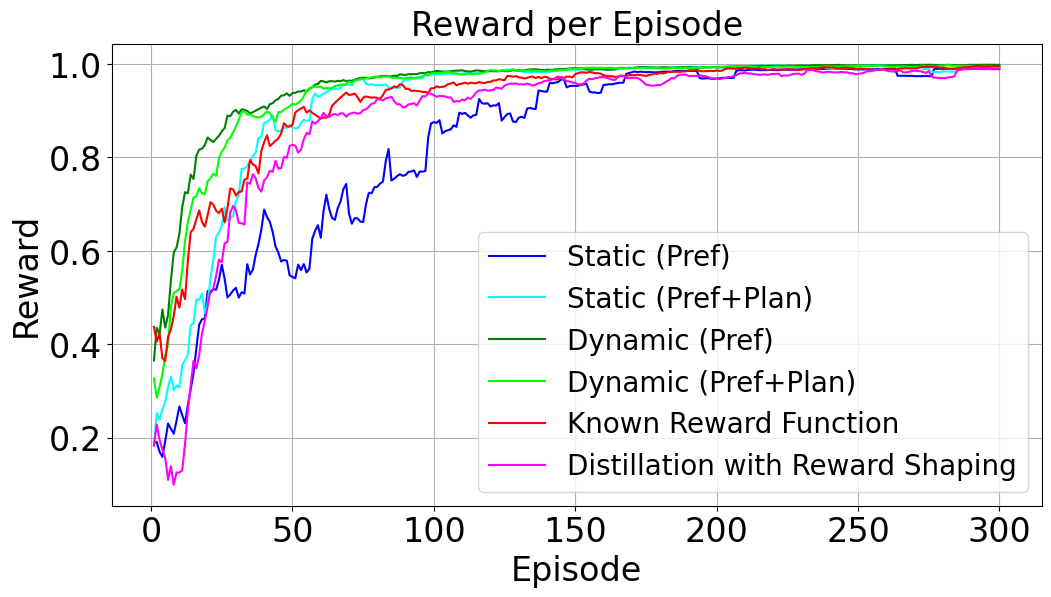}

  \label{fig:adjusted_reward_per_episode_env1}
\end{minipage}%
\hfill
\begin{minipage}{0.48\columnwidth}
  \centering
  \includegraphics[width=0.99\columnwidth]{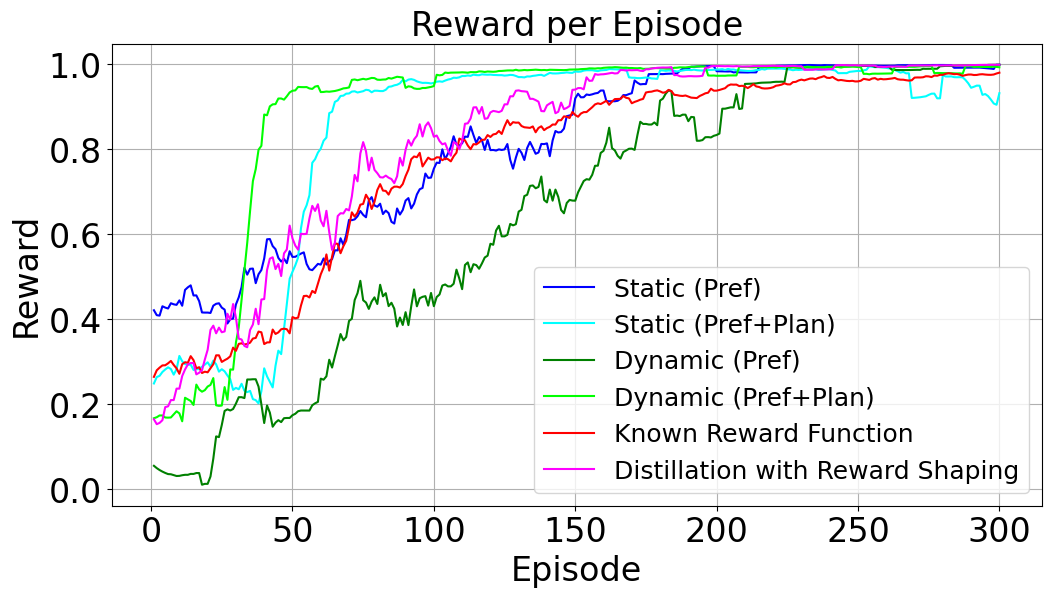}

  \label{fig:adjusted_reward_per_episode_env2}
\end{minipage}

 \vspace{0.5cm}

\begin{minipage}{0.48\columnwidth}
  \centering
  \includegraphics[width=0.99\columnwidth]{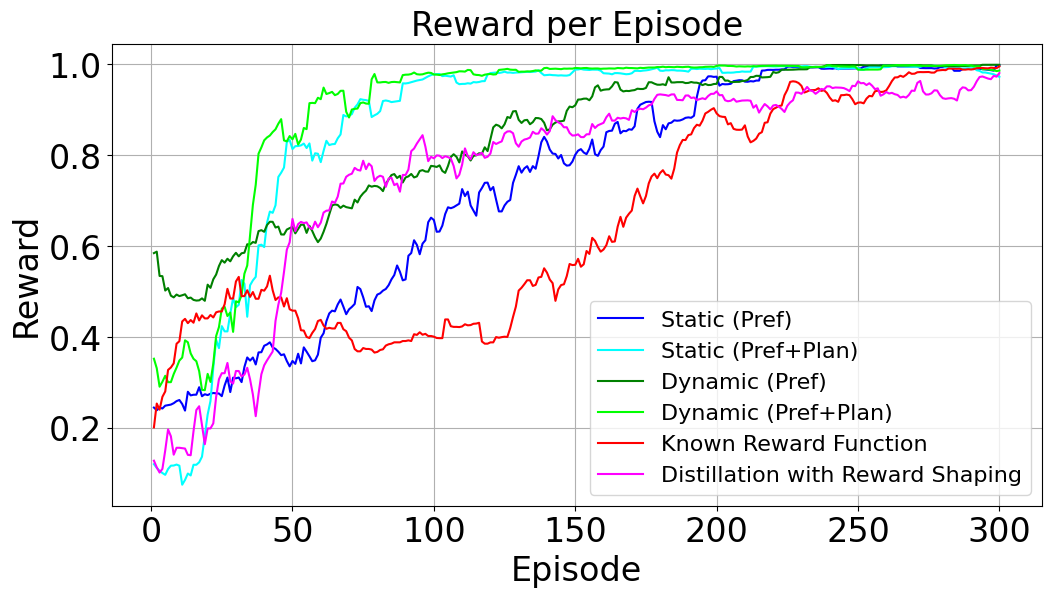}

  \label{fig:adjusted_reward_per_episode_env3}
\end{minipage}%
\hfill
\begin{minipage}{0.48\columnwidth}
  \centering
  \includegraphics[width=0.99\columnwidth]{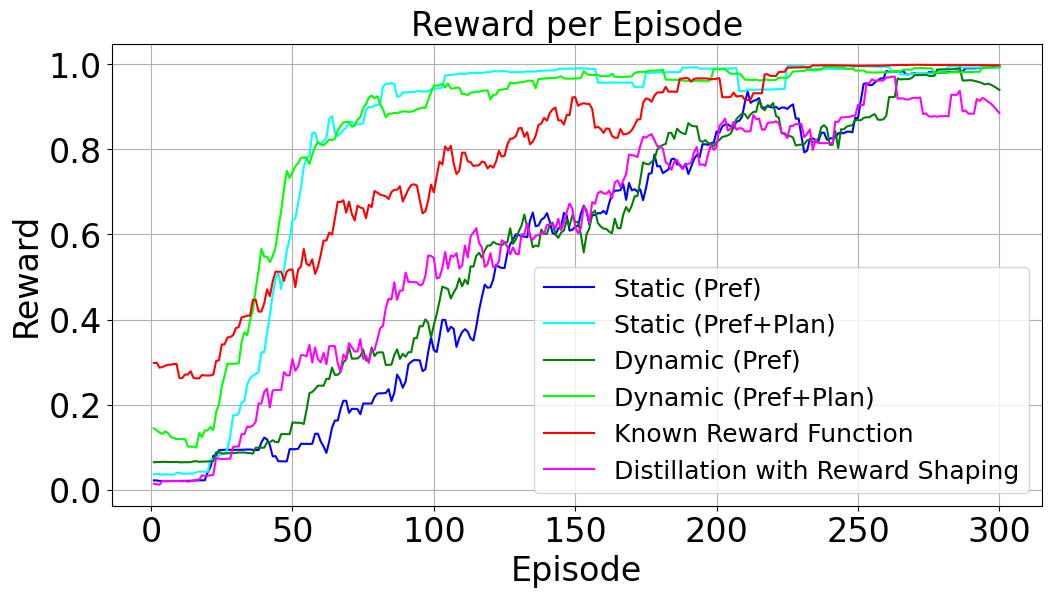}

  \label{fig:adjusted_reward_per_episode_env4}
\end{minipage}
\vspace{-.3cm}
\caption{\small{Reward per episode with combined scoring for all methods and environments.}}

\label{fig:reward_per_episode_all_transfer}
\end{figure}

\begin{figure}[!htbp]
\centering
\begin{minipage}{0.48\columnwidth}
  \centering
  \includegraphics[width=0.99\columnwidth]{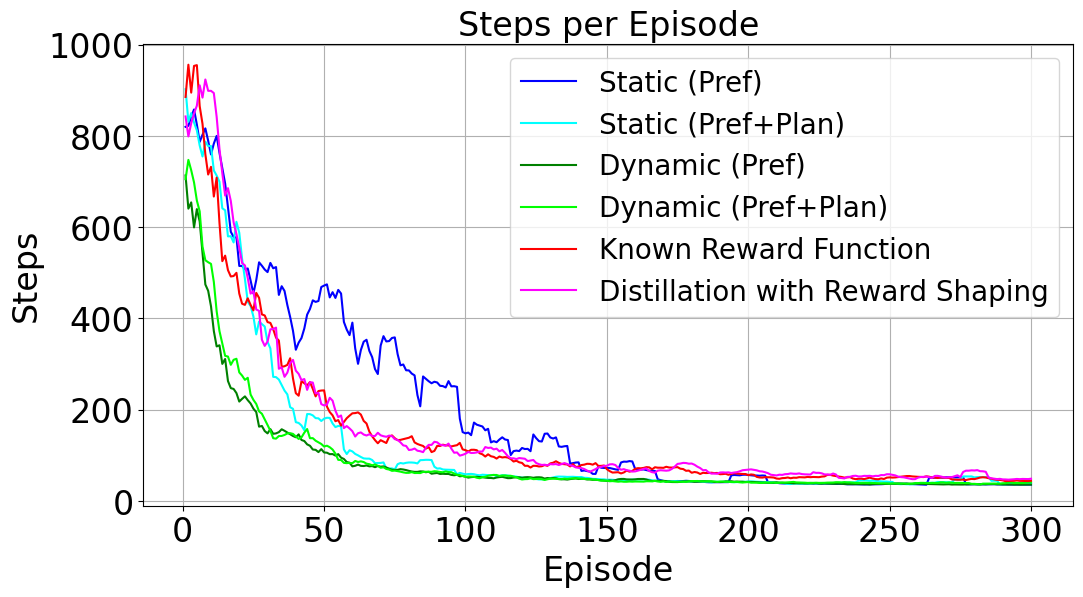}

  \label{fig:steps_per_episode_env1}
\end{minipage}%
\hfill
\begin{minipage}{0.48\columnwidth}
  \centering
  \includegraphics[width=0.99\columnwidth]{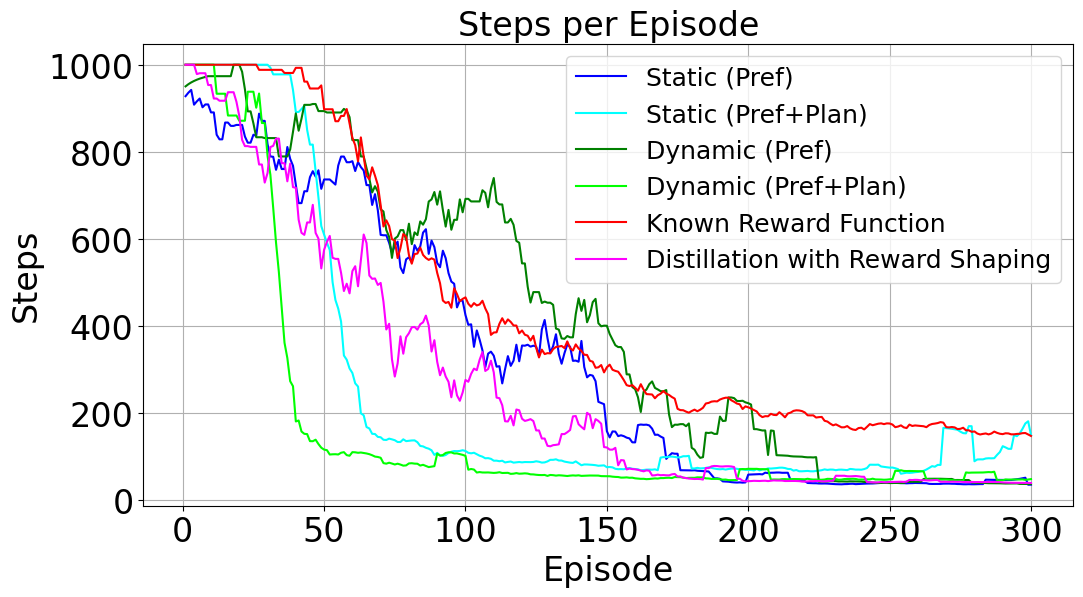}

  \label{fig:steps_per_episode_env2}
\end{minipage}


\begin{minipage}{0.48\columnwidth}
  \centering
  \includegraphics[width=0.99\columnwidth]{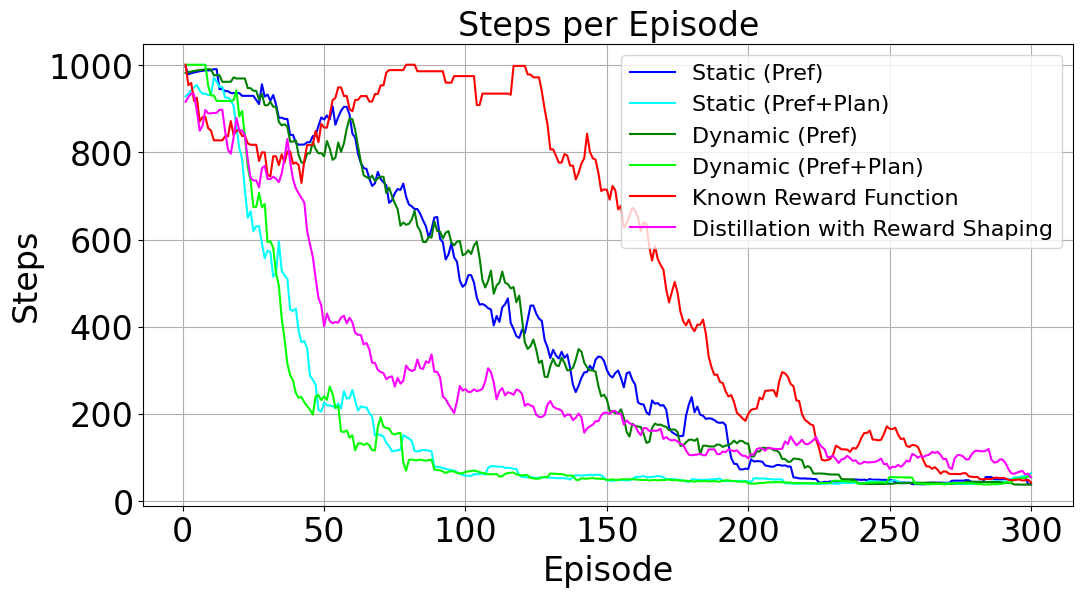}

  \label{fig:steps_per_episode_env3}
\end{minipage}%
\hfill
\begin{minipage}{0.48\columnwidth}
  \centering
  \includegraphics[width=0.99\columnwidth]{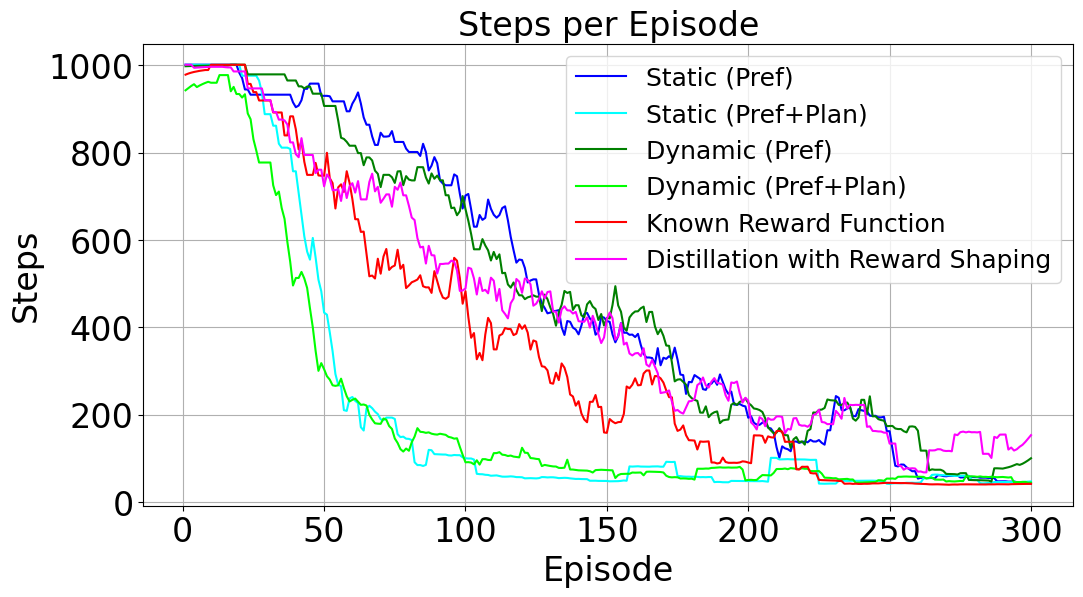}

  \label{fig:steps_per_episode_env4}
\end{minipage}
\vspace{-.3cm}
\caption{\small{Steps per episode with combined scoring for all methods and environments.}}

\label{fig:steps_per_episode_all_transfer}
\end{figure}

In all environments, distillation agents consistently outperform non-distilled methods, as indicated by higher episode rewards (Figure~\ref{fig:reward_per_episode_all_transfer}), fewer steps per episode (Figure~\ref{fig:steps_per_episode_all_transfer}), and a steeper reward increase over total steps (shown in the main text). These comprehensive results demonstrate the enhanced performance achieved by leveraging teacher-derived automaton knowledge across evaluation metrics.

\section{Belief-State Based Preference Elicitation for Partially Observable Environments}
\label{appdx:pomdp}

Agents may operate with limited information about their environment, in which case the fully observable assumption does not hold. We extend our framework to address this challenge by introducing a belief-state based preference elicitation approach for partially observable environments.

\textbf{Belief-State Based Scoring.}
In partially observable environments, the agent has limited information about the true state of the environment. We formalize a partially observable environment as a tuple $\mathcal{M}_{\text{po}} = (S, s_0, A, T, \Omega, O, R^*)$, where $S$, $s_0$, $A$, and $T$ are defined as before, $\Omega$ is the set of observations, $O: S \times A \to \Delta(\Omega)$ is the observation function mapping state-action pairs to distributions over observations, and $R^*$ is the unknown reward function.

Given that the agent cannot directly observe the full state, we maintain a belief state $b \in \Delta(S)$, representing a probability distribution over possible environment states. For each unobserved subgoal $g \in G$ (where $G$ is the set of subgoals defined by the DFA), we define a belief value $b(g) \in [0,1]$ indicating the agent's confidence that $g$ is at a particular location.

The belief-state trajectory scoring function integrates three components: (i) \textit{observed subgoal completion}, (ii) \textit{expected progress toward unobserved subgoals}, and (iii) \textit{information gain}. Formally, for a trajectory $\tau = (s_0, a_0, o_0, b_0, s_1, a_1, o_1, b_1, \ldots, s_T)$, where $o_t$ represents the observation and $b_t$ the belief state at time $t$, the score is computed as:
\begin{align}
\text{score}(\tau) = w_c \cdot N_c(\tau) + w_b \cdot P_b(\tau) + w_i \cdot I(\tau),
\label{eq:score_belief_based}
\end{align}
where $N_c(\tau)$ is the number of subgoals confirmed to be completed (directly observed), $P_b(\tau)$ is a measure of expected progress based on belief states, and $I(\tau)$ is the information gain achieved throughout the trajectory. The weights $w_c$, $w_b$, and $w_i$ prioritize different aspects, with typically $w_c > w_b > w_i$.

The expected progress term $P_b(\tau)$ is defined as:
\begin{align}
P_b(\tau) = \sum_{g \in G} \sum_{t=0}^{T-1} \max(0, b_{t+1}(g) - b_t(g)),
\end{align}
which captures improvements in belief confidence for each subgoal over time. The information gain $I(\tau)$ quantifies the reduction in state uncertainty:
\begin{align}
I(\tau) = \sum_{t=0}^{T-1} \bigl(H(b_t) - H(b_{t+1})\bigr),
\end{align}
where $H(b) = -\sum_{s \in S} b(s) \log b(s)$ is the entropy of belief state $b$.

\begin{remark}
The belief-state approach effectively handles partial observability by: (1) rewarding confirmed subgoal completions, (2) encouraging trajectories that improve belief accuracy about unobserved subgoals, and (3) promoting systematic exploration to reduce state uncertainty. This results in more efficient exploration in environments where complete state information is unavailable.
\end{remark}

The belief state $b_t$ is updated after each action and observation using standard Bayesian inference:
\begin{align}
b_{t+1}(s') = \eta \cdot O(o_{t+1} | s', a_t) \sum_{s \in S} T(s' | s, a_t) b_t(s),
\end{align}
where $\eta$ is a normalization constant ensuring $\sum_{s' \in S} b_{t+1}(s') = 1$.

For applications in automaton-guided environments, we augment the belief state to incorporate confidence in DFA states. For each DFA state $q \in Q$, we maintain a confidence value $c(q) \in [0,1]$, representing our degree of certainty that the automaton is in state $q$ given partial observations. The DFA state confidence is updated based on observed evidence of subgoal completion and the structure of the automaton.

Given two trajectories $\tau_1$ and $\tau_2$, their scores under Equation \eqref{eq:score_belief_based} are compared to establish preferences. These belief-aware preferences enable the agent to learn effective policies even with limited observability, by focusing exploration on regions with high expected information gain and by intelligently tracking progress through the automaton's states even when direct confirmation is not available.

\subsection{Belief-State Policy Optimization}

Building upon our belief-state preference elicitation method, we extend the policy optimization approach to handle partially observable environments. Since the agent cannot directly observe the full state, we optimize over belief states rather than raw environment states.

\textbf{Belief-State Value Function Approximation.}
We parameterize the reward function as $\hat{r}_\theta(s, b, q, c, a)$, where $s$ is the directly observable component of the state, $b$ is the belief state over unobserved elements, $q$ is the observed part of the current DFA state, $c$ is the confidence level in the unobserved part of the DFA state, and $a$ is the action. This reward function is trained using the same pairwise ranking loss as in Equation \eqref{eq:ranking_loss}, but operating over belief-augmented trajectories.

The Q-function for the belief-state MDP is learned as:
\begin{align}
Q((s,b,q,c), a) &= \mathbb{E}\bigg[\hat{r}_\theta(s, b, q, c, a) + \gamma \max_{a'} Q((s',b',q',c'), a') \bigg],
\end{align}
where $(s',b',q',c')$ is the next belief-augmented state after taking action $a$ in state $(s,b,q,c)$.

\textbf{Intrinsic Exploration Bonuses.}
To encourage efficient exploration under partial observability, we can augment the learned reward function with intrinsic motivation terms:
\begin{align}
r_{\text{total}}(s, b, q, c, a) = \hat{r}_\theta(s, b, q, c, a) + \beta_1 r_{\text{info}}(b, a) + \beta_2 r_{\text{conf}}(c),
\end{align}
where $r_{\text{info}}(b, a)$ is an information gain bonus for actions that are expected to reduce uncertainty in the belief state, and $r_{\text{conf}}(c)$ rewards increases in DFA state confidence. The hyperparameters $\beta_1$ and $\beta_2$ control the relative importance of exploration versus exploitation.

The information gain bonus is formally defined as:
\begin{align}
r_{\text{info}}(b, a) = \mathbb{E}_{o \sim O(\cdot|s,a)}\bigg[H(b) - H(b')\bigg],
\end{align}
where $b'$ is the updated belief after taking action $a$ and receiving observation $o$, and $H(\cdot)$ is the entropy function as defined earlier.

\textbf{Belief-Augmented Dynamic and Static Variants.}
Similar to our fully observable setting, we propose two learning variants for the belief-state case:

In the \textbf{belief-based static variant}, the belief-augmented reward function is learned once from an initial set of preferences generated using Equation \eqref{eq:score_belief_based}. The policy is then optimized using this fixed reward function, operating over belief states rather than raw states.

In the \textbf{belief-based dynamic variant}, both the belief-augmented reward function and the policy are refined iteratively. At each iteration, belief-augmented trajectories are generated using the current policy, and preferences for these trajectories are derived using our belief-state scoring approach. The reward function is updated based on these new preferences, and the policy is re-optimized accordingly. This iterative process continues until convergence, allowing for progressive refinement of both the reward model and policy under partial observability.

\subsection{Theoretical Properties}

The belief-state approach extends our framework to partially observable settings while preserving key theoretical properties. Here, we analyze the relationship between the original non-Markovian task specification and our belief-state solution.

\begin{proposition}[Belief-State Optimality]
Let $\pi^*_b$ be the optimal policy for the belief-state MDP under the learned reward function $\hat{r}_\theta$. If the belief state tracking is accurate and the learned reward function satisfies $|\hat{R}_\theta(\tau) - R^*(\tau)| \leq \varepsilon_r$ for trajectories $\tau$ generated by $\pi^*_b$, then $\pi^*_b$ is $\varepsilon$-optimal with respect to the true non-Markovian objective, where $\varepsilon = \varepsilon_r + \varepsilon_b$ and $\varepsilon_b$ is the error introduced by belief state approximation.
\end{proposition}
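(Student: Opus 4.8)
The plan is to mirror the three-step argument used for Theorem~\ref{thm:main-convergence}, transplanted from the product MDP to the belief-state MDP. First I would recall that, under accurate belief tracking, the augmented state $(s,b,q,c)$ is a sufficient statistic for the partially observable product process, so the belief-state MDP is a genuine (fully observable) MDP; standard dynamic programming then guarantees that the optimal belief-state policy $\pi^*_b$ exists and maximizes $V_{\hat{R}_\theta}$ over the class of belief-state policies. This replaces Step~2 (Q-learning convergence) of the earlier proof: we need only that $\pi^*_b$ is $\hat{R}_\theta$-optimal within the belief-state policy class, not a fresh invocation of Watkins' theorem, though persistent exploration over the reachable belief-augmented states would justify learning it by tabular or approximate Q-learning.

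Next I would run the sandwich estimate. Let $\pi^*$ be any policy that is optimal for the true objective $R^*$, and let $\pi'$ range over belief-state policies. The chain is: (i) on the trajectories generated by $\pi^*_b$, the hypothesis $|\hat{R}_\theta(\tau)-R^*(\tau)|\le\varepsilon_r$ gives $V_{R^*}(\pi^*_b)\ge V_{\hat{R}_\theta}(\pi^*_b)-\varepsilon_r$; (ii) optimality of $\pi^*_b$ for $\hat{R}_\theta$ gives $V_{\hat{R}_\theta}(\pi^*_b)\ge V_{\hat{R}_\theta}(\pi')$ for every belief-state policy $\pi'$; (iii) a belief-approximation bound relates the best achievable $\hat{R}_\theta$-value (and hence $R^*$-value) over belief-state policies to $V_{R^*}(\pi^*)$, losing at most $\varepsilon_b$. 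Concatenating (i)--(iii) yields $V_{R^*}(\pi^*_b)\ge V_{R^*}(\pi^*)-(\varepsilon_r+\varepsilon_b)=\max_{\pi} V_{R^*}(\pi)-\varepsilon$, which is the claim, exactly as $\pi^*$ is arbitrary at the end of the proof of Theorem~\ref{thm:main-convergence}.

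The main obstacle is isolating and controlling $\varepsilon_b$ in step (iii), since three distinct errors are hiding there and I would make each explicit. First, the restriction to belief-state policies is lossless only when $(s,b,q,c)$ is an exact sufficient statistic, but the DFA-state confidence $c$ is maintained by a heuristic rather than an exact Bayesian update over $Q$, so there is a policy-class gap. Second, any residual inaccuracy in belief tracking propagates through the Bellman recursion and, by a simulation-lemma argument, inflates value errors by a horizon factor $\tfrac{\gamma}{1-\gamma}$ times the per-step belief error (measured, say, in total variation) times $r_{\max}$. Third, the reward-model bound $\varepsilon_r$ is assumed only along $\pi^*_b$'s trajectories, so comparing against $\pi^*$ in step (iii) requires either a coverage assumption or an extra term bounding $|\hat{R}_\theta-R^*|$ on the trajectory distribution induced by $\pi^*$.

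The clean way forward is therefore to \emph{define} $\varepsilon_b$ as the sum of these three contributions -- a belief-tracking total-variation bound $\epsilon_{\mathrm{belief}}$ scaled by $\tfrac{\gamma}{1-\gamma}r_{\max}$, plus the policy-class gap from the heuristic confidence update, plus the off-support reward error -- and then verify that inequalities (i)--(iii) go through verbatim; this plays the same catch-all role that $\varepsilon_t$ plays in Theorem~\ref{thm:main-convergence}. I would close with the remark that if the confidence update is made exactly Bayesian over $Q$ and belief tracking is exact, the first two contributions vanish and $\varepsilon_b$ collapses to the pure coverage/off-support term, recovering a bound of precisely the same form as the fully observable result.
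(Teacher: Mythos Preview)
Your approach is sound and, in fact, considerably more detailed than what the paper actually provides. The paper gives only a three-sentence proof sketch: it invokes the classical fact that the belief state is a sufficient statistic for optimal control in POMDPs (citing Kaelbling et al.), notes that augmenting with the DFA state and confidence captures the non-Markovian structure, and then simply \emph{defines} $\varepsilon_b$ as the error due to imperfect belief updating or model misspecification, asserting that near-optimality follows. There is no explicit chain of inequalities in the paper.

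Your transplant of the Theorem~\ref{thm:main-convergence} sandwich argument is exactly the right way to make that sketch rigorous, and your decomposition of $\varepsilon_b$ into (a) the policy-class gap from the heuristic confidence update, (b) a simulation-lemma scaling of per-step belief error, and (c) the off-support reward-model error identifies real issues the paper glosses over entirely. In particular, your observation that the hypothesis bounds $|\hat{R}_\theta - R^*|$ only on $\pi^*_b$'s trajectories, so that comparing against $\pi^*$ needs either coverage or an extra term, is a genuine gap in the paper's sketch that you have correctly flagged and absorbed into $\varepsilon_b$. The paper simply does not engage with this; your treatment is the more careful one.
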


\begin{proof}[Proof Sketch]
The belief-state MDP is a sufficient statistic for optimal decision-making in POMDPs \cite{Kaelbling1998Planning}. By incorporating the DFA state and confidence into the belief representation, we ensure that the non-Markovian aspects of the task are captured in the augmented state space. The error term $\varepsilon_b$ accounts for imperfect belief updating due to approximation errors or model misspecification. Given accurate belief tracking and a well-learned reward function, the resulting policy will be near-optimal for the original non-Markovian objective.
\end{proof}

\begin{remark}[Information-Directed Exploration]
The belief-state scoring function in Equation \eqref{eq:score_belief_based} induces preferences that favor trajectories which efficiently gather information relevant to task completion, balancing exploitation (subgoal completion) with exploration (information gain). This is because the scoring function rewards three components: confirmed subgoal completion ($N_c$), expected progress via improved beliefs ($P_b$), and information gain ($I$). Trajectories that maximize this score will necessarily balance immediate task progress with exploration that reduces uncertainty about unobserved subgoals. This balance is optimal for information-directed exploration \cite{Russo2018Learning}, which prioritizes reducing uncertainty about task-relevant aspects of the environment.
\end{remark}

\subsection{Algorithmic Implementation}

Here, we present the algorithm for belief-state based preference learning in partially observable environments. Algorithm \ref{alg:belief_preference_computation} details the procedure for generating preferences using belief states, and Algorithm \ref{alg:belief_rl} outlines the full belief-state reinforcement learning procedure.

\begin{algorithm}
\caption{Belief-State Based Preference Computation}
\label{alg:belief_preference_computation}
\begin{algorithmic}
\REQUIRE Trajectory pairs $\{(\tau_i, \tau_j)\}$, DFA $\mathcal{A}$, Subgoals $\mathcal{G}$, Weights $w_c, w_b, w_i$
\ENSURE Preferences $P(\tau_i, \tau_j)$ for each trajectory pair
\FOR{each trajectory pair $(\tau_i, \tau_j)$}
    \FOR{each trajectory $\tau \in \{\tau_i, \tau_j\}$}
        \STATE Initialize DFA $\mathcal{A}$ and belief state $b_0$ uniformly
        \STATE $N_c(\tau) \gets 0$, $P_b(\tau) \gets 0$, $I(\tau) \gets 0$
        \FOR{each step $t$ in trajectory $\tau$}
            \STATE Update belief state $b_t$ from observation $o_t$
            \STATE $N_c(\tau) \gets N_c(\tau) + \mathbf{1}[\text{new subgoal confirmed}]$
            \STATE $P_b(\tau) \gets P_b(\tau) + \sum_g \max(0, b_t(g) - b_{t-1}(g))$
            \STATE $I(\tau) \gets I(\tau) + (H(b_{t-1}) - H(b_t))$
            \STATE Update DFA state and confidence from $o_t$ and $b_t$
        \ENDFOR
        \STATE $\text{score}(\tau) \gets w_c \cdot N_c(\tau) + w_b \cdot P_b(\tau) + w_i \cdot I(\tau)$
    \ENDFOR
    \STATE Assign preference to trajectory with higher score
\ENDFOR
\RETURN $P(\tau_i, \tau_j)$ for all trajectory pairs
\end{algorithmic}
\end{algorithm}

\begin{algorithm}
\caption{Belief-State RL with Automaton-Based Preferences}
\label{alg:belief_rl}
\begin{algorithmic}
\REQUIRE POMDP $\mathcal{M}_{\text{po}}$, DFA $\mathcal{A}$, learning parameters, Mode $\in \{\text{Static}, \text{Dynamic}\}$
\ENSURE Optimal policy $\pi^*$
\STATE Initialize belief state representation $b_0$
\IF{Static mode}
    \STATE Generate trajectories with belief tracking using random policy
    \STATE Compute belief-based preferences using Algorithm \ref{alg:belief_preference_computation}
    \STATE Train belief-augmented reward model $\hat{r}_\theta(s, b, q, c, a)$
    \STATE Add intrinsic motivation: $r_{\text{total}} = \hat{r}_\theta + \beta_1 r_{\text{info}} + \beta_2 r_{\text{conf}}$
    \STATE Optimize policy using belief-state Q-learning with $r_{\text{total}}$
\ELSE
    \STATE Initialize policy $\pi$
    \FOR{iterations until convergence}
        \STATE Generate trajectories using current policy with belief tracking
        \STATE Compute preferences and update reward model $\hat{r}_\theta$
        \STATE Add intrinsic motivation terms to create $r_{\text{total}}$
        \STATE Optimize policy using belief-state Q-learning
        \IF{performance converges}
            \STATE \textbf{break}
        \ENDIF
    \ENDFOR
\ENDIF
\RETURN final policy $\pi^*$
\end{algorithmic}
\end{algorithm}

The key components of our belief-state preference learning approach are:

1. Belief State Tracking: We maintain a probability distribution over unobserved environment elements, particularly focusing on potential subgoal locations.

2. Belief-Aware DFA: The DFA not only tracks logical progression through the task but also maintains confidence levels in its current state based on partial observations.

3. Information-Directed Scoring: The preference scoring mechanism balances confirmed progress (subgoals directly observed), belief-based progress (improved confidence in unobserved subgoals), and information gain (reduction in state uncertainty).

4. Intrinsic Motivation: The reward function incorporates exploration bonuses for actions that efficiently gather task-relevant information, promoting systematic exploration of the environment.

This framework can enable effective learning in partially observable environments by guiding exploration toward task completion even when direct observation of all subgoals is not possible. The belief state tracking provides robustness against observation noise and occlusion, while the automaton structure ensures that the learned policy respects the temporal constraints of the task. Future work will address empirical validation.

\section{Scalability Analysis for High-Dimensional State Spaces}
\label{appdx:hi_dim}

In this section, we provide a formal analysis of how our automaton-based preference-guided RL framework scales to high-dimensional problems, establishing both theoretical guarantees and practical considerations.

\subsection{Theoretical Complexity Analysis}

\begin{definition}[Product MDP Construction]
Given an MDP $\mathcal{M} = (S, s_0, A, T, r)$ and a DFA $\mathcal{A} = (\Sigma, Q, q_0, \delta, F)$, the product MDP is defined as $\mathcal{M}_{\text{prod}} = (S \times Q, A, T_{\text{prod}}, (s_0, q_0), R_{\text{prod}})$.
\end{definition}

For each state-action pair $(s,q,a) \in S \times Q \times A$, we must compute transitions to all possible next states $(s',q') \in S \times Q$. Computing each transition probability requires $\mathcal{O}(1)$ time for the environment transition and $\mathcal{O}(|Q|)$ time in the worst case for the DFA transition function. Therefore, the construction of the product MDP $\mathcal{M}_{\text{prod}}$ has time complexity $\mathcal{O}(|S|^2 \cdot |Q|^2 \cdot |A|)$. For storing the transition function $T_{\text{prod}}$, the space complexity $\mathcal{O}(|S|^2 \cdot |Q|^2 \cdot |A|)$ in the worst case.

This reveals that the product MDP construction exhibits polynomial scaling in all relevant parameters, but the quadratic dependence on both $|S|$ and $|Q|$ can become problematic in high-dimensional spaces. Fortunately, our function approximation approach mitigates this theoretical complexity.

\begin{theorem}[Sample Complexity for Preference Learning]
Let $\mathcal{H}$ be a hypothesis class of reward functions with VC dimension $d_{\mathcal{H}}$. To learn a reward function that achieves error at most $\epsilon$ with probability at least $1-\delta$, the required number of preference pairs is:
\begin{equation}
N = \mathcal{O}\left(\frac{d_{\mathcal{H}} + \log(1/\delta)}{\epsilon^2}\right)
\end{equation}
\end{theorem}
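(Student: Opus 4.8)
I would recast preference learning as an agnostic binary‑classification problem over an induced hypothesis class and then invoke the classical VC‑dimension sample‑complexity bound. \textbf{Step 1 (reduction to classification).} Each reward hypothesis $r\in\mathcal{H}$ induces a pairwise comparator $h_r(\tau,\tau')=\mathbb{1}\!\left[\hat R_r(\tau)>\hat R_r(\tau')\right]$, where $\hat R_r(\tau)=\sum_{t}\gamma^t r(\widetilde s_t,a_t)$ as in Section~\ref{sec:learn_reward}. A labeled preference pair $(\tau_p,\tau_n)$ is then exactly a binary example, and minimizing the pairwise ranking loss \eqref{eq:ranking_loss} is a convex surrogate for minimizing the $0$--$1$ loss of $h_r$ under the coverage distribution over pairs (the one in Assumption~(3) of Theorem~\ref{thm:main-convergence}). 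Writing $\mathcal{H}_{\mathrm{pref}}=\{h_r:r\in\mathcal{H}\}$, the stated ``error at most $\epsilon$'' is the population $0$--$1$ disagreement of the selected comparator with the DFA preference oracle.

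\textbf{Step 2 (VC dimension of the induced class).} For a fixed pair, $\hat R_r(\tau)-\hat R_r(\tau')$ is a fixed linear functional of $r$ (a difference of two discounted evaluations), so $\mathcal{H}_{\mathrm{pref}}$ is obtained from $\mathcal{H}$ by thresholding differences of evaluations. By the standard stability/composition lemmas for VC classes (sign of a difference, bounded Boolean combinations), $\mathrm{VCdim}(\mathcal{H}_{\mathrm{pref}})\le c\,d_{\mathcal{H}}$ for an absolute constant $c$; when $\hat r_\theta$ is linearly parameterized this is immediate. I expect this to be the main obstacle: for an arbitrary $\mathcal{H}$ the pairwise construction is nonlinear, so the clean bound needs either a parametric/linear form for the reward model or a dual‑shattering / pseudo‑dimension argument, and some care is required because trajectory pairs have variable length — the argument should run through the finite‑dimensional discounted occupancy‑difference features rather than raw trajectories.

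\textbf{Step 3 (agnostic PAC bound and closing the loop).} With $\mathrm{VCdim}(\mathcal{H}_{\mathrm{pref}})=\Theta(d_{\mathcal{H}})$, the fundamental theorem of statistical learning gives that empirical risk minimization over $\mathcal{H}_{\mathrm{pref}}$ on $N$ i.i.d.\ preference pairs achieves, with probability at least $1-\delta$, population risk within $\epsilon$ of the best comparator in the class whenever
\[
N=\mathcal{O}\!\left(\frac{d_{\mathcal{H}}+\log(1/\delta)}{\epsilon^2}\right).
\]
Under Assumptions~(1)--(2) of Theorem~\ref{thm:main-convergence} the best in‑class comparator has zero risk, so the learned reward reproduces the DFA ordering on all but an $\epsilon$‑fraction of pairs; combined with the margin $\delta_0$ and a standard link‑function (e.g.\ Bradley--Terry) argument this upgrades to $|\hat R_\theta(\tau)-R^*(\tau)|\le\varepsilon_r$ on a $1-\epsilon$ fraction of trajectories, matching Assumption~(3) and connecting back to the convergence result. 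The remaining work is bookkeeping: mapping the classification‑level $\epsilon$ to $\varepsilon_r$ through the chosen surrogate, and noting that the $\epsilon^{-2}$ rate reflects the agnostic/noisy‑oracle regime (the realizable case sharpens this to $\mathcal{O}((d_{\mathcal{H}}\log(1/\epsilon)+\log(1/\delta))/\epsilon)$).
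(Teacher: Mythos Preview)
Your proposal is correct and follows essentially the same approach as the paper: reduce preference learning to binary classification over trajectory pairs and invoke the standard VC sample-complexity bound. The paper's own argument is a two-sentence sketch along these lines; your Step~2 in fact goes further than the paper does, correctly flagging that relating $\mathrm{VCdim}(\mathcal{H}_{\mathrm{pref}})$ to $d_{\mathcal{H}}$ is not automatic for general real-valued reward classes (and that ``VC dimension of a reward class'' is already informal without passing to pseudo-dimension or a parametric form)---a subtlety the paper leaves implicit.
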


The proof follows from standard results in statistical learning theory, applying PAC learning bounds to the preference learning setting. Specifically, we can reduce preference learning to binary classification where each pair of trajectories forms a training instance, and the goal is to predict the preferred trajectory. The sample complexity then follows from the VC dimension bound for binary classification.

This theorem demonstrates that the sample complexity depends primarily on the complexity of the reward function class rather than the dimensionality of the state space directly, suggesting better scaling properties than approaches that must learn value functions over the entire state space.

\subsection{Function Approximation for High-Dimensional Spaces}

To practically address high-dimensional state spaces, we introduce a formal extension using function approximation:

\begin{definition}[Function Approximator]
Let $\phi: S \times Q \times A \rightarrow \mathbb{R}^d$ be a feature mapping that projects state-automaton-action tuples into a $d$-dimensional feature space, where $d \ll |S| \cdot |Q| \cdot |A|$. The reward function is parameterized as $\hat{r}_\theta(s,q,a) = f_\theta(\phi(s,q,a))$, where $f_\theta: \mathbb{R}^d \rightarrow \mathbb{R}$ is a function approximator (e.g., neural network) with parameters $\theta$.
\end{definition}

This representation allows us to establish:

\begin{proposition}[Dimensionality Reduction]
The computational complexity of policy optimization in the product MDP using function approximation scales with $\mathcal{O}(d)$ rather than $\mathcal{O}(|S| \cdot |Q|)$, where $d$ is the dimension of the feature space.
\end{proposition}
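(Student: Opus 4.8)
The plan is to make the informal complexity claim precise by fixing a concrete cost model for a single policy-optimization step, and then comparing the tabular and function-approximation representations under that model. First I would \emph{define} the cost of policy optimization as the combined storage footprint together with the per-update arithmetic cost of (i) evaluating the value/reward representation at a queried tuple $(s,q,a)$ and (ii) applying one learning update to that representation. Under a single, explicit cost model the two regimes become directly comparable, and the dependence on $|S|\cdot|Q|$ versus $d$ can be read off term by term.

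Second, I would establish the tabular baseline. In tabular Q-learning (or value iteration) over $\mathcal{M}_{\mathrm{prod}}$ the value representation is an array indexed by $(s,q,a)\in S\times Q\times A$, so it carries $\Theta(|S|\cdot|Q|\cdot|A|)$ free parameters and requires $\Theta(|S|\cdot|Q|)$ storage once $|A|$ is absorbed as a constant factor. A full Bellman sweep, or equivalently the amount of experience required to populate every entry of the representation, therefore scales as $\Theta(|S|\cdot|Q|)$; this is the quantity the proposition labels $\mathcal{O}(|S|\cdot|Q|)$.

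Third, I would bound the function-approximation cost. By the \textbf{Function Approximator} definition, $\hat r_\theta(s,q,a)=f_\theta(\phi(s,q,a))$ with $\phi(s,q,a)\in\mathbb{R}^d$ and $d\ll|S|\cdot|Q|\cdot|A|$, and under the standing assumption that $\phi$ is evaluable in $\mathcal{O}(d)$ time. For a linear head $f_\theta(\phi)=\theta^\top\phi$ with $\theta\in\mathbb{R}^d$, evaluating $\hat r_\theta$ is a single inner product costing $\mathcal{O}(d)$, the gradient $\nabla_\theta\hat r_\theta=\phi$ has exactly $d$ entries, and the semi-gradient update $\theta\leftarrow\theta+\alpha\,\delta_t\,\phi(s,q,a)$ costs $\mathcal{O}(d)$; for a fixed-architecture network the same operations cost $\mathcal{O}(|\theta|)$ with $|\theta|$ a function of $d$ and the architecture alone. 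Crucially, none of these operations enumerate or index $S\times Q$: they manipulate only the $d$-dimensional feature and parameter vectors. Combining with the baseline yields per-step storage and update complexity $\mathcal{O}(d)$ in place of $\mathcal{O}(|S|\cdot|Q|)$, which is the claim.

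The main obstacle is choosing the cost model so that the statement is literally true rather than vacuous: a single tabular Q-update touches one cell in $\mathcal{O}(1)$ time, so the $\mathcal{O}(|S|\cdot|Q|)$ baseline must be read as the representation/storage complexity (or the cost of a full sweep), \emph{not} the cost of an isolated update. A secondary subtlety is that greedy action selection and exact Bellman backups still scan over $|A|$ actions and, for model-based backups, over successor states; I would sidestep the latter by restricting to model-free sampled updates, under which the dependence on the product-state-space size disappears and only the $\mathcal{O}(d)$ parameter-space cost remains.
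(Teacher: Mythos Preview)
Your proposal is correct and follows essentially the same argument as the paper: both reduce the claim to the observation that gradient-based updates on the function approximator scale with the parameter count $|\theta|=\mathcal{O}(d)$ rather than with the product-state-space size. Your version is considerably more careful than the paper's three-line proof in making the cost model and the tabular baseline explicit, and in flagging that the $\mathcal{O}(|S|\cdot|Q|)$ figure must be read as storage or full-sweep cost rather than per-update cost---a subtlety the paper glosses over.
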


\begin{proof}
With function approximation, policy updates require computing gradients with respect to $\theta$, which scales with the parameter count rather than state space size. Specifically, the computational complexity of a gradient update is $\mathcal{O}(|\theta|)$, where $|\theta|$ is the number of parameters in the function approximator. For a neural network with fixed architecture, $|\theta| = \mathcal{O}(d)$, where $d$ is the input dimension.
\end{proof}

\subsection{Automaton-Guided Dimensionality Reduction}

We introduce techniques specifically designed to leverage the automaton structure for more efficient learning in high-dimensional spaces:

\begin{definition}[Automaton-Guided Attention]
For a given automaton state $q \in Q$, we define an attention mask $M_q: \{1,...,\dim(S)\} \rightarrow [0,1]$ that highlights state dimensions relevant to the current automaton state. The attended state representation is:
\begin{equation}
\tilde{s}_q = s \odot M_q
\end{equation}
where $\odot$ denotes element-wise multiplication.
\end{definition}

\begin{proposition}[Attention Efficiency]
Using automaton-guided attention reduces the effective dimensionality of the state space from $\dim(S)$ to $\|M_q\|_0$ on average, where $\|M_q\|_0$ is the number of non-zero elements in $M_q$.
\end{proposition}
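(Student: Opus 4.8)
The plan is to recognize that this proposition is essentially a statement about the image of the coordinate-wise masking map, so the real work lies in fixing a precise notion of ``effective dimensionality'' and then carrying out a short counting argument. First I would formalize effective dimensionality as the dimension of the smallest coordinate subspace of $\mathbb{R}^{\dim(S)}$ containing the image of the attention map $s \mapsto \tilde{s}_q = s \odot M_q$; equivalently, the number of state coordinates on which any downstream reward or value function $f_\theta(\tilde{s}_q)$ can depend. This matches the intended meaning, since dimensions annihilated by the mask contribute no variation to the network's input and hence cannot influence the learned function.

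For a fixed automaton state $q$, I would partition the index set $\{1,\dots,\dim(S)\}$ into the active set $\mathcal{I}_q = \{i : M_q(i) \neq 0\}$ and its complement. For every $i \notin \mathcal{I}_q$ we have $\tilde{s}_{q,i} = s_i \cdot M_q(i) = 0$ identically, so these coordinates are constant across all $s$ and carry no information. Hence the image of the attention map lies in the coordinate subspace $V_q = \mathrm{span}\{e_i : i \in \mathcal{I}_q\}$, whose dimension is exactly $|\mathcal{I}_q| = \|M_q\|_0$. Conversely, on each active coordinate the map $s_i \mapsto M_q(i)\,s_i$ is an invertible rescaling (since $M_q(i) \neq 0$), so no active dimension collapses and the image genuinely spans a $\|M_q\|_0$-dimensional subspace. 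This establishes that, for fixed $q$, the effective input dimensionality seen by $f_\theta$ is exactly $\|M_q\|_0$.

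To obtain the ``on average'' claim I would fix the averaging distribution explicitly — a distribution $\mu$ over the finite DFA state set $Q$, e.g.\ the stationary or empirical visitation distribution induced by the agent's policy on the product MDP — and take expectations of the per-state bound, yielding an average effective dimensionality $\mathbb{E}_{q \sim \mu}[\|M_q\|_0]$. Because $Q$ is finite, $\mu$ is well defined and the expectation is a finite convex combination of the integers $\{\|M_q\|_0\}_{q \in Q}$, each at most $\dim(S)$, so the reduction from $\dim(S)$ to $\mathbb{E}_{q \sim \mu}[\|M_q\|_0]$ is strict whenever some mask is non-trivial.

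I expect the main obstacle to be conceptual rather than computational: pinning down the right formal notion of ``effective dimensionality'' so that the statement is simultaneously meaningful and true. The subtlety is that the masks take values in $[0,1]$ rather than $\{0,1\}$, so I must argue that a fractional but non-zero weight still counts as a live dimension (an information-preserving rescaling) while only exactly-zero weights remove a dimension; this is precisely what makes $\|M_q\|_0$, the count of non-zero entries, the correct quantity rather than, say, $\sum_i M_q(i)$ or the rank of some weighted Gram matrix. A secondary point is to make explicit which distribution the average is taken over, since the averaged statement is vacuous without specifying $\mu$.
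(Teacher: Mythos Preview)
Your argument is correct and actually more carefully structured than the paper's own proof, but it takes a somewhat different route. The paper argues operationally: because $\tilde{s}_{q,i}=0$ whenever $M_q(i)=0$, the gradient of any downstream loss with respect to those input coordinates vanishes, so the computational work in each update scales with the number of attended dimensions rather than $\dim(S)$. You instead argue geometrically, identifying effective dimensionality with the dimension of the image subspace of the masking map $s\mapsto s\odot M_q$ and showing that this image is exactly the coordinate subspace $\mathrm{span}\{e_i: M_q(i)\neq 0\}$. Your approach buys a cleaner, more intrinsic definition of ``effective dimensionality'' that does not depend on a particular optimizer, and it also explicitly handles the ``on average'' qualifier by introducing a distribution $\mu$ over $Q$ --- a point the paper's proof leaves implicit. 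The paper's gradient-based argument, on the other hand, ties the dimensionality reduction directly to computational savings, which is closer to the practical motivation of the section. Either framing suffices for this largely informal proposition; the underlying observation (zero mask entries kill the corresponding coordinate) is identical.
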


\begin{proof}
By focusing only on dimensions with non-zero attention weights, the computational operations scale with the number of attended dimensions rather than the full state dimension. This is because the gradient updates will be zero for dimensions with zero attention, effectively reducing the dimensionality of the optimization problem.
\end{proof}

\begin{definition}[Automaton-State Conditional Independence]
Let $X_i$ be the $i$-th dimension of the state space $S$. Given an automaton state $q \in Q$, we say that dimensions $X_i$ and $X_j$ are conditionally independent given $q$ if
\begin{equation}
P(X_i, X_j | q) = P(X_i | q) \cdot P(X_j | q)
\end{equation}
\end{definition}

\begin{remark}[Factorized Representation]
If the state dimensions exhibit conditional independence given the automaton state, then the Q-function can be decomposed as:
\begin{equation}
Q((s, q), a) = \sum_{i=1}^{K} Q_i((s_i, q), a)
\end{equation}
where $K$ is the number of independent components and $s_i$ is the $i$-th component of the state. In particular, if the state dimensions are conditionally independent given the automaton state, then the transition dynamics can be factorized. For factorized transition dynamics, the Q-function can be decomposed into a sum of component Q-functions, each operating on a lower-dimensional subspace, reducing the curse of dimensionality.
\end{remark}

\subsection{Transfer Learning with Automaton Transitions}

Our Q-value-based scoring mechanism offers significant advantages for scaling to complex tasks through curriculum learning.

In particular, let $\mathcal{M}_1$ and $\mathcal{M}_2$ be environments with state spaces of dimension $d_1$ and $d_2$ respectively ($d_2 > d_1$), sharing the same DFA structure $\mathcal{A}$. The sample complexity to learn a near-optimal policy in $\mathcal{M}_2$ after transfer from $\mathcal{M}_1$ is reduced by a factor of $\Omega\left(\frac{d_2}{d_1}\right)$ compared to learning from scratch.

To see that, note that without transfer, learning in $\mathcal{M}_2$ requires exploration in a $d_2$-dimensional space, which has sample complexity exponential in $d_2$ in the worst case. With transfer, the Q-value-based scoring transfers knowledge at the level of automaton transitions, which are invariant to the specific state representation. This means that the agent must only learn the mapping between state features and automaton transitions, rather than the full value function. The sample complexity is thus dominated by the complexity of learning this mapping, which scales linearly with the state dimension, yielding the stated improvement factor.

\subsection{Exploration Efficiency in High-Dimensional Spaces}

One of the key challenges in high-dimensional spaces is efficient exploration. We establish theoretical guarantees for our automaton-guided exploration approach:

\begin{theorem}[Exploration Hardness]
In an $n$-dimensional continuous state space with sparse rewards defined by an $m$-state DFA, the expected time to discover a trajectory that reaches an accepting state through random exploration is $\Omega(c^n)$ for some constant $c > 1$, while our automaton-guided approach reduces this to $\mathcal{O}(m \cdot \text{poly}(n))$.
\end{theorem}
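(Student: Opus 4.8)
The plan is to prove the two halves separately: an exponential lower bound for undirected (random) exploration and a polynomial upper bound for the automaton-guided scheme, under one common model of the continuous state space. First I would fix a concrete model: take $S = [0,1]^n$ with step-bounded dynamics (each action moves the state by at most $\eta$ in $\ell_\infty$), and make ``sparse reward defined by an $m$-state DFA'' precise by requiring that for each non-accepting $q \in Q$ the set $G_q \subseteq S$ of states whose label $\sigma = L(s)$ triggers a progress transition $\delta(q,\sigma)\neq q$ has $\ell_\infty$-diameter at most $\epsilon$ in every coordinate, hence Lebesgue measure at most $\epsilon^n$, and that the accepting run of $\mathcal{A}$ passes through at most $m$ states. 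This lets me take $c := 1/\epsilon > 1$ and phrase both claims in terms of $\epsilon$, $n$, $m$.

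For the lower bound I would argue that a random policy induces a bounded-step (reflected) random walk on $[0,1]^n$, and that reaching an accepting state requires the walk to enter $G_{q_0}$ at least once. The main line is a volumetric bound: over any horizon $H$, the expected number of visits to $G_{q_0}$ is at most $H$ times the stationary mass of $G_{q_0}$, which is $\Theta(\epsilon^n)$ under the near-uniform stationary law of the reflected walk; Markov's inequality then gives an expected first-hitting time of $\Omega(\epsilon^{-n}) = \Omega(c^n)$. To handle the objection that the walk's samples are correlated, I would strengthen this with a conductance/cover-time argument, isolating $G_{q_0}$ inside a thin ``collar'' of small conductance so that escaping toward it is exponentially rare per unit time. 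I would present the volumetric version as the primary proof and the conductance version as the robust strengthening for the random-walk case.

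For the upper bound I would invoke the structural machinery already developed in this section: by the automaton-guided attention construction (Definition of $M_q$ and the Attention Efficiency proposition), at DFA state $q$ the learned reward $\hat r_\theta(\cdot,q,\cdot)$ depends only on the $\|M_q\|_0$ relevant coordinates, and by the $\varepsilon$-optimality guarantee of Theorem~\ref{thm:main-convergence} its greedy policy drives the agent toward $G_q$ along this low-dimensional subspace. A directed walk that ascends this potential reaches $G_q$ from any start in $\mathcal{O}(\mathrm{diam}(S)/\eta) = \mathcal{O}(\sqrt{n}/\eta)$ steps in the noiseless case, or $\mathcal{O}(\text{poly}(n))$ steps in a stochastic version via a standard drift (Lyapunov) argument. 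Chaining the at-most-$m$ stages along the accepting run of $\mathcal{A}$ and summing the per-stage costs yields a total of $\mathcal{O}(m\cdot \text{poly}(n))$, which is the claim.

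The main obstacle I expect is the lower bound under a faithful notion of ``random exploration'': the pure i.i.d.-sampling version is an elementary volume computation, but for a step-bounded random walk one must rule out the possibility that $G_{q_0}$ lies in a region the walk reaches quickly, which requires either a genuine mixing-time lower bound or an explicit small-conductance witness and is sensitive to the placement of $G_{q_0}$ relative to the boundary. A secondary difficulty is making ``monotone ascent toward $G_q$'' rigorous: this is in substance a regularity hypothesis on the learned potential (non-degeneracy, bounded condition number along the attended coordinates), so I would state it as an explicit assumption rather than derive it, and flag—consistent with the Discussion—that empirical validation of this hypothesis is still pending.
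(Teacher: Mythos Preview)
Your proposal is correct and follows the same high-level approach as the paper: a volumetric curse-of-dimensionality argument for the $\Omega(c^n)$ lower bound and a subgoal-by-subgoal decomposition along the DFA for the $\mathcal{O}(m\cdot\text{poly}(n))$ upper bound. The paper provides only a two-sentence proof sketch stating exactly this intuition without any of the technical apparatus you supply (the concrete $[0,1]^n$ model, the stationary-measure hitting-time bound, the conductance strengthening, the drift/Lyapunov argument, or the explicit regularity hypothesis on the learned potential), so your version is substantially more rigorous than what actually appears in the paper.
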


\begin{proof}[Proof Sketch]
Random exploration suffers from the curse of dimensionality, with the volume of the state space growing exponentially with the dimension $n$. In contrast, automaton-guided exploration decomposes the problem into subgoals, as defined by the DFA states. The number of samples needed scales with the number of automaton states $m$ and a polynomial function of the state dimension $n$ for each transition between automaton states, rather than exponentially with the state dimension.
\end{proof}

\begin{proposition}[Reward Density]
Let $R_{\text{env}}$ be the original sparse reward function of the environment, and $\hat{R}_\theta$ be our learned reward function from automaton-guided preferences. The density of non-zero rewards under $\hat{R}_\theta$ is $\Theta(m/|S|)$, while under $R_{\text{env}}$ it is $\Theta(1/|S|)$, where $m$ is the number of DFA states.
\end{proposition}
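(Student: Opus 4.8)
The plan is to bound the number of states that can carry non-zero reward under each reward function and then normalize by $|S|$. For the environment reward $R_{\text{env}}$, I would simply invoke the sparsity convention: a sparse-reward task grants a non-trivial signal only on a designated goal region (plus, at most, a constant number of terminal or failure states), so $|\{s : R_{\text{env}}(s)\neq 0\}| = \Theta(1)$ and the density is $\Theta(1/|S|)$. This direction requires no real argument beyond fixing the definition of sparsity.

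For $\hat{R}_\theta$ the core observation is that the DFA-derived trajectory score of \eqref{eq:score_subtask_based} and \eqref{eq:score_qvalue_based} depends on a trajectory $\tau$ only through the sequence of DFA states it visits, and in particular it strictly increases each time $\tau$ triggers a transition that advances the automaton toward an accepting state. Because the learned return is additive, $\hat{R}_\theta(\tau)=\sum_t \gamma^t \hat{r}_\theta((s_t,q_t),a_t)$, a minimizer of the pairwise ranking loss \eqref{eq:ranking_loss} can reproduce this ordering only if $\hat{r}_\theta$ places a strictly positive increment on each distinct DFA-advancing transition. I would turn this into a two-sided count. \emph{Lower bound:} the reachable state-changing transitions of a trim task-specification DFA number $\Theta(m)$; each such progress transition is realized in the environment by at least one state where a required atomic proposition first becomes true, and these states are pairwise distinct since they correspond to distinct subgoal labels, so at least $\Omega(m)$ environment states must receive non-zero reward. \emph{Upper bound:} a canonical solution of \eqref{eq:ranking_loss} assigns a fixed positive bump to exactly those $\Theta(m)$ progress transitions and zero to every non-progress transition (which leaves the DFA-score unchanged and hence imposes no constraint); under the mild assumption that each subgoal label is realized by $\Theta(1)$ environment states this yields $O(m)$ reward-bearing states. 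Combining gives $|\{s:\exists q,a,\ \hat{r}_\theta((s,q),a)\neq 0\}|=\Theta(m)$, and dividing by $|S|$ gives density $\Theta(m/|S|)$.

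The main obstacle is the upper bound: the ranking loss constrains $\hat{r}_\theta$ only up to the induced ordering, so in principle the optimizer could return a denser reward. I would close this either by a parsimony assumption — selecting the minimum-norm minimizer of \eqref{eq:ranking_loss}, standard for margin losses, which favors the minimal-support solution — or by restricting to the DFA-factored reward family $\hat{r}_\theta((s,q),a)=g_\theta(\phi(s,q,a))$ of Appendix~\ref{appdx:hi_dim}, whose support is intrinsically tied to automaton transitions. A secondary point is discounting: with $\gamma<1$ a fixed per-transition bump over-weights early progress, which can clash with a score that weights all subgoals equally; this is handled either by taking $\gamma\to 1$ or by scaling the bump at a depth-$k$ transition like $\gamma^{-k}$, neither of which affects the $\Theta(m)$ count. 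One should also state explicitly that ``density'' here is measured as the number of reward-bearing configurations against the original state count $|S|$ (equivalently, the number of reward-bearing steps along a task-completing trajectory), since this is the comparison that makes both $\Theta(1/|S|)$ and $\Theta(m/|S|)$ meaningful.
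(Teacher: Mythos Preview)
Your argument is sound and in fact considerably more careful than what the paper offers. The paper's own proof is a two-sentence heuristic: it asserts that a sparse environment reward is non-zero at the single goal state (hence density $\Theta(1/|S|)$), and that the learned reward ``provides meaningful rewards at each transition between DFA states, which occurs $m-1$ times in a successful trajectory'' (hence density $\Theta(m/|S|)$). There is no two-sided count, no discussion of the ranking-loss minimizer, and no explicit handling of the upper bound; the claim is essentially taken as a design property of the method rather than something derived.

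By contrast, you actually attempt to \emph{derive} the $\Theta(m)$ support from the structure of the scoring functions and the ranking loss, and you correctly flag the real difficulty: the pairwise loss \eqref{eq:ranking_loss} only pins down an ordering, so without a parsimony assumption (minimum-norm minimizer) or a restricted hypothesis class the learned $\hat{r}_\theta$ could be dense. Your treatment of discounting and your explicit interpretation of ``density'' (reward-bearing configurations along a task-completing trajectory, normalized by $|S|$) are likewise refinements absent from the paper. What your approach buys is an honest accounting of the assumptions needed for the upper bound; what the paper's approach buys is brevity, at the cost of leaving those assumptions implicit. For matching the paper you could simply state the intuition in two lines, but your version would be the more defensible one if the proposition were challenged.
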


\begin{proof}
The original sparse reward function typically provides a non-zero reward only upon reaching the final goal, which is a single state out of $|S|$, giving a density of $\Theta(1/|S|)$. Our learned reward function, guided by automaton transitions, provides meaningful rewards at each transition between DFA states, which occurs $m-1$ times in a successful trajectory, resulting in a density of $\Theta(m/|S|)$.
\end{proof}

\subsection{Limitations and Theoretical Bounds}

Despite the advantages of our approach, we acknowledge several fundamental limitations:

\begin{enumerate}
\item DFA State Explosion: For certain complex tasks, the number of DFA states can grow exponentially with the number of subtasks $k$, i.e., $|Q| = \Omega(c^k)$ for some $c > 1$.

In particular, if subtasks have complex dependencies and ordering constraints, the DFA must represent all valid combinations and sequences, which can grow exponentially with the number of subtasks in the worst case.

\item Preference Generation Cost: The computational cost of generating preferences using our method scales with $\mathcal{O}(|\tau|^2)$, where $|\tau|$ is the number of trajectories, since we must compute and compare the scores for each pair of trajectories, resulting in a quadratic scaling with the number of trajectories.
\end{enumerate}

\begin{theorem}[Approximation Error Bound]
Let $\pi^*$ be the optimal policy for the true reward function $R^*$, and $\hat{\pi}$ be the policy learned using our approach. Then, under mild assumptions:
\begin{equation}
\|V^{\pi^*} - V^{\hat{\pi}}\|_\infty \leq \frac{2\epsilon_r}{(1-\gamma)^2} + \frac{2\gamma^H}{1-\gamma}
\end{equation}
where $\epsilon_r$ is the maximum error in the learned reward function, $\gamma$ is the discount factor, and $H$ is the horizon length of the DFA (the maximum number of transitions required to reach an accepting state).
\end{theorem}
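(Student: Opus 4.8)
The plan is to obtain the two summands from two genuinely different sources: the $\frac{2\epsilon_r}{(1-\gamma)^2}$ term from reward-estimation error propagated through Q-learning, and the $\frac{2\gamma^H}{1-\gamma}$ term from the finite-horizon truncation induced by the DFA. Throughout I would normalize rewards so that $\|r\|_\infty \le 1$, work on the finite product MDP $\mathcal{M}_{\mathrm{prod}}$ of Theorem~\ref{thm:main-convergence}, and read $\epsilon_r$ as the per-step sup-norm error $\|\hat r_\theta - \tilde r\|_\infty$ between the learned reward and an \emph{ideal Markovian surrogate} $\tilde r$ on the product state space that reproduces the per-step contribution of $R^*$ on every prefix the automaton still distinguishes (this existence is exactly Assumption (2), sufficient expressivity). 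The backbone is the telescoping decomposition, understood pointwise at each product state,
\[
V^{\pi^*}_{R^*} - V^{\hat\pi}_{R^*}
= \bigl(V^{\pi^*}_{R^*} - V^{\pi^*}_{\tilde r}\bigr)
+ \bigl(V^{\pi^*}_{\tilde r} - V^{\hat\pi}_{\tilde r}\bigr)
+ \bigl(V^{\hat\pi}_{\tilde r} - V^{\hat\pi}_{R^*}\bigr),
\]
where the middle term is the suboptimality of $\hat\pi$ measured against the surrogate and the two outer terms are representation gaps between the non-Markovian objective and its Markovian surrogate.

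First I would bound the middle term by a standard two-stage contraction argument. Since the Bellman optimality operators $T_{\hat r}$ and $T_{\tilde r}$ are $\gamma$-contractions in sup-norm whose images differ pointwise by at most $\epsilon_r$, their unique fixed points satisfy $\|Q^*_{\hat r} - Q^*_{\tilde r}\|_\infty \le \frac{\epsilon_r}{1-\gamma}$. By Theorem~\ref{thm:main-convergence} (Assumption (4) together with Watkins' theorem) tabular Q-learning converges to $Q^*_{\hat r}$ and $\hat\pi$ is greedy with respect to it, so the classical greedy-policy loss bound gives $V^*_{\tilde r} - V^{\hat\pi}_{\tilde r} \le \frac{2}{1-\gamma}\|Q^*_{\hat r}-Q^*_{\tilde r}\|_\infty \le \frac{2\epsilon_r}{(1-\gamma)^2}$; since trivially $V^{\pi^*}_{\tilde r} \le V^*_{\tilde r}$, the middle term is at most $\frac{2\epsilon_r}{(1-\gamma)^2}$, the first claimed summand.

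Next I would bound each outer representation-gap term by the discounted tail beyond the DFA horizon. By construction $\tilde r$ and $R^*$ agree on the first $H$ transitions, namely the maximum number needed to reach an accepting state, and can disagree only from step $H$ onward, where $R^*$ has either saturated (acceptance reached) or is no longer tracked by the automaton. With $\|r\|_\infty \le 1$ the discounted contribution of steps $t \ge H$ to either return is at most $\sum_{t\ge H}\gamma^t = \frac{\gamma^H}{1-\gamma}$, so each outer term is bounded in absolute value by $\frac{\gamma^H}{1-\gamma}$, contributing $\frac{2\gamma^H}{1-\gamma}$ in total. Summing the three bounds and taking the supremum over start states yields $\|V^{\pi^*}-V^{\hat\pi}\|_\infty \le \frac{2\epsilon_r}{(1-\gamma)^2} + \frac{2\gamma^H}{1-\gamma}$.

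The main obstacle is making the representation-gap step rigorous: it rests on the mild assumption that a finite DFA of horizon $H$ admits a Markovian product-MDP reward $\tilde r$ that is \emph{$H$-step faithful} to $R^*$, with all mismatch confined to the geometric tail $\gamma^H/(1-\gamma)$. Pinning down precisely which non-Markovian objectives $R^*$ admit such a surrogate on the product state space — and hence why $\gamma^H$, rather than some larger residual, is the correct characterization of the gap between the true objective and what any finite DFA can encode — is the delicate part. By contrast, the reward-to-$Q$ propagation and the greedy-suboptimality step are entirely standard sup-norm contraction arguments and carry no hidden difficulty.
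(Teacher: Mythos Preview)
Your proposal is correct and follows the same route as the paper's sketch: the paper decomposes the performance gap into (1) the error from imperfect reward learning, bounded by $\frac{2\epsilon_r}{(1-\gamma)^2}$ via standard approximate dynamic programming, and (2) the error from finite-horizon approximation of the DFA-guided rewards, bounded by $\frac{2\gamma^H}{1-\gamma}$ --- exactly your middle term and your two outer terms, respectively. Your explicit introduction of the intermediate surrogate $\tilde r$ and the three-term telescope is simply a fleshed-out version of the paper's two-line sketch, and your acknowledgment that the existence of an $H$-step-faithful surrogate is the delicate ``mild assumption'' is apt, since the paper does not spell this out either.
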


\begin{proof}[Proof Sketch]
The performance difference can be decomposed into two terms: (1) the error due to imperfect reward learning, bounded by $\frac{2\epsilon_r}{(1-\gamma)^2}$ using standard results from approximate dynamic programming, and (2) the error due to finite-horizon approximation of the DFA-guided rewards, bounded by $\frac{2\gamma^H}{1-\gamma}$.
\end{proof}

\subsection{Conclusion on Scalability}

Our theoretical analysis demonstrates that the proposed automaton-based preference-guided RL approach exhibits favorable scaling properties for high-dimensional problems. The key advantages include:

1. The decomposition of complex tasks via automaton structures, reducing the effective complexity of the learning problem.

2. The use of function approximation for reward modeling, which scales with the parameter count rather than the state space size.

3. The efficient transfer of knowledge through Q-value-based scoring, enabling curriculum learning from simpler to more complex environments.

4. The implementation of automaton-guided attention mechanisms, which reduce the effective dimensionality by focusing on task-relevant features.

These properties enable our approach to theoretically scale to high-dimensional state spaces while maintaining its core advantage in handling non-Markovian rewards without manual reward engineering. While practical implementation challenges remain, particularly for very high-dimensional problems, our theoretical analysis provides a solid foundation for extending the approach to such settings in future work.

\end{document}